\let\oldnl\nl
\newcommand{\nonl}{\renewcommand{\nl}{\let\nl\oldnl}}
\newcommand{\tikzmark}[1]{\tikz[overlay,remember picture] \node (#1) {};}
\newcommand*{\AddNote}[4]{%
    \begin{tikzpicture}[overlay, remember picture]
        \draw [decoration={brace,amplitude=0.5em},decorate,ultra thick,red]
            ($(#3)!(#1.north)!($(#3)-(0,1)$)$) --  
            ($(#3)!(#2.south)!($(#3)-(0,1)$)$)
                node [align=center, text width=2.5cm, pos=0.5, anchor=west] {#4};
    \end{tikzpicture}
}
\newcommand{\norm}[1]{\left\Vert#1\right\Vert}
\newcommand{\abs}[1]{\left\vert#1\right\vert}
\newcommand{\EE}[1]{{\mathbb E}\left[#1\right]}      
\newcommand{\TTh}{\widetilde\theta}
\newcommand{\one}[1]{\mathbf 1 \left\{#1\right\}}           
\DeclareMathOperator*{\argmax}{arg\,max}
\newtheorem{thm}{Theorem}[section]
\newtheorem{cor}[thm]{Corollary}
\newtheorem{lemma}[thm]{Lemma}
\newtheorem{ass}[thm]{Assumption}
\newtheorem{prop}[thm]{Property}
\title{Reinforcement Learning for Strategic Recommendations}
\author{Georgios Theocharous \\
              Adobe Research \\
              \texttt{theochar@adobe.com}\\
           \And
           Yash Chandak \\
           University of Massachusetts Amherst \\
            \texttt{ychandak@cs.umass.edu}\\
            \And
            Philip S. Thomas \\
           University of Massachusetts Amherst \\
            \texttt{pthomas@cs.umass.edu}\\
            \And 
            Frits de Nijs \\
            Monash University \\
            \texttt{frits.nijs@monash.edu}\\
}
\begin{document}
\maketitle

\begin{abstract}
Strategic recommendations (SR) refer to the problem where an intelligent agent observes the sequential behaviors and activities of users and decides when and how to interact with them to optimize some long-term objectives, both for the user and the business. These systems are in their infancy in the industry and in need of practical solutions to some  fundamental research challenges.  At Adobe research, we have been implementing such systems for various use-cases, including points of interest recommendations, tutorial recommendations, next step guidance in multi-media editing software, and ad recommendation for optimizing lifetime value.  There are many research challenges when building these systems, such as modeling the sequential behavior of users, deciding when to intervene and offer recommendations without annoying the user, evaluating policies offline with high confidence, safe deployment, non-stationarity, building systems from passive data that do not contain past recommendations, resource constraint optimization in multi-user systems, scaling to large and dynamic actions spaces,  and handling and incorporating human cognitive biases. In this paper we cover various use-cases and research challenges we solved to make these systems practical.
\keywords{Reinforcement Learning \and Recommendations}
\end{abstract}


\section{Introduction}
\label{sec:intro}

In strategic recommendation (SR) systems, the goal is to learn a strategy that sequentially selects  recommendations with the highest long-term acceptance by each visiting user of a retail website, a business, or a user  interactive  system  in general.  These systems are in their infancy in the industry and in need of practical solutions to some fundamental research challenges. At Adobe research, we have been implementing such SR systems for various use-cases, including points of interest recommendations, tutorial recommendations, next step guidance in multi-media editing software, and ad recommendation for optimizing lifetime value. 

Most recommendation systems today use supervised learning or contextual bandit algorithms. These algorithms assume that the visits are i.i.d.~and do not discriminate between visit and user, i.e.,~each visit is considered as a new user that has been sampled i.i.d.~from the population of the business's users. As a result, these algorithms are myopic and do not try to optimize the long-term effect of the recommendations on the users. Click through rate (CTR) is a suitable metric to evaluate the performance of such greedy algorithms. Despite their success, these methods are becoming insufficient as users incline to establish longer and longer-term relationship with their websites (by going back to them). This increase in {\em returning users} further violates the main assumption underlying supervised learning and bandit algorithms, i.e.,~there is no difference between a visit and a user. This is the main motivation for SR systems that we examine in this paper. 

Reinforcement learning (RL) algorithms that aim to optimize the long-term performance of the system (often formulated as the expected sum of rewards/costs) seem to be suitable candidates for SR systems. The nature of these algorithms allows them to take into account all the available knowledge about the user in order to select an offer or recommendation that maximizes the total number of times she will click or accept the recommendation over multiple visits, also known as the user's life-time value (LTV). Unlike myopic approaches, RL algorithms differentiate between a visit and a user, and consider all the visits of a user (in chronological order) as a system trajectory. Thus, they model the user, and not their visits, as i.i.d.~samples from the population of the users of the website. This means that although we may evaluate the performance of the RL algorithms using CTR, this is not the metric that they optimize, and thus, it would be more appropriate to evaluate them based on the expected total number of clicks per user (over the user's trajectory), a metric we call LTV. This long-term approach to SR systems allows us to make decisions that are better than the short-sighted decisions made by the greedy algorithms.  Such decisions might  propose an offer that is considered as a loss to the business in the short term, but increases the user loyalty and engagement in the long term. 

Using RL for LTV optimization is still in its infancy. Related work has experimented with toy examples and has appeared mostly in marketing venues \cite{Pfeifer2000,Jonker2004,giuliano@marketing07}. An approach directly related to ours first appeared in \cite{Pednault:2002:SCD:775047.775086}, where the authors used public data of an email charity campaign, batch RL algorithms, and heuristic simulators for evaluation, and showed that RL policies produce better results than myopic’s. Another one is \cite{Silver2013}, which proposed an on-line RL system that learns concurrently from multiple customers. The system was trained and tested on a simulator.  A recent approach uses RL to optimizes LTV for slate recommendations \cite{DBLP:journals/corr/abs-1905-12767}. It addresses the problem of how to decompose the LTV of a slate into a tractable function of its component item-wise LTVs. 
Unlike most of previous previous work, we address many more  challenges that are found when dealing real data. These challenges, which  hinder the widespread application of RL technology to SR systems include:
\begin{itemize}
\item{\bf High confidence off-policy evaluation} refers to the problem of evaluating the performance of an SR system with high confidence before costly A/B testing or deployment.
\item{\bf Safe deployment} refers to the problem of deploying a policy without creating disruption from the previous running policy.  For example, we should never deploy a policy that will have a worse LTV than the previous.
\item{\bf Non-stationarity} refers to the fact that the real world is non-stationary.  In RL and Markov decision processes there is usually the assumption that the transition dynamics and reward are stationary over time.  This is often violated in the marketing world where trends and seasonality are always at play.
\item{\bf Learning from passive data} refers to the fact that there is usually an abundance of sequential data or events that have been collected without a recommendation system in place.  For example,  websites record the sequence of products and pages a user views.  Usually in RL, data is in the form of sequences of states, actions and rewards.  The question is how can we leverage passive data that do not contain actions to create a recommendation systems that recommends the next page or product.
\item{\bf Recommendation acceptance factors} refers to the problem of deeper understanding of recommendation acceptance than simply predicting clicks.  For example, a person might have low propensity to listen due to various reasons of inattentive disposition.  A classic problem is the `recommendation fatigue', where people may quickly stop paying attention to recommendations such as ads and promotional emails, if they are presented too often.
\item{\bf Resource constraints in multi-user systems} refers to the problems of constraints created in multi-user recommendation systems.  For example, if multiple users in a theme park are offered the same strategy for touring the park, it could overcrowd various attractions. Or, if a store offers the same deal to all users, it might deplete a specific resource.
\item{\bf Large action spaces} refers to the problem of having too many recommendations.  Netflix for example employs thousands of movie recommendations. This is particularly challenging for SR systems that make a sequence of decisions, since the search space grows exponentially with the planning horizon (the number of decisions made in a sequence).
\item{\bf Dynamic actions} refers to the problem where the set of recommendations may be changing over time.  This is a classic problem in marketing where the offers made at some retail shop could very well be slowly changing over time.  Another example is movie recommendation, in businesses such as Netflix, where the catalogue of movies evolves over time. 
\end{itemize}

In this paper we address all of the above research challenges.  We summarize in chronological order our work in making SR systems practical for the real-world. In Section~\ref{sec:hope} we present a  method for evaluating SR systems off-line with high confidence.  In Section~\ref{sec:par} we present a practical reinforcement learning (RL) algorithm for implementing an SR system with an application to ad offers. In Section~\ref{sec:safety} we present an algorithm for safely deploying an SR system.  In Section~\ref{sec:nonstationary} we tackle the problem of non-stationarity.  Technologies in sections \ref{sec:hope}, \ref{sec:par}, \ref{sec:safety} and \ref{sec:nonstationary} are build chronologically, where high confidence off-policy evaluation is leveraged across all of them. 

In Section~\ref{sec:passive} we address the problem of bootstrapping an SR system from passive sequential data that do not contain past recommendations. In Section~\ref{sec:acceptance} we examine recommendation acceptance factors, such as the `propensity to listen' and `recommendation fatigue'.  In Section~\ref{sec:constraints} we describe a solution that can optimize for resource constraints in multi-user SR systems. Sections \ref{sec:passive},  \ref{sec:acceptance} and \ref{sec:constraints} are build chronologically, where the bootstrapping  from passive data  is used across.

In Section~\ref{sec:large-actions} we describe a solution to the large action space in SR systems. In Section~\ref{sec:dynaimic-actions} we describe a solution to the dynamic action problem, where the available actions can vary over time.  Sections \ref{sec:large-actions} and \ref{sec:dynaimic-actions} are build chronologically, where they use same action embedding technology.

Finally, in Section~\ref{sec:bias-aware} we argue that the next generation of recommendation systems needs to incorporate human cognitive biases.

\section{Preliminaries}
In this section, we present the general set of notations, which will be useful throughout the paper. In cases where problem specific notations are required, we introduce them in the respective section.

We model SR systems as \textit{Markov decision process} (MDPs) \citep{Sutton1998}.
    	An MDP is represented by a tuple, $\mathcal{M} = (\mathcal{S},\mathcal{A},\mathcal{P},\mathcal{R}, \gamma, d_0)$.
    	$\mathcal{S}$ is the set of all possible states, called the state set, and $\mathcal{A}$ is a finite  set of actions, called the action set. 
    	The random variables, $S_t \in \mathcal S$, $A_t \in \mathcal A$, and $R_t$ denote the state, action, and reward at time $t$.
    	%
        %
    	The first state comes from an initial distribution, $d_0$.
    	%
    	%
    	%
    	%
    	The reward discounting parameter is given by $\gamma \in [0,1)$. 
    	$\mathcal{P}$ is the state transition function.
    	We denote by $s_t$ the feature vector describing a user's $t^\text{th}$ visit with the system and  by $a_t$ the $t^\text{th}$ recommendation shown to the user, and refer to them as a \textit{state} and an \textit{action}.  The rewards are assumed to be non-negative. The reward  $r_t$ is $1$ if the user accepts the recommendation $a_t$ and $0$, otherwise. We assume that the users interact at most $T$ times. We write $\tau\coloneqq \{s_1, a_1, r_1, s_2, a_2, r_2, \hdots, s_{T}, a_{T}, r_{T}\}$ to denote the history of visits with one user, and we call $\tau$ a \textit{trajectory}. The \textit{return} of a trajectory is the discounted sum of rewards, $R(\tau)\coloneqq\sum_{t=1}^T\gamma^{t-1}r_t$, 

A \textit{policy} $\pi$ is used to determine the probability of showing each recommendation. Let $\pi(a \vert s)$ denote the probability of taking action $a$ in state $s$, regardless of the time step $t$. The goal is to find a policy that maximizes the expected total number of recommendation acceptances per user: $\rho(\pi) \coloneqq \mathbb E[R(\tau) \vert \pi].$ Our historical data is a set of trajectories, one per user. Formally, $\mathcal D$ is the historical data containing $n$ trajectories $\{\tau_i\}_{i=1}^n$, each labeled with the \textit{behavior policy} $\pi_i$ that produced it. 

\section{High Confidence Off-policy Evaluation}

\label{sec:hope}
One of the first challenges in building SR systems is evaluating their performance before costly A/B testing and deployment.  Unlike classical machine learning systems, an SR system is more complicated to evaluate because recommendations can affect how a user responds to all future recommendations.  In this section we summarize a  \textit{high confidence off-policy evaluation} (HCOPE) method \cite{DBLP:conf/aaai/ThomasTG15}, which can inform the business manager of the performance of the SR system with some guarantee, before the system is deployed.
We denote the policy to be evaluated as the  \textit{evaluation policy} $\pi_e$.

HCOPE is a family of methods that use the historical data $\mathcal D$ in order to compute a $1-\delta$-confidence lower bound on the expected performance of the evaluation policy $\pi_e$~\cite{DBLP:conf/aaai/ThomasTG15}.
In this section, we explain three different approaches to HCOPE.
All these approaches are based on importance sampling. The {\em importance sampling estimator}
\begin{equation}
\label{eq:ISE}
\hat \rho(\pi_e \vert \tau_i, \pi_i) \coloneqq \underbrace{R(\tau_i)}_{\text{return}} \underbrace{\prod_{t=1}^T \frac{\pi_e(a_t^{\tau_i} \vert s_t^{\tau_i})}{\pi_i(a_t^{\tau_i} \vert s_t^{\tau_i})}}_{\text{importance weight}},
\end{equation}
is an unbiased estimator of $\rho(\pi)$ if $\tau_i$ is generated using policy $\pi_i$ \cite{Precup2000}, the support of $\pi_e$ is a subset of the support or $\pi_i$, and where $a_t^{\tau_i}$ and $s_t^{\tau_i}$ denote the state and action in trajectory $\tau_i$ respectively. 
Although the importance sampling estimator is conceptually easier to understand, in most of our applications we use the {\em per-step importance sampling estimator}
\begin{equation}
\label{eq:PEISE}
\hat \rho(\pi_e \vert \tau_i, \pi_i) \coloneqq \sum_{t=1}^T\gamma^{t-1}r_t\left(\prod_{j=1}^t\frac{\pi_e(a_j^{\tau_i} \vert s_j^{\tau_i})}{\pi_i(a_j^{\tau_i} \vert s_j^{\tau_i})}\right),
\end{equation}
where the term in the parenthesis is the importance weight for the reward generated at time $t$. This estimator has a lower variance than~\eqref{eq:ISE}, and remains unbiased. 

For brevity, we describe the approaches to HCOPE in terms of a set of non-negative independent random variables, $\mathbf X=\{X_i\}_{i=1}^n$ (note that the importance weighted returns are non-negative because the rewards are never negative, since in our applications the reward is  $1$ when the user accepts a recommendation and $0$ otherwise). For our applications, we will use $X_i =  \hat \rho(\pi_e \vert \tau_i, \pi_i)$, where $\hat \rho(\pi_e \vert \tau_i, \pi_i)$ is computed either by~\eqref{eq:ISE} or~\eqref{eq:PEISE}. The three approaches that we will use are:

\noindent{\bf 1. Concentration Inequality:} Here we use the concentration inequality (CI) in~\cite{DBLP:conf/aaai/ThomasTG15} and call it the {\em CI approach}. We write $\rho_-^\text{CI}(\mathbf X, \delta)$ to denote the $1-\delta$ confidence lower-bound produced by their method. The benefit of this method is that it provides a true high-confidence lower-bound, i.e.,~it makes no false assumption or approximation, and so we refer to it as $\textit{safe}$.
However, as it makes no assumptions, bounds obtained using CI happen to be overly conservative, as shown in Figure \ref{fig:boundComparison}.

\noindent{\bf 2. Student's $t$-test:} One way to tighten the lower-bound produced by the CI approach is to introduce a false but reasonable assumption. Specifically, we leverage the central limit theorem, which says that $\hat X \coloneqq \frac{1}{n} \sum_{i=1}^n X_i$ is approximately normally distributed if $n$ is large. Under the assumption that $\hat X$ is normally distributed, we may apply the one-tailed Student's $t$-test to produce $\rho_-^\text{TT}(\mathbf X, \delta)$, a $1-\delta$ confidence lower-bound on $\mathbb{E}[\hat X]$, which in our application is a $1-\delta$ confidence lower-bound on $\rho(\pi_e)$.  Unlike the other two approaches, this approach, which we call {\em TT}, requires little space to be formally defined, and so we present its formal specification:
\begin{gather}
\hat X \coloneqq \frac{1}{n} \sum_{i=1}^n X_i,\quad\quad \hat \sigma \coloneqq \sqrt{\frac{1}{n-1}\sum_{i=1}^n \left (X_i - \hat X \right )^2},\\
\rho_-^\text{TT}(\mathbf X, \delta) \coloneqq \hat X - \frac{\hat \sigma}{\sqrt{n}}t_{1-\delta, n-1},
\end{gather}
where  $t_{1-\delta, \nu}$ denotes the inverse of the cumulative distribution function of the Student's $t$ distribution with $\nu$ degrees of freedom, evaluated at probability $1-\delta$ (i.e.,~function $\text{tinv}(1-\delta,\nu)$ in {\sc Matlab}).

Because $\hat \rho_-^\text{TT}$ is based on a false (albeit reasonable) assumption, we refer to it as \textit{semi-safe}. Although the {\em TT approach} produces tighter lower-bounds than the CI's, it still tends to be overly conservative for our application, as shown in Figure \ref{fig:boundComparison}.
More discussion can be found in the work by \cite{DBLP:conf/aaai/ThomasTG15}.

\noindent{\bf 3. Bias Corrected and Accelerated Bootstrap:} One way to correct for the overly-conservative nature of TT is to use bootstrapping to estimate the true distribution of $\hat X$, and to then assume that this estimate is the true distribution of $\hat X$. The most popular such approach is \textit{Bias Corrected and accelerated} (BCa) bootstrap \cite{Efron1987}. We write $\rho_-^\text{BCa}(\mathbf X, \delta)$ to denote the lower-bound produced by BCa, whose pseudocode can be found in ~\cite{DBLP:conf/icml/ThomasTG15}.
While the bounds produced by BCa are reliable, like t-test it may have error rates larger than $\delta$ and are thus \textit{semi-safe}. 
An illustrative example is provided in Figure \ref{fig:boundComparison}.

\begin{figure}
  \centering
  \includegraphics[width=0.5\columnwidth]{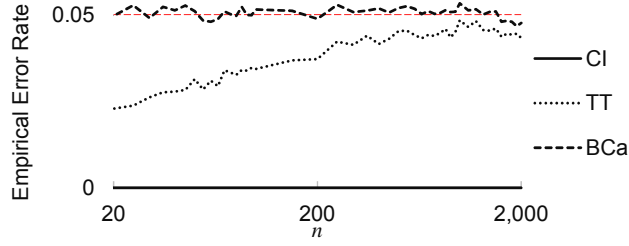}
	\caption{Empirical error rates when estimating a $95\%$ confidence lower-bound on the mean of a gamma distribution (shape parameter $k=2$ and scale parameter $\theta=50$) using $\rho_-^\dagger$, where the legend specifies the value of $\dagger$. This gamma distribution has a heavy upper-tail similar to that of importance weighted returns. The logarithmically scaled horizontal axis is the number of samples used to compute the lower bound (from $20$ to $2000$) and the vertical axis is the mean empirical error rate over $100,\!000$ trials. Note that CI is overly conservative, with zero error in all the trials (it is on the $x$-axis). The $t$-test is initially conservative, but approaches the allowed $5\%$ error rate as the number of samples increases. BCa remains around the correct $5\%$ error rate regardless of the number of samples.
}
	\label{fig:boundComparison}
\end{figure}

For SR systems, where ensuring quality of a system before deployment is critical, these three approaches provide several viable approaches to obtaining performance guarantees using only historical data.
%


\section{Personalized Ad Recommendation Systems for Life-Time Value Optimization with Guarantees}
\label{sec:par}
The next question is how to compute a good SR policy.  In this section we demonstrate how to compute an SR policy for personalized ad recommendation (PAR) systems using reinforcement learning (RL).
RL algorithms take into account the long-term effect of actions, and thus, are more suitable than myopic techniques, such as contextual bandits, for modern PAR systems in which the number of returning visitors is rapidly growing. However, while myopic techniques have been well-studied in PAR systems, the RL approach is still in its infancy, mainly due to two fundamental challenges: how to compute a good RL strategy and how to evaluate a solution using historical data to ensure its `safety' before deployment. In this section, we use  the family  of off-policy evaluation techniques with statistical guarantees presented in Section~\ref{sec:hope} to tackle both of these challenges. We apply these methods to a real PAR problem, both for evaluating the final performance and for optimizing the parameters of the RL algorithm. Our results show that an RL algorithm equipped with these off-policy evaluation techniques outperforms the myopic approaches. Our results give fundamental insights on the difference between the click through rate (CTR) and life-time value (LTV) metrics for evaluating the performance of a PAR algorithm \cite{TheocharousTG15}.

\subsection{CTR versus LTV}

Any personalized ad recommendation (PAR) policy could be evaluated for its greedy/myopic or long-term performance. For greedy performance, click through rate (CTR) is a reasonable metric, while life-time value (LTV) seems to be the right choice for long-term performance. These two metrics are formally defined as 
\begin{equation}
\text{CTR }=\frac{\text{Total $\#$ of Clicks}}{\text{Total $\#$ of {\bf Visits}}} \times 100,
\label{eq:ctr}
\end{equation}
\begin{equation}
\text{LTV}= \frac{\text{Total $\#$ of Clicks}}{\text{Total $\#$ of {\bf Visitors}}} \times 100.
\label{eq:ltv}
\end{equation}

CTR is a well-established metric in digital advertising and can be estimated from historical data (off-policy) in unbiased (inverse propensity scoring;~\cite{lihong@www10}) and biased (see e.g.,~\cite{StrehlLLK10}) ways.  The reason that we use LTV is that CTR is not a good metric for evaluating long-term performance and could lead to misleading conclusions. Imagine a greedy advertising strategy at a website that directly displays an ad related to the final product that a user could buy. For example, it could be the BMW website and an ad that offers a discount to the user if she buys a car. users who are presented such an offer would either take it right away or move away from the website. Now imagine another marketing strategy that aims to transition the user down a sales funnel before presenting her the discount. For example, at the BMW website one could be first presented with an attractive finance offer and a great service department deal before the final discount being presented. Such a long-term strategy would incur more visits with the user and would eventually  produce more clicks per user and more purchases. The crucial insight here is that the policy can change the number of times that a user will be shown an advertisement---the length of a trajectory depends on the actions that are chosen. A visualization of this concept is presented in Figure~\ref{fig:ltv-ctr}.

\begin{figure}[h]
\centering
\includegraphics[height=2.1in]{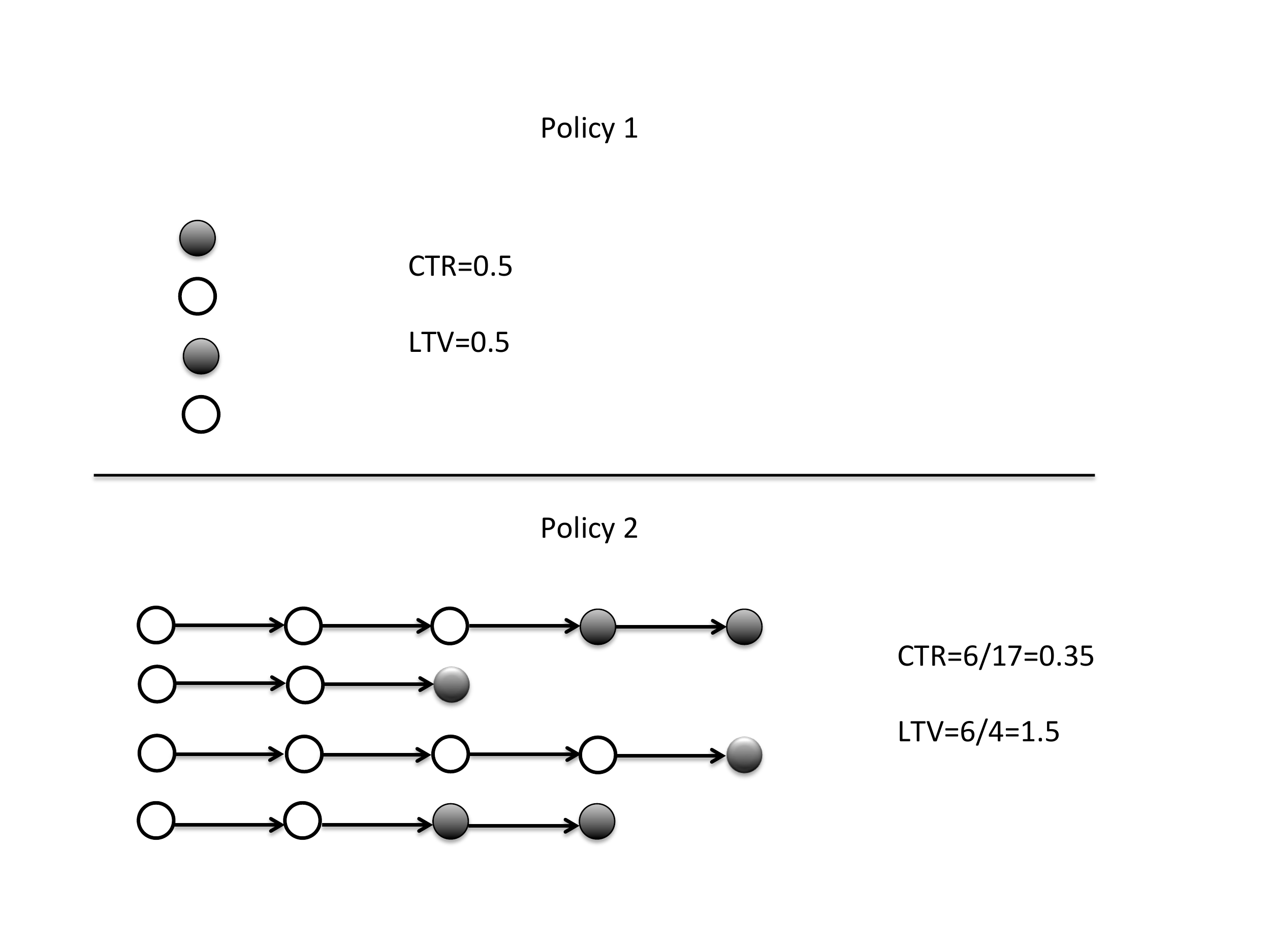}
\caption{The circles indicate user visits. The black circles indicate clicks. Policy~1~is greedy and users do not return. Policy~2~optimizes for the long-run, users come back multiple times, and click towards the end. Even though Policy~2~has a lower CTR than Policy~1, it results in more revenue, as captured by the higher LTV. Hence, LTV is potentially a better metric than CTR for evaluating ad recommendation policies.}
\label{fig:ltv-ctr}
\end{figure}

\subsection{Ad Recommendation Algorithms}

For greedy optimization, we used a random forest (RF) algorithm~\cite{Statistics01randomforests} to learn a mapping from features to actions. RF is a state-of-the-art ensemble learning method for regression and classification, which is relatively robust to overfitting and is often used in industry for big data problems. The system is trained using a RF for each of the offers/actions to predict the immediate reward. During execution, we use an $\epsilon$-greedy strategy, where we choose the offer whose RF has the highest predicted value with probability $1-\epsilon$, and the rest of the offers, each with probability $\epsilon/(|A|-1)$ 




For LTV optimization, we used the Fitted Q Iteration (FQI)~\cite{Ernst05tree-basedbatch} algorithm, with RF function approximator, which allows us to handle high-dimensional continuous and discrete variables. When an arbitrary function approximator is used in the FQI algorithm, it does not converge monotonically, but rather oscillates during training iterations. To alleviate the oscillation problem of FQI and for better feature selection, we used our high confidence off-policy evaluation (HCOPE) framework within the training loop.  The loop keeps track of the best FQI result according to a validation data set (see Algorithm~\ref{alg:ltv}). 

For both algorithms we start with three data sets an $X_\text{train}$, $X_\text{val}$ and  $X_\text{test}$. Each one is made of complete user trajectories.   A user only appears in one of those files.  The $X_\text{val}$ and $X_\text{test}$ contain users that have been served by the random policy.   The greedy approach proceeds by first doing feature selection on the $X_\text{train}$, training a random forest, turning the  policy into $\epsilon$-greedy on the  $X_\text{test}$ and then evaluating that policy using the off-policy evaluation techniques.  The LTV approach  starts from the random forest model of the greedy approach.  It then computes labels as shown is step 6 of the LTV optimization algorithm \ref{alg:ltv}.  It does feature selection, trains a random forest  model, and then turns the policy into $\epsilon$-greedy on the $X_\text{val}$ data set.  The policy is tested using the importance weighted returns according to Equation \ref{eq:PEISE}.  LTV optimization loops over a fixed number of iterations and keeps track of the best performing policy, which is finally evaluated on the $X_\text{test}$.   The final outputs are `risk plots', which are graphs that show the lower-bound of the expected sum of discounted reward of the policy  for different confidence values.

\begin{algorithm}
\begin{algorithmic}[1]
\STATE $\pi_b = \text{randomPolicy}$
\STATE $Q =$ {\sc rf.Greedy}$(\mathbf X_\text{train}, \mathbf X_\text{test}, \delta) $ \COMMENT{start with greedy value function}
\FOR{$i=1$  {\bf to} $K$ }
 	\STATE $r=\mathbf X_\text{train}(\text{reward}) $ \COMMENT{use recurrent visits}
 	\STATE $x=\mathbf X_\text{train}(\text{features})$ 
        \STATE $y= r_{t} + \gamma \max_{a \in A }Q_a(x_{t+1})$ 
	\STATE $\bar{x}= \text{informationGain}(x,y)$ \COMMENT{feature selection}
	\STATE $Q_a = \text{randomForest}(\bar{x}, y)$ \COMMENT{for each action}
	\STATE $\pi_e = \text{epsilonGreedy}( Q, \mathbf X_\text{val} )$ 
	\STATE $W = \hat \rho(\pi_e |\mathbf X_\text{val}, \pi_b)$ \COMMENT{importance weighted returns}	
	\STATE $\text{currBound} = \rho_-^\dagger(W, \delta)$
	\IF{$\text{currBound}  > \text{prevBound}$}	
		\STATE $\text{prevBound}= \text{currBound}$
		\STATE $Q_\text{best} = Q$
	\ENDIF
\ENDFOR
	\STATE $\pi_e = \text{epsilonGreedy}( Q_\text{best}, \mathbf X_\text{test} )$ 
	\STATE $W = \hat \rho(\pi_e |\mathbf X_\text{test}, \pi_b)$
\STATE \Return $ \rho_-^\dagger(W, \delta)$ \COMMENT{lower bound}
\end{algorithmic}
\caption{{\sc LtvOptimization}$(\mathbf X_\text{train}, \mathbf X_\text{val}, \mathbf X_\text{test}, \delta, K, \gamma, \epsilon)$ : compute a LTV  strategy using $\mathbf X_\text{train}$,  and predict the $1-\delta$  lower bound on the test data $\mathbf X_\text{test}$.}
\label{alg:ltv}
\end{algorithm}

\subsection{Experiments}
For our experiments we used 2 data sets from the banking industry. On the bank website when users visit, they are shown one of a finite number of offers.  The reward is 1 when a user clicks on the offer and 0, otherwise. For data set 1, we collected data from a particular campaign of a bank for a month that had 7 offers and approximately $200,\!000$ visits.  About $20,\!000$ of the visits were produced by a random strategy.  For data set 2 we collected data from  a different  bank for a campaign that had 12 offers and  $4,\!000,\!000$ visits, out of which  $250,\!000$ were produced by a random strategy.  When a user visits the bank website for the first time, she is assigned either to a random strategy or a targeting strategy for the rest of the campaign life-time.  We splitted the random strategy  data into a test set and a validation set.  We used the targeting data for training to optimize the greedy and LTV strategies.  We used aggressive feature selection for the greedy strategy and selected $20\%$ of the features.  For LTV, the feature selection had to be even more aggressive due to the fact that the number of recurring visits is approximately $5\%$.  We used information gain for the feature selection module~\cite{Cheng:2012:FSE:2399970.2399981}. With our algorithms we produce performance results both for the CTR and LTV metrics. To produce results for CTR we assumed that each visit is a unique visitor.  We performed various experiments to understand the different elements and parameters of our algorithms.   For all experiments we set $\gamma=0.9$ and $\epsilon=0.1$.

\paragraph{Experiment 1: How do LTV and CTR compare?} For this experiment we show that every strategy has both a CTR and LTV metric as shown in Figure~\ref{fig:ctr-ltv} (Left).  In general the LTV metric gives higher numbers than the CTR metric. Estimating the LTV metric however gets harder as the trajectories get longer and as the mismatch with the behavior policy gets larger.  In this experiment the policy we evaluated was the random policy which is the same as the behavior policy, and in effect we eliminated the importance weighted factor.

\begin{figure}[ht!]
\centering
\includegraphics[width=0.45\textwidth]{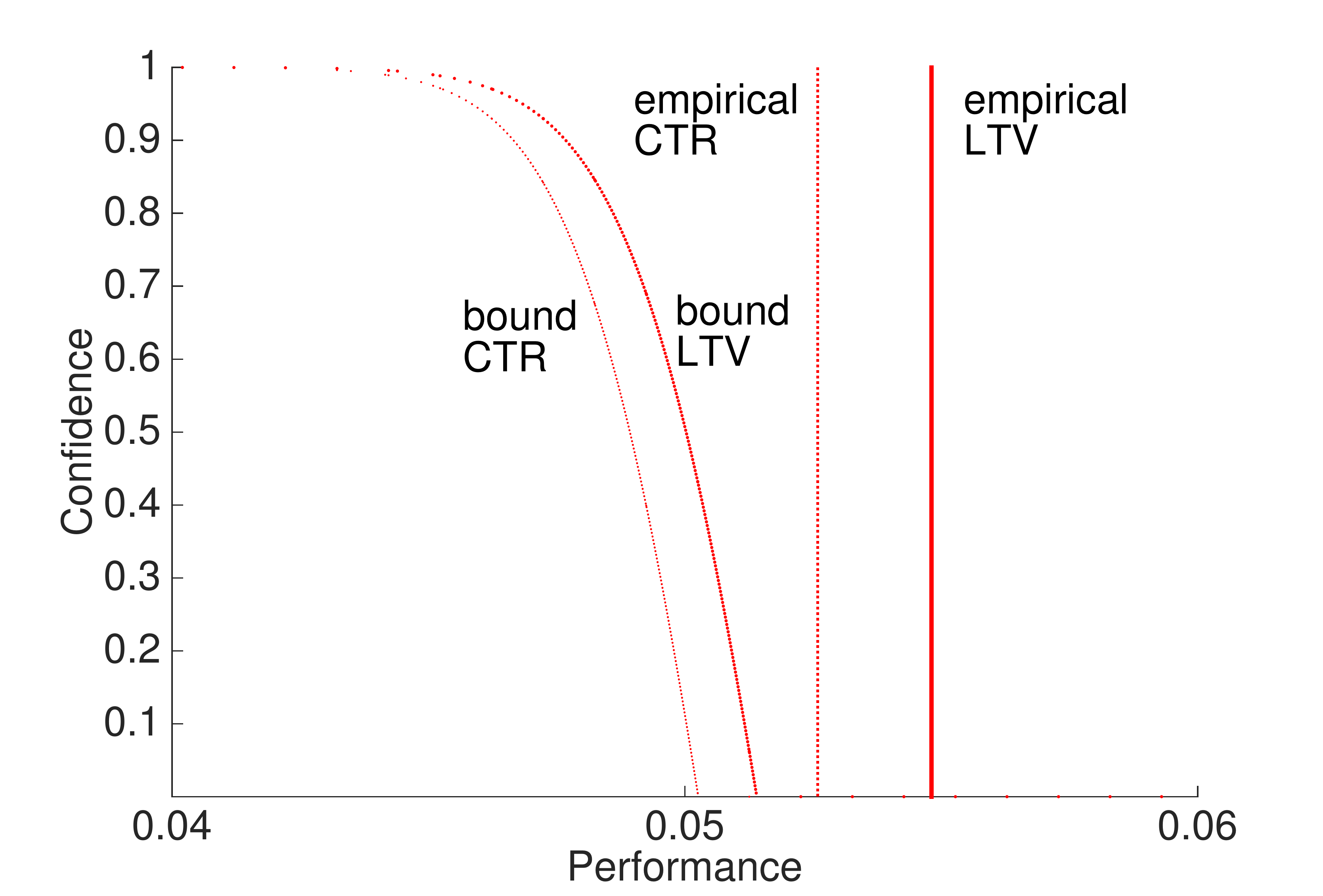}\includegraphics[width=0.45\textwidth]{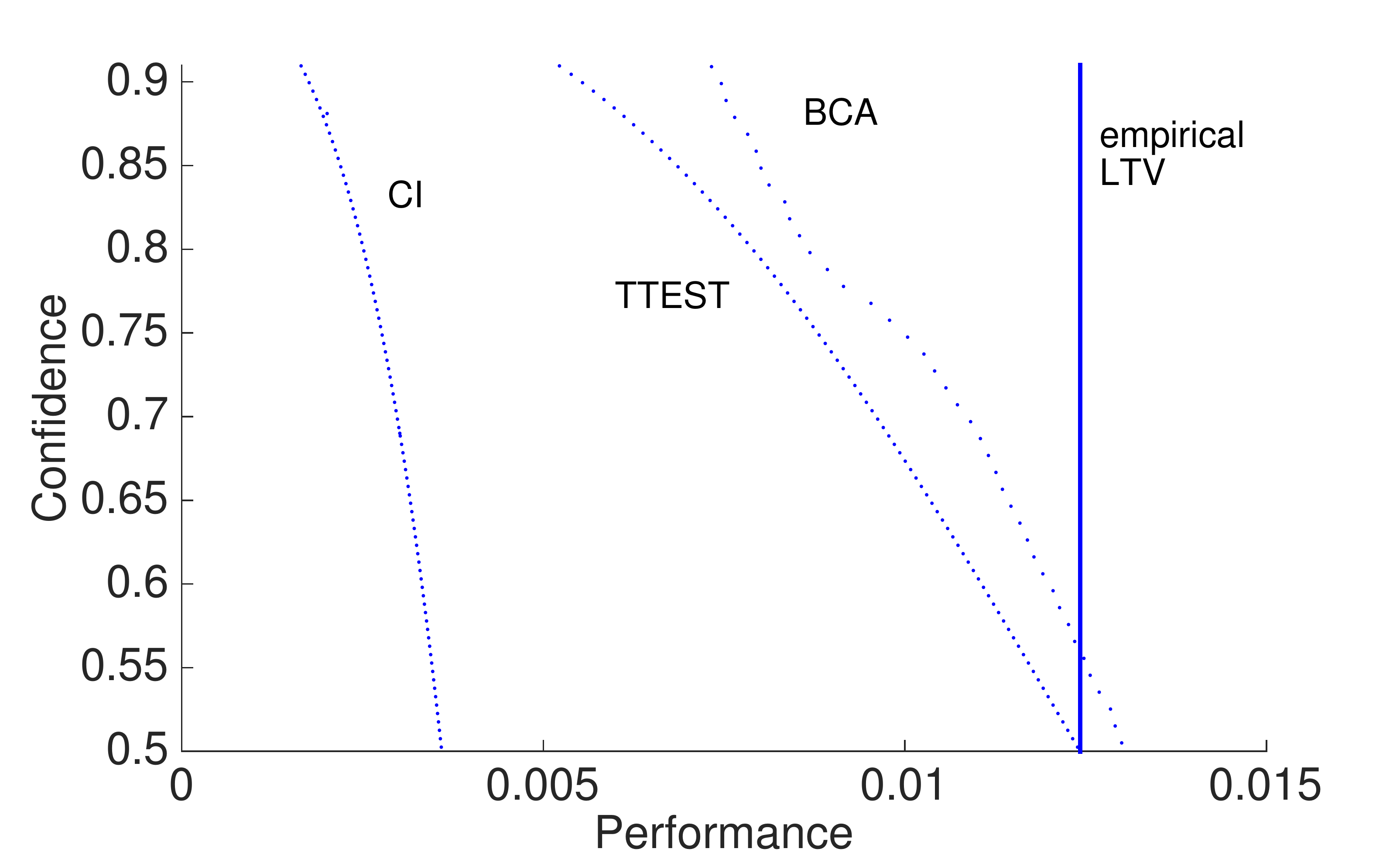}
\caption{ (Left) This figure shows the bounds and  empirical importance weighted returns for the random strategy.  It shows that every strategy has both a CTR and LTV metric.  This was done for data set 1.
(Right) This figure shows comparison between the 3 different bounds.  It was done for data set 2.}
\label{fig:ctr-ltv}
\end{figure}

\paragraph{Experiment 2: How do the three bounds differ?}  In this experiment we compared the 3 different lower-bound estimation methods, as shown in Figure~\ref{fig:ctr-ltv} (Right).  We observed that the bound for the $t$-test  is tighter than that for CI, but it makes the  false assumption that importance weighted returns are normally distributed.  We observed that the bound  for BCa  has higher confidence than the $t$-test approach for the same performance.  The BCa bound does not make  a Gaussian assumption, but still makes the false assumption that the distribution of future empirical returns will be the same as what has been observed in the past.


\paragraph{Experiment 3: When should each of the two optimization algorithms be used?}
In this experiment  we observed that the {\sc GreedyOptimization} algorithm performs the best under the CTR metric and the {\sc LtvOptimization} algorithm performs the best under the LTV metric as expected, see Figure \ref{fig:ctr-tt}.  The same claim holds for data set 2.

\begin{figure}[ht!]
\centering
\includegraphics[width=0.45\textwidth]{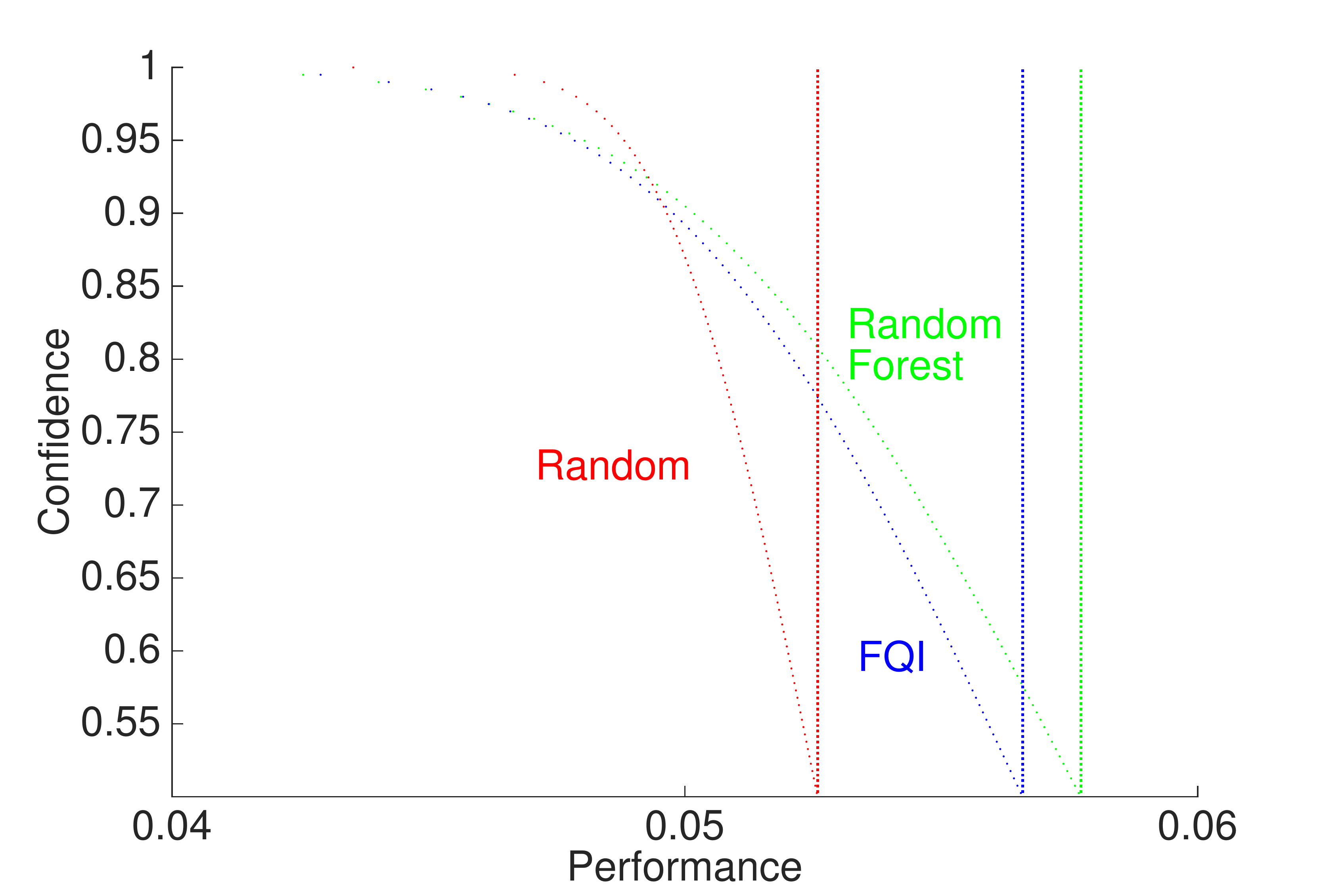}
\includegraphics[width=0.45\textwidth]{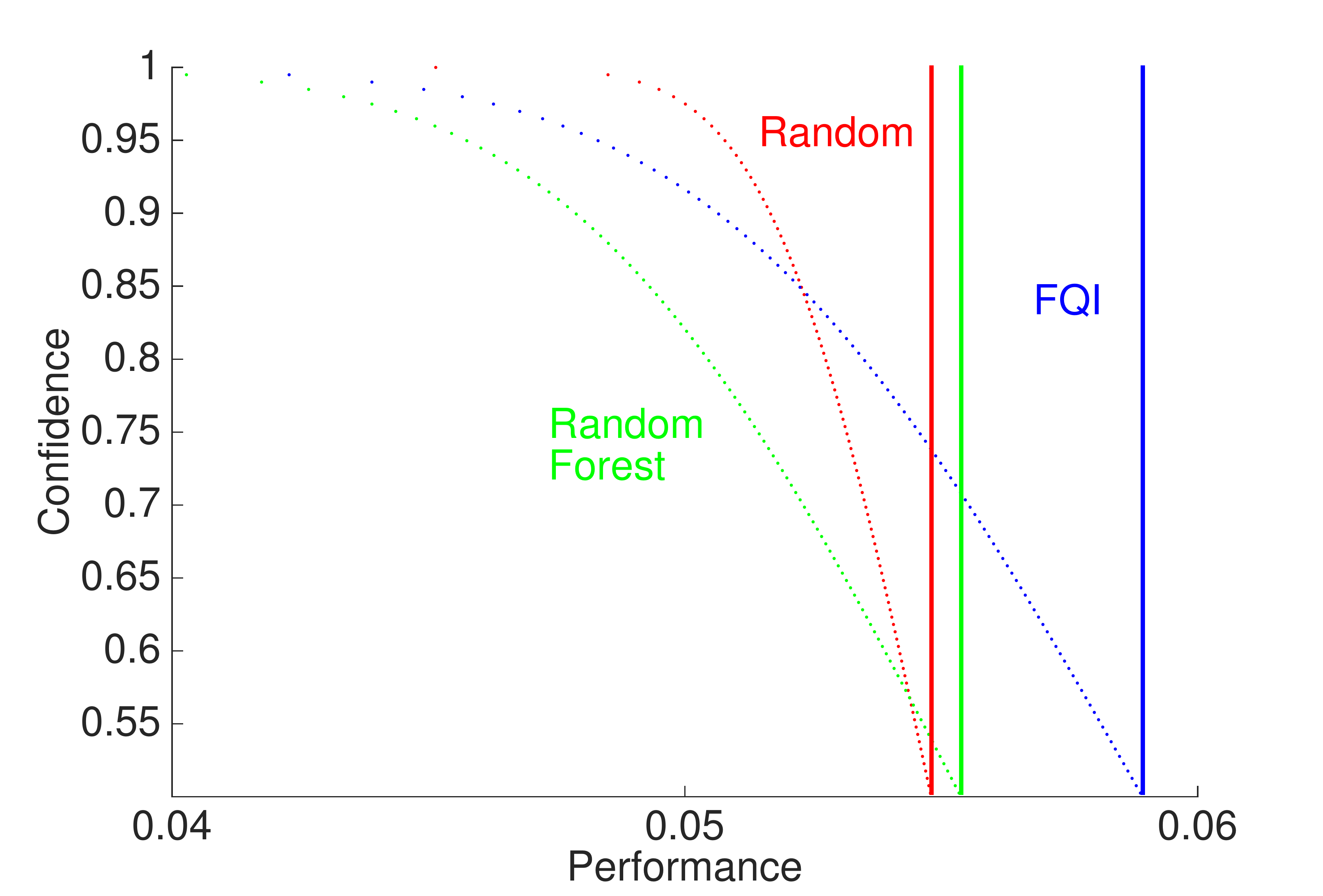}
\caption{(Left) This figure compares the CTR bounds of the Greedy versus the LTV optimization It was done for data set 1, but similar graphs exist for data set 2.
(Right) This figure compare the LTV bounds of the Greedy versus the LTV optimization It was done for data set 1, but similar graphs exist for data set 2.}
\label{fig:ctr-tt}
\end{figure}


\paragraph{Experiment 4: What is the effect of $\epsilon$?}  
One of the limitations of out algorithm is that it requires stochastic policies.  The closer the new policy is to the behavior policy the easier to estimate the performance. Therefore, we approximate our policies with $\epsilon$-greedy and use the random data for the behavior policy.  The larger the $\epsilon$, the easier is to get a more accurate performance of a new policy, but at the same time we would be estimating the performance of a sub-optimal policy, which  has moved closer to the random policy, see Figure~\ref{fig:epsilon}.  Therefore, when using the bounds to compare two policies, such as Greedy vs.~LTV, one should use the same $\epsilon$.

\begin{figure}[h]
\centering
\includegraphics[height=1.9in]{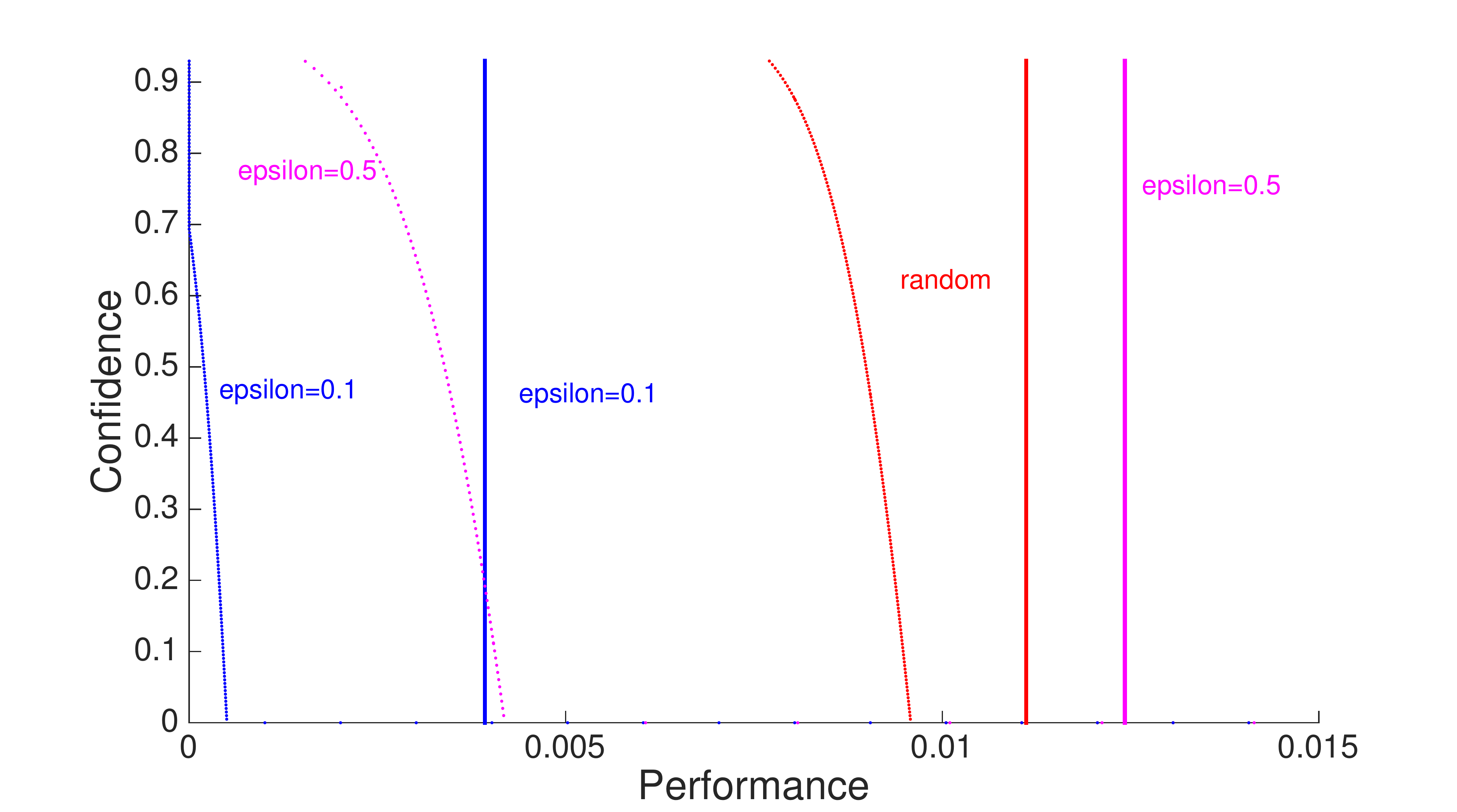}
\caption{The figure shows that as epsilon gets larger the policy moves towards the random policy.  Random polices are easy to estimate their performance since they match the behavior policy exactly. Thus epsilon should be kept same when comparing two policies.  This experiment was done on data set 2 and shows the bounds and empirical mean importance weighted returns (vertical line) for the LTV policy.  The bound used here was the CI.}
\label{fig:epsilon}
\end{figure}


\section{Safe Deployment}
\label{sec:safety}
In the previous sections we described how to compute an SR in combination with high confidence off-policy evaluation for deployment with some guarantees.  In the real world the deployment may need to happen incrementally, where at fixed intervals of time we would like to update the current SR policy in a safe manner.  In this section we present a batch reinforcement learning (RL) algorithm that provides probabilistic guarantees about the quality of each policy that it proposes, and which has no hyper-parameter that requires expert tuning. Specifically, the user may select any performance lower-bound, $\rho_-$, and confidence level, $\delta$, and our algorithm will ensure that the probability that it returns a policy with performance below $\rho_-$ is at most $\delta$. We then propose an incremental algorithm that executes our policy improvement algorithm repeatedly to generate multiple policy improvements. We show the viability of our approach with a digital marketing application that uses real world data
\cite{DBLP:conf/icml/ThomasTG15}.

\subsection{Problem Formulation}
\label{problem}

Given a (user specified) lower-bound, $\rho_-$, on the performance and a confidence level, $\delta$, we call an RL algorithm \textit{safe} if it ensures that the probability that a policy with performance less than $\rho_-$ will be proposed is at most $\delta$. 
The only assumption that a safe algorithm may make is that the underlying environment is a POMDP. Moreover, we require that the safety guarantee must hold regardless of how any hyperparameters are tuned.

We call an RL algorithm \textit{semi-safe} if it would be safe, except that it makes a false but reasonable assumption. Semi-safe algorithms are of particular interest when the assumption that the environment is a POMDP is significantly stronger than any (other) false assumption made by the algorithm, e.g.,~that the sample mean of the importance weighted returns is normally distributed when using many trajectories.


We call a policy, $\pi$, (as opposed to an algorithm) safe if we can ensure that $\rho(\pi) \geq \rho_-$ with confidence $1-\delta$. Note that ``a policy is safe" is a statement about our belief concerning that policy given the observed data, and not a statement about the policy itself. 

If there are many policies that might be deemed safe, then the policy improvement mechanism should return the one that is expected to perform the best, i.e.,
\begin{equation}
\pi' \in \arg \max_{\text{safe }\pi} g(\pi \vert \mathcal D),
\label{eq:g}
\end{equation}
where $g(\pi \vert \mathcal D)\in \mathbb R$ is a prediction of $\rho(\pi)$ computed from $\mathcal D$. We use a lower-variance, but biased, alternative to ordinary importance sampling, called \textit{weighted} importance sampling~\cite{Precup2000}, for $g$, i.e.,~
$$
g(\pi \vert \mathcal D) \coloneqq  \frac{\sum_{i=1}^{\lvert \mathcal D \rvert} \hat \rho(\pi \vert \tau_i^\mathcal D, \pi_i^\mathcal D) }{ \sum_{i=1}^{\lvert \mathcal D \rvert} \hat w(\tau_i^\mathcal D, \pi, \pi_i^\mathcal D)}.
$$
Note that even though Eq.~\eqref{eq:g} uses $g$, our safety guarantee is uncompromising---it uses the true (unknown and often unknowable) expected return, $\rho(\pi)$.

In the following sections, we present batch and incremental policy improvement algorithms that are safe when they use the CI approach to HCOPE and semi-safe when they use the $t$-test or BCa approaches. Our algorithms have no hyperparameters that require expert tuning.


In the following, we use the $\dagger$ symbol as a placeholder for either CI, TT, or BCa. We also overload the symbol $\rho_-^\dagger$ so that it can take as input a policy, $\pi$, and a set of trajectories, $\mathcal D$, in place of $\mathbf X$, as follows:  
\begin{equation}
\rho_-^\dagger (\pi, \mathcal D, \delta, m) \coloneqq \rho_-^\dagger \Big ( \underbrace{\bigcup_{i=1}^{\lvert \mathcal D \rvert} \left \{ \hat \rho \left (\pi \vert \tau_i^{\mathcal D}, \pi_i^{\mathcal D} \right ) \right \}}_{\mathbf X},\delta, m \Big ).
\end{equation}
For example, $\rho_-^\text{BCa}(\pi, \mathcal D, \delta, m)$ is a prediction made using the data set $\mathcal D$ of what the $1-\delta$ confidence lower-bound on $\rho(\pi)$ would be, if computed from $m$ trajectories by BCa.


\subsection{Safe Policy Improvement}
\label{policyImprovement}

Our proposed batch (semi-)safe policy improvement algorithm, {\sc PolicyImprovement}$^\dagger_\ddagger$, takes as input a set of trajectories labeled with the policies that generated them, $\mathcal D$, a performance lower bound, $\rho_-$, and a confidence level, $\delta$, and outputs either a new policy or {\sc No Solution Found (NSF)}. The meaning of the $\ddagger$ subscript will be described later.

When we use $\mathcal D$ to both search the space of policies and perform safety tests, we must be careful to avoid the \textit{multiple comparisons problem} \cite{Benjamin1995}. 
To make this important problem clear, consider what would happen if our search of policy space included only two policies, and used all of $\mathcal D$ to test both of them for safety. If at least one is deemed safe, then we return it. HCOPE methods can incorrectly label a policy as safe with probability at most $\delta$. However, the system we have described will make an error whenever either policy is incorrectly labeled as safe, which means its error rate can be as large as $2\delta$. In practice the search of policy space should include many more than just two policies, which would further increase the error rate.
%

%
We avoid the multiple comparisons problem by setting aside data that is only used for a \textit{single} safety test that determines whether or not a policy will be returned. Specifically, we first partition the data into a small training set, $\mathcal D_\text{train}$, and a larger test set, $\mathcal D_\text{test}$. The training set is used to search for which single policy, called the \textit{candidate policy}, $\pi_c$, should be tested for safety using the test set. This policy improvement method, {\sc PolicyImprovement}$^\dagger_\ddagger$, is reported in Algorithm~\ref{alg:PolicyImprovement}. To simplify later pseudocode, {\sc PolicyImprovement}$^\dagger_\ddagger$ assumes that the trajectories have already been partitioned into $\mathcal D_\text{train}$ and $\mathcal D_\text{test}$. In practice, we place $1/5$ of the trajectories in the training set and the remainder in the test set. Also, note that {\sc PolicyImprovement}$^\dagger_\ddagger$ can use the safe concentration inequality approach, $\dagger=$ CI, or the semi-safe $t$-test or BCa approaches, $\dagger \in \{$ TT, BCa$\}$.

{\sc PolicyImprovement}$^\dagger_\ddagger$ is presented in a top-down manner in Algorithm \ref{alg:PolicyImprovement}, and makes use of the {\sc GetCandidatePolicy}$^\dagger_\ddagger(\mathcal D, \delta, \rho_-, m)$ method, which searches for a candidate policy. The input $m$ specifies the number of trajectories that will be used during the subsequent safety test. Although {\sc GetCandidatePolicy}$^\dagger_\ddagger$ could be any batch RL algorithm, like LSPI or FQI \citep{Lagoudakis2001,Ernst05tree-basedbatch}, we propose an approach that leverages our knowledge that the candidate policy must pass a safety test. We will present two versions of {\sc GetCandidatePolicy}$^\dagger_\ddagger$, which we differentiate between using the subscript $\ddagger$, which may stand for None or $k$-fold. 

Before presenting these two methods, we define an objective function $f^\dagger$ as:
$$
	f^\dagger(\pi, \mathcal D,\delta,\rho_-,m)\coloneqq
			\begin{cases}
g(\pi \vert \mathcal D)&\hspace{-1.25cm}\mbox{if } \;\rho_-^\dagger \left (\pi, \mathcal D, \delta, m \right ) \geq \rho_-, \\
				\rho_-^\dagger \left (\pi, \mathcal D, \delta, m \right ) &\mbox{\hspace{0.8cm}otherwise.} 		
			\end{cases}
$$
Intuitively, $f^\dagger$ returns the predicted performance of $\pi$ if the predicted lower-bound on $\rho(\pi)$ is at least $\rho_-$, and the predicted lower-bound on $\rho(\pi)$, otherwise.

Consider {\sc GetCandidatePolicy}$^\dagger_\text{None}$, which is presented in Algorithm \ref{alg:CandidateNone}. This method uses all of the available training data to search for the policy that is predicted to perform the best, subject to it also being predicted to pass the safety test. That is, if no policy is found that is predicted to pass the safety test, it returns the policy, $\pi$, that it predicts will have the highest lower bound on $\rho(\pi)$. If policies are found that are predicted to pass the safety test, it returns one that is predicted to perform the best (according to $g$). 

The benefits of this approach are its simplicity and that it works well when there is an abundance of data. However, when there are few trajectories in $\mathcal D$ (e.g.,~cold start), this approach has a tendency to overfit---it finds a policy that it predicts will perform exceptionally well and which will easily pass the safety test, but actually fails the subsequent safety test in {\sc PolicyImprovement}$^\dagger_\text{None}$. We call this method $\ddagger$ = None because it does not use any methods to avoid overfitting.

\begin{algorithm}
\small
\begin{algorithmic}[1]
	\STATE $\pi_c \gets $ {\sc GetCandidatePolicy}$^\dagger_\ddagger(\mathcal D_\text{train}, \delta, \rho_-, \lvert \mathcal D_\text{test} \rvert )$
	\STATE {\bf if }$\rho_-^\dagger \left (\pi_c, \mathcal D_\text{test}, \delta, \lvert \mathcal D_\text{test} \rvert \right ) \geq \rho_-$ {\bf then return }$\pi_c$
	\STATE {\bf return }{\sc NSF}
\end{algorithmic}
\caption{\small {\sc PolicyImprovement}
$^\dagger_\ddagger(\mathcal D_\text{train}, \mathcal D_\text{test}, \delta, \rho_-)$
Either returns {\sc No Solution Found (NSF)} or a \mbox{\text{(semi-)}}safe policy. Here $\dagger$ can denote either CI, TT, or BCa.}
\label{alg:PolicyImprovement}
\end{algorithm}

\begin{algorithm}
\small
\begin{algorithmic}[1]
	\STATE {\bf return }$
\arg \max_\pi f^\dagger(\pi, \mathcal D, \delta, \rho_-, m)
$
\end{algorithmic}
\caption{\small {\sc GetCandidatePolicy}$_\text{None}^\dagger(\mathcal D, \delta, \rho_-, m)$Searches for the candidate policy, but does nothing to mitigate overfitting.}
\label{alg:CandidateNone}
\end{algorithm}

In machine learning, it is common to introduce a regularization term, $\alpha \lVert w \rVert$, into the objective function in order to prevent overfitting. Here $w$ is the model's weight vector and $\lVert \cdot \rVert$ is some measure of the complexity of the model (often $L_1$ or squared $L_2$-norm), and $\alpha$ is a parameter that is tuned using a model selection method like cross-validation. This term penalizes solutions that are too complex, since they are likely to be overfitting the training data.

Here we can use the same intuition, where we control for the complexity of the solution policy using a regularization parameter, $\alpha$, that is optimized using $k$-fold cross-validation. Just as the squared $L_2$-norm relates the complexity of a weight vector to its squared distance from the zero vector, we define the complexity of a policy to be some notion of its distance from the initial policy, $\pi_0$. In order to allow for an intuitive meaning of $\alpha$, rather than adding a regularization term to our objective function, $f^\dagger(\cdot, \mathcal D_\text{train}, \delta, \rho_-, \vert \mathcal D_\text{test} \vert )$, we directly constrain the set of policies that we search over to have limited complexity. 

We achieve this by only searching the space of mixed policies $\mu_{\alpha, \pi_0, \pi}$, where $\mu_{\alpha, \pi_0, \pi}(a|s) \coloneqq \alpha \pi(a|s) + (1 - \alpha)\pi_0$. Here, $\alpha$ is the fixed regularization parameter, $\pi_0(a|s)$ is the fixed initial policy, and we search the space of all possible $\pi$.
Consider, for example what happens to the probability of action $a$ in state $s$ when $\alpha=0.5$. If $\pi_0(a|s)=0.4$, then for any $\pi$, we have that $\mu_{\alpha, \pi_0, \pi}(a\vert s) \in [0.2,0.7]$. That is, the mixed policy can only move $50\%$ of the way towards being deterministic (in either direction). In general, $\alpha$ denotes that the mixed policy can change the probability of an action no more than $100\alpha\%$ towards being deterministic. So, using mixed policies results in our searches of policy space being constrained to some \textit{feasible set} centered around the initial policy, and where $\alpha$ scales the size of this feasible set.

While small values of $\alpha$ can effectively eliminate overfitting by precluding the mixed policy from moving far away from the initial policy, they also limit the quality of the best mixed policy in the feasible set. 
It is therefore important that $\alpha$ is chosen to balance the tradeoff between overfitting and limiting the quality of solutions that remain in the feasible set. Just as in machine learning, we use $k$-fold cross-validation to automatically select $\alpha$.


This approach is provided in Algorithm~\ref{alg:CandidateKFold}, where {\sc CrossValidate}$^\dagger(\alpha, \mathcal D, \delta, \rho_-, m)$ uses $k$-fold cross-validation to predict the value of $f^\dagger(\pi, \mathcal D_\text{test}, \delta, \rho_-, \lvert \mathcal D_\text{test} \rvert)$ if $\pi$ were to be optimized using $\mathcal D_\text{train}$ and regularization parameter $\alpha$. {\sc CrossValidate}$^\dagger$ is reported in Algorithm~\ref{alg:CrossValidate}. In our implementations we use $k=\min\{20, \frac{1}{2}\lvert \mathcal D \rvert \}$ folds.


\begin{algorithm}
\small
\begin{algorithmic}[1]
	\STATE {\small $\alpha^\star \gets \arg \max_{\alpha\in[0,1]}\text{\sc CrossValidate}^\dagger(\alpha, \mathcal D, \delta, \rho_-, m)$}
	\STATE $	\pi^\star \gets \arg \max_\pi f^\dagger(\mu_{\alpha^\star, \pi_0,\pi}, \mathcal D, \delta, \rho_-, m)$
	\STATE {\bf return }$\mu_{\alpha^\star,\pi_0,\pi^\star}$
\end{algorithmic}
\caption{\small{\sc GetCandidatePolicy}$_\text{$k$-fold}^\dagger(\mathcal D, \delta, \rho_-, m)$Searches for the candidate policy using $k$-fold cross-validation to avoid overfitting.}
\label{alg:CandidateKFold}
\end{algorithm}

\begin{algorithm}
\small
\begin{algorithmic}[1]
	\STATE Partition $\mathcal D$ into $k$ subsets, $\mathcal D_1, \hdots, \mathcal D_k$, of approximately the same size.
	\STATE result $\gets  0$
	\FOR{$i=1$ {\bf to} $k$}
			\STATE $\widehat {\mathcal D} \gets \bigcup_{j \neq i} \mathcal D_j$
			\STATE $\pi^\star \gets \arg \max_\pi f^\dagger(\mu_{\alpha, \pi_0,\pi}, \widehat{\mathcal D}, \delta, \rho_-, m)$
		\STATE result $\gets$ result $ + f^\dagger(\mu_{\alpha, \pi_0,\pi^\star}, \mathcal D_i, \delta, \rho_-, m)$
	\ENDFOR
	\STATE {\bf return }result$/k$
\end{algorithmic}
\caption{\small{\sc CrossValidate}$^\dagger(\alpha, \mathcal D, \delta, \rho_-, m)$}
\label{alg:CrossValidate}
\end{algorithm}

\subsection{Daedalus}
\label{Daedalus}

The {\sc PolicyImprovement}$^\dagger_\ddagger$ algorithm is a batch method that can be applied to an existing data set, $\mathcal D$. However, it can also be used in an incremental manner by executing new safe policies whenever they are found. The user might choose to change $\rho_-$ at each iteration, e.g., to reflect an estimate of the performance of the best policy found so far or the most recently proposed policy. However, for simplicity in our pseudocode and experiments, we assume that the user fixes $\rho_-$ as an estimate of the performance of the initial policy. This scheme for selecting $\rho_-$ is appropriate when trying to convince a user to deploy an RL algorithm to tune a currently fixed initial policy, since it guarantees with high confidence that it will not decrease performance.

Our algorithm maintains a list, $\mathcal C$, of the policies that it has deemed safe. When generating new trajectories, it always uses the policy in $\mathcal C$ that is expected to perform best. $\mathcal C$ is initialized to include a single initial policy, $\pi_0$, which is the same as the baseline policy used by {\sc GetCandidatePolicy}$_\text{$k$-fold}^\dagger$. This online safe learning algorithm is presented in Algorithm \ref{alg:Daedalus}.\footnote{If trajectories are available \textit{a priori}, then $\mathcal D_\text{train}, \mathcal D_\text{test},$ and $\mathcal C$ can be initialized accordingly.} It takes as input an additional constant, $\beta$, which denotes the number of trajectories to be generated by each policy. If $\beta$ is not already specified by the application, it should be selected to be as small as possible, while allowing {\sc Daedalus}$^\dagger_\ddagger$ to execute within the available time. We name this algorithm {\sc Daedalus}$^\dagger_\ddagger$ after the mythological character who promoted safety when he encouraged Icarus to use caution.

\begin{algorithm}
\small
\begin{algorithmic}[1]
	\STATE $\mathcal C \gets \{\pi_0\}$
	\STATE $\mathcal D_\text{train} \gets \mathcal D_\text{test} \gets \emptyset$
	\WHILE {{\bf true}}
		\STATE $\widehat{\mathcal D} \gets \mathcal D_\text{train}$
		\STATE $\pi_\star \gets \arg \max_{\pi \in \mathcal C} g(\pi \vert \widehat{\mathcal D})$
		\STATE Generate $\beta$ trajectories using $\pi^\star$ and append $\lceil \beta/5 \rceil$ to $\mathcal D_\text{train}$ and the rest to $\mathcal D_\text{test}$
		\STATE $\pi_c \gets ${\sc PolicyImprovement}$^\dagger_\ddagger(\mathcal D_\text{train}, \mathcal D_\text{test}, \delta, \rho_-)$
		\STATE $\widehat{\mathcal D} \gets \mathcal D_\text{train}$
		\IF{$\pi_c \neq$ {\sc NSF} {\bf and } $g(\pi_c \vert \widehat{\mathcal D}) > \max_{\pi \in \mathcal C} g(\pi \vert \widehat{\mathcal D})$}
			\STATE $\mathcal C \gets \mathcal C \cup \pi_c$
			\STATE $\mathcal D_\text{test} \gets \emptyset$
		\ENDIF
	\ENDWHILE
\end{algorithmic}
\caption{\small{\sc Daedalus}$^\dagger_\ddagger(\pi_0, \delta, \rho_-, \beta)$Incremental policy improvement algorithm.}
\label{alg:Daedalus}
\end{algorithm}

The benefits of $\ddagger=k$-fold are biggest when only a few trajectories are available, since then {\sc GetCandidatePolicy}$^\dagger_\text{None}$ is prone to overfitting. When there is a lot of data, overfitting is not a big problem, and so the additional computational complexity of $k$-fold cross-validation is not justified. In our implementations of {\sc Daedalus}$^\dagger_{k\text{-fold}}$, we therefore only use $\ddagger = {k\text{-fold}}$ until the first policy is successfully added to $\mathcal C$, and $\ddagger$ = None thereafter. This provides the early benefits of $k$-fold cross-validation without incurring its full computational complexity.

The {\sc Daedalus}$^\dagger_\ddagger$ algorithm ensures safety with each newly proposed policy. That is, during each iteration of the while-loop, the probability that a new policy, $\pi$, where $\rho(\pi) < \rho_-$, is added to $\mathcal C$ is at most $\delta$. The multiple comparison problem is not relevant here because this guarantee is per-iteration. However, if we consider the safety guarantee over multiple iterations of the while-loop, it applies and means that the probability that at least one policy, $\pi$, where $\rho(\pi) < \rho_-$, is added to $\mathcal C$ over $k$ iterations is at most $\min\{1,k\delta\}$.

We define {\sc Daedalus2}$^\dagger_\ddagger$ to be {\sc Daedalus}$^\dagger_\ddagger$ but with line 11 removed. The multiple hypothesis testing problem does \emph{not} affect {\sc Daedalus2$^\dagger_\ddagger$} more than {\sc Daedalus}$^\dagger_\ddagger$, since the safety guarantee is per-iteration. However, a more subtle problem is introduced: the importance weighted returns from the trajectories in the testing set, $\hat \rho(\pi_c \vert \tau_i^{\mathcal D_\text{test}}, \pi_i^{\mathcal D_\text{test}})$, are not necessarily unbiased estimates of $\rho(\pi_c)$.
This happens because the policy, $\pi_c$, is computed in part from the trajectories in $\mathcal D_\text{test}$ that are used to test it for safety. This dependence is depicted in Figure \ref{fig:badInfluence}. We also modify {\sc Daedalus2}$^\dagger_\ddagger$ by changing lines 4 and 8 to $\widehat{\mathcal D} \gets \mathcal D_\text{train} \cup \mathcal D_\text{test}$, which introduces an additional minor dependence of $\pi_c$ on the trajectories in $\mathcal D_\text{test}^j$.

\begin{figure}[ht]
  \centering
  \includegraphics[width=0.8\columnwidth]{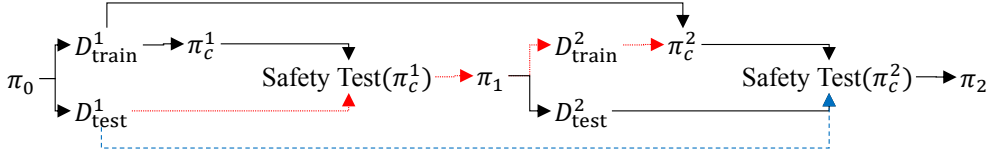}
  \caption{This diagram depicts influences as {\sc Daedalus2$^\dagger_\ddagger$} runs. First, $\pi_0$ is used to generate sets of trajectories, $\mathcal D_\text{train}^1$ and $\mathcal D_\text{test}^1$, where superscripts denote the iteration. Next $\mathcal D_\text{train}^1$ is used to select the candidate policy, $\pi_c^1$. Next, $\pi_c^1$ is tested for safety using the trajectories in $\mathcal D_\text{test}^1$ (this safety test occurs on line 2 of {\sc PolicyImprovement}$^\dagger_\ddagger$). The result of the safety test influences which policy, $\pi_1$, will be executed next. These policies are then used to produce $\mathcal D_\text{train}^2$ and $\mathcal D_\text{test}^2$ as before. Next, both $\mathcal D_\text{train}^1$ and $\mathcal D_\text{train}^2$ are used to select the candidate policy, $\pi_c^2$. This policy is then tested for safety using the trajectories in $\mathcal D_\text{test}^1$ and $\mathcal D_\text{test}^2$. The result of this test influences which policy, $\pi_2$, will be executed next, and the process continues. Notice that $\mathcal D_\text{test}^1$ is used when testing $\pi_c^2$ for safety (as indicated by the dashed blue line) even though it also influences $\pi_c^2$ (as indicated by the dotted red path). This is akin to performing an experiment, using the collected data ($\mathcal D_\text{test}^1$) to select a hypothesis ($\pi_c^2$ is safe), and then using that same data to test the hypothesis. {\sc Daedalus}$^\dagger_\ddagger$ does not have this problem because the dashed blue line is not present.}
\label{fig:badInfluence}
\end{figure}

Although our theoretical analysis applies to {\sc Daedalus}$^\dagger_\ddagger$, we propose the use of {\sc Daedalus2}$^\dagger_\ddagger$ because the ability of the trajectories, $\mathcal D_\text{test}^i$, to bias the choice of which policy to test for safety in the future ($\pi_c^j$, where $j > i$) towards a policy that $\mathcal D_\text{test}^i$ will deem safe, is small. However, the benefits of {\sc Daedalus2}$^\dagger_\ddagger$ over {\sc Daedalus}$^\dagger_\ddagger$ are significant---the set of trajectories used in the safety tests increases in size with each iteration, as opposed to always being of size $\beta$. So, in practice, we expect the over-conservativeness of $\rho_-^\text{CI}$ to far outweigh the error introduced by {\sc Daedalus2}$^\dagger_\ddagger$. Notice that {\sc Daedalus2}$^\text{CI}_\ddagger$ is safe (not just semi-safe) if we consider its execution up until the first change of the policy, since then the trajectories are always generated by $\pi_0$, which is not influenced by any of the testing data.

\subsection{Empirical Analysis}
\label{caseStudies}

\paragraph{Case Study:}

For our case study we used real data, captured with permission from the website of a Fortune 50 company that receives hundreds of thousands of visitors per day and which uses Adobe Target, to train a simulator using a proprietary in-house system identification tool at Adobe. The simulator produces a vector of $31$ real-valued features that provide a compressed representation of all of the available information about a user. The advertisements are clustered into two high-level classes that the agent must select between. After the agent selects an advertisement, the user either clicks (reward of $+1$) or does not click (reward of $0$) and the feature vector describing the user is updated.  Although this greedy approach has been successful, as we discussed in Section \ref{sec:par}, it does not necessarily also maximize the  total  number  of  clicks  from  each  user  over  his  or  her lifetime. Therefore, we consider a full reinforcement learning solution for this problem. We selected $T= 20$ and $\gamma =1$. This is a particularly challenging problem because the reward signal is sparse. If each action is selected with probability $0.5$ always, only about $0.38\%$ of the transitions are rewarding, since users usually do not click on the advertisements. This means that most trajectories provide no feed-back. Also, whether a user clicks or not is close to random, so returns have relatively high variance. We  generated  data  using  an  initial  baseline  policy  and then evaluated a new policy proposed by an in-house reinforcement learning algorithm. 

In order to avoid the large costs associated with deployment  of  a  bad  policy,  in  this  application  it  is  imperative that new policies proposed by RL algorithms are ensured to be safe before deployment.

\paragraph{Results:}
In our experiments, we selected $\rho_-$ to be an empirical estimate of the performance of the initial policy and $\delta = 0.05$. We used CMA-ES \cite{Hansen2006} to solve all $\arg \max_\pi$, where $\pi$ was parameterized by a vector of policy parameters using linear softmax action selection \cite{Sutton1998} with the Fourier basis \cite{Konidaris2011}.


\begin{figure}
\centering
                \includegraphics[width=0.5\textwidth]{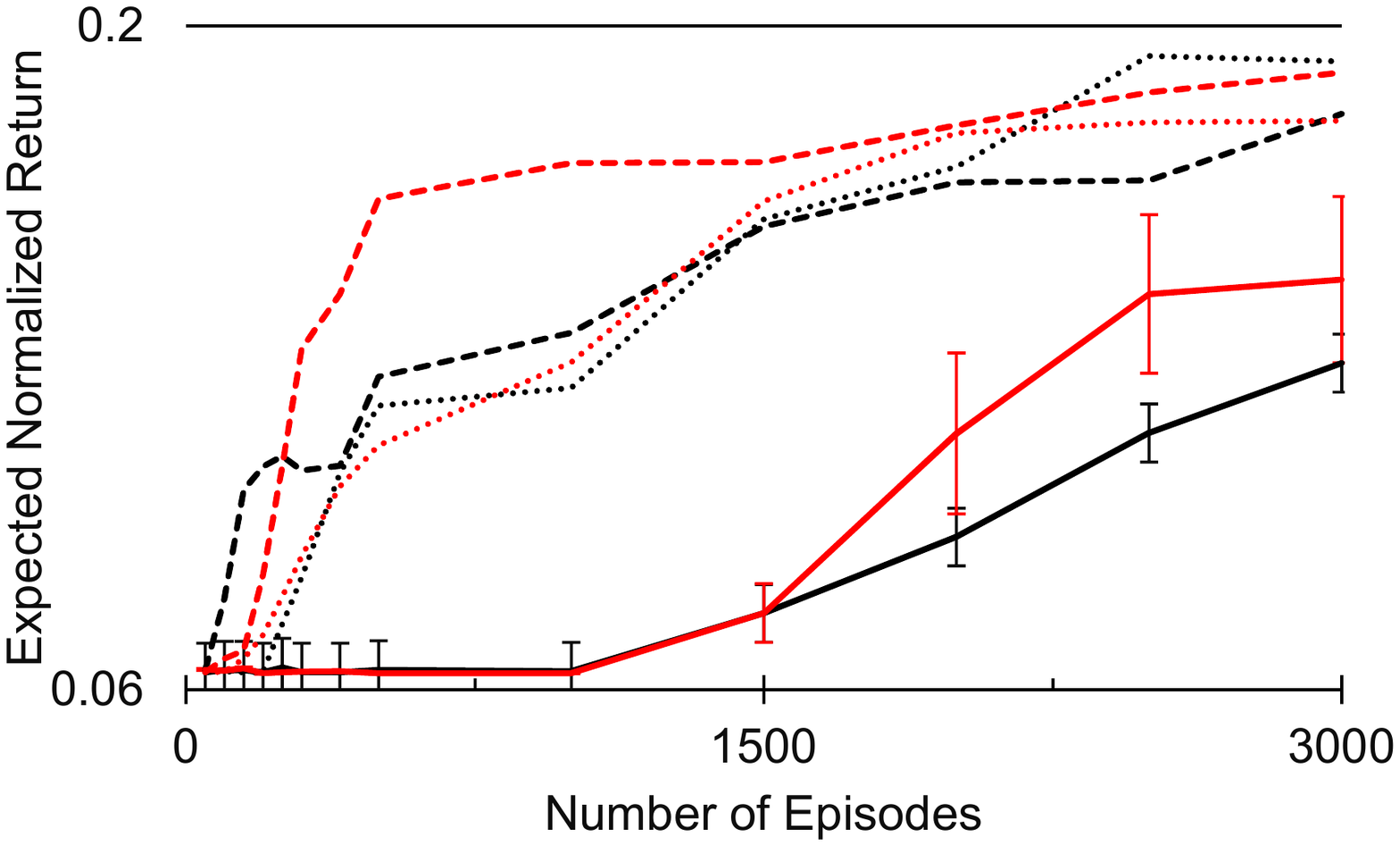}
     \includegraphics[width=0.15\textwidth]{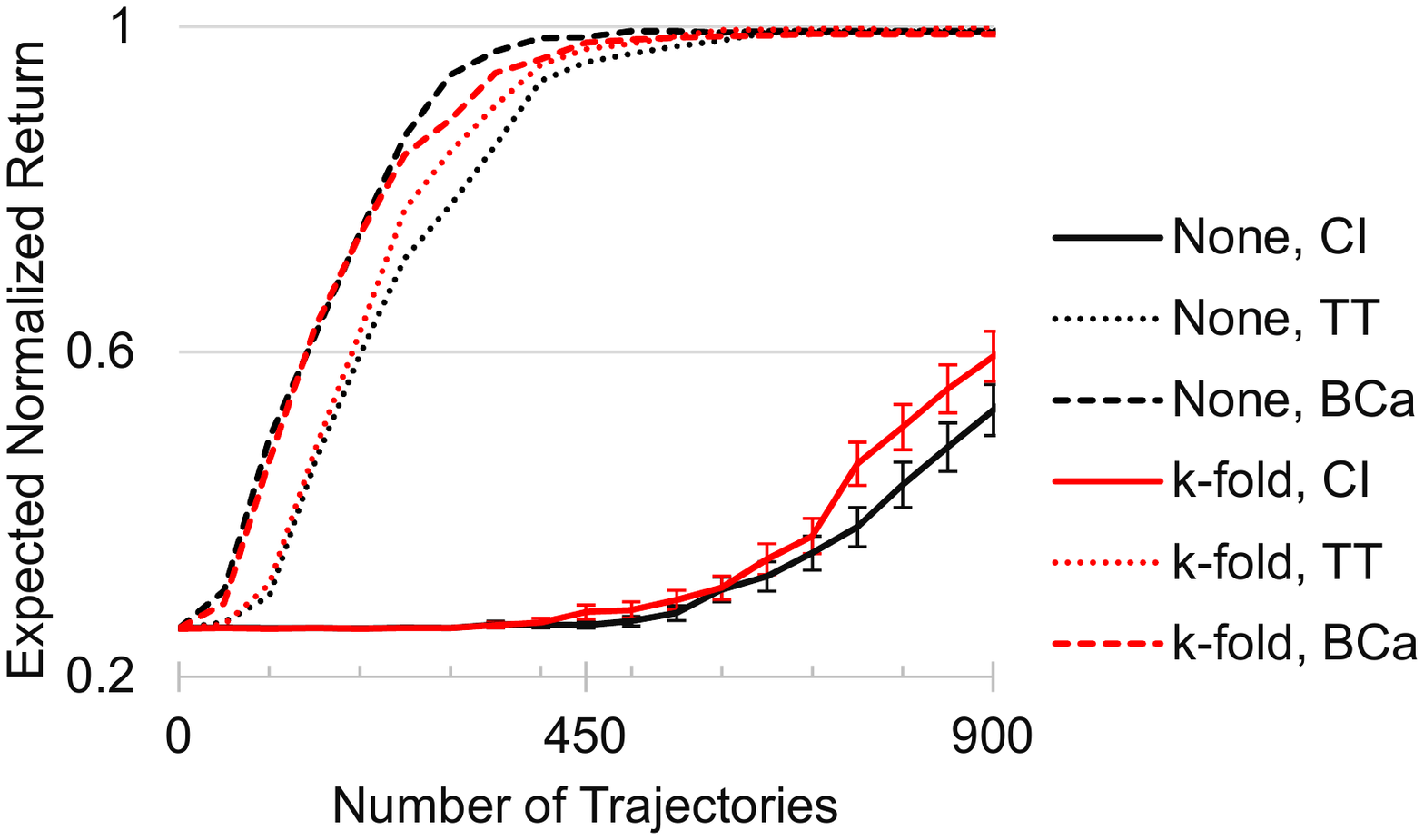}
                
	\caption{
	%
	Performance of {\sc Daedalus2}$^\dagger_\ddagger$ on the digital marketing domain. The legend specifies $\ddagger, \dagger$.
	}   \label{fig:digitalMarketing}
\end{figure}



For our problem domain, we executed {\sc Daedalus2}$^\dagger_\ddagger$ with $\dagger \in \{$CI, TT, BCa$\}$ and $\ddagger \in \{$None, $k$-fold$\}$. Ideally, we would use $\beta=1$ for all domains. However, as $\beta$ decreases, the runtime increases. We selected $\beta \in [50, 100,500]$ for the digital marketing domain. $\beta$ increases with the number of trajectories in the digital marketing domain so that the plot can span the number of trajectories required by the CI approach without requiring too many calls to the computationally demanding {\sc PolicyImprovement}$^\text{BCa}_{k\text{-fold}}$ method. We did not tune $\beta$ for these experiments---it was set solely to limit the runtime. 

The performance of {\sc Daedalus2}$^\dagger_\ddagger$ on the digital marketing domain is provided in Figure \ref{fig:digitalMarketing}. The expected normalized returns in Figure \ref{fig:digitalMarketing} are computed using $20,\!000$ Monte Carlo rollouts, respectively. The curves are also averaged over $10$ trials, respectively, with standard error bars provided when they do not cause too much clutter.

First, consider the different values for $\dagger$. As expected, the CI approaches (solid curves) are the most conservative, and therefore require the most trajectories in order to guarantee improvement. The BCa approaches (dashed lines) perform the best, and are able to provide high-confidence guarantees of improvement with as few as $50$ trajectories. The TT approach (dotted lines) perform in-between the CI and BCa approaches, as expected (since the $t$-test tends to produce overly conservative lower bounds for distributions with heavy upper tails).

Next, consider the different values of $\ddagger$. Using $k$-fold cross-validation provides an early boost in performance by limiting overfitting when there are few trajectories in the training set. Although the results are not shown, we experimented with using $\ddagger=k$-fold for the entire runtime (rather than just until the first policy improvement), but found that while it did increase the runtime significantly, it did not produce much improvement.

\section{Non-Stationarity}
\label{sec:nonstationary}

In the previous sections we made a critical assumption that the domain can be modeled as a POMDP.
However, real world problems are often non-stationary.  In this section we consider the problem of evaluating an SR policy off-line without assuming stationary transition and rewards. We argue that off-policy policy evaluation for non-stationary MDPs can be phrased as a time series prediction problem, which results in predictive methods that can anticipate changes before they happen. We
therefore propose a synthesis of existing off-policy policy evaluation methods with existing time series prediction methods, which we show results in a drastic reduction of mean squared error when evaluating policies using real digital marketing data set~\cite{DBLP:conf/aaai/ThomasTGDB17}.

\subsection{Motivating Example}
\label{subsec:motivatehotel}
In digital marketing applications,  when a person visits the website of a company, she is often shown a list of current promotions. In order for the display of these promotions to be effective, it must be properly targeted based on the known information about the person (e.g., her interests, past travel behavior, or income). The problem now reduces to automatically deciding which promotion (sometimes called a \textit{campaign}) to show to the visitor of a website.


As we have described  in Section \ref{sec:par} the system's goal is to determine how to select actions (select promotions to display) based on the available observations (the known information of the visitor) such that the reward is maximized (the number of clicks is maximized). Let $\rho(\pi_e,\iota)$ be the performance of the policy $\pi_e$ in episode $\iota$. In the bandit setting $\rho(\pi_e,\iota)$ is the expected number of clicks \emph{per visit}, called the \textit{click through rate} (CTR), while in the reinforcement learning setting it is the expected number of clicks \emph{per user}, called the \textit{life-time value} (LTV).

In order to determine how much of a problem non-stationarity really is, we collected data from the website of one of Adobe's Test and Target customers: the website of a large company in the hotel and entertainment industry. We then used a proprietary policy search algorithm custom designed for digital marketing to generate a new policy for the customer. We then collected $n\approx 300,\!000$ new episodes of data, which we used as $D$, to compute $\operatorname{OPE}(\pi_e,\iota|D)$ for all $\iota \in \{0,\dotsc,n-1\}$ using ordinary importance sampling. Figure \ref{fig:Hotel} summarizes the resulting data.

\begin{figure}%
\centering
\includegraphics[width=0.35\columnwidth]{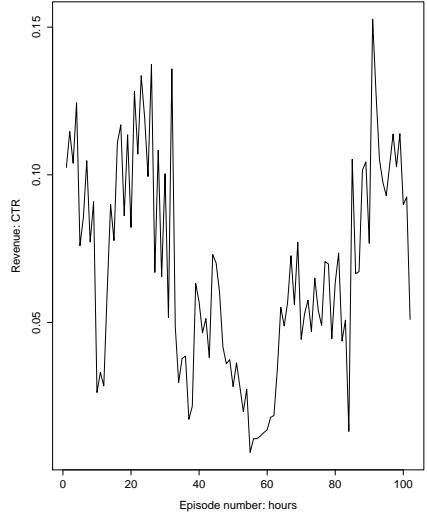}%
\caption{Plot of $\operatorname{OPE}(\pi_e, \iota | D)$ for various $\iota$, on the real-world digital marketing data from a large company in the hotel and entertainment industry. This data spans several days. Since the raw data has high variance, we bin the data into bins that each span one hour. Notice that the performance of the policy drops from an initial CTR of $0.1$ down to a near-zero CTR near the middle of the data set.}%
\label{fig:Hotel}%
\end{figure}

In this data it is evident that there is significant non-stationarity---the CTR varied drastically over the span of the plot. This is also not just an artifact of high variance: using Student's $t$-test we can conclude that the expected return during the first $100,\!000$ and subsequent $60,\!000$ episodes was different with $p=1.6\times 10^{-33}$. This is compelling evidence that we cannot ignore non-stationarity in our users' data when providing predictions of the expected future performance of our digital marketing algorithms, and is compelling real-world motivation for developing non-stationary off-policy policy evaluation algorithms.

\subsection{Nonstationary Off-Policy Policy Evaluation (NOPE)}

\begin{figure}[h]%
\centering
\includegraphics[width=0.35\columnwidth]{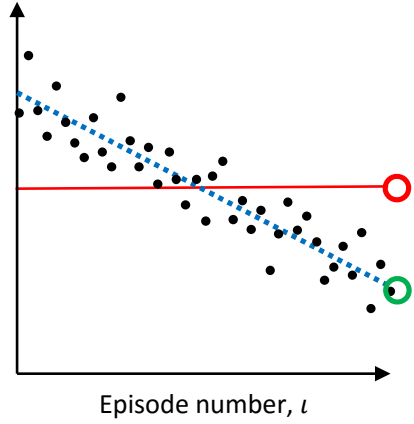}%
\caption{This illustration depicts an example of how the existing standard OPE methods produce \textit{reactive} behavior, and is hand-drawn to provide intuition. Here the dotted blue line depicts $\rho(\pi_e, \iota)$ for various $\iota$. The black dots denote $\operatorname{OPE}(\pi_e,\iota|D)$ for various $\iota$. Notice that each $\operatorname{OPE}(\pi_e,\iota|D)$ is a decent estimate of $\rho(\pi_e,\iota)$, which changes with $\iota$. Our goal is to estimate $\rho(\pi_e, n)$---the performance of the policy during the \textit{next} episode. That is, our goal is to predict the vertical position of the green circle. However, by averaging the OPE estimates, we get the red circle, which is a reasonable prediction of performance in the past. As more data arrives ($n$ increases) the predictions will decrease, but will always remain behind the target value of $\rho(\pi_e,n)$.}%
\label{fig:example}%
\end{figure}

\textit{Non-stationary Off-Policy Policy Evaluation} (NOPE) is simply OPE for non-stationary MDPs. In this setting, the goal is to use the available data $D$ to estimate $\rho(\pi_e, n)$---the performance of $\pi_e$ during the \textit{next episode} (the $n^\text{th}$ episode).

Notice that we have not made assumptions about how the transition and reward functions of the non-stationary MDP change. For some applications, they may drift slowly, making $\rho(\pi_e,\iota)$ change slowly with $\iota$. For example, this sort of drift may occur due to mechanical wear in a robot. For other applications, $\rho(\pi_e,\iota)$ may be fixed for some number of episodes, and then make a large jump. For example, this sort of jump may occur in digital marketing applications \citep{TheocharousTG15} due to media coverage of a relevant topic rapidly changing public opinion of a product. In yet other applications, the environment may include both large jumps and smooth drift.

Notice that NOPE can range from trivial to completely intractable. If the MDP has few states and actions, changes slowly between episodes, and the evaluation policy is similar to the behavior policy, then we should be able to get accurate off-policy estimates. On the other extreme, if for each episode the MDP's transition and reward functions are drawn randomly (or adversarially) from a wide distribution, then producing accurate estimates of $\rho(\pi_e,n)$ may be intractable.

\subsection{Predictive Off-Policy Evaluation using Time Series Methods}

The primary insight in this section, in retrospect, is obvious:
\textbf{NOPE is a time series prediction problem.}
Figure \ref{fig:example} provides an illustration of the idea.
Let $x_\iota = \iota$ and $Y_\iota = \operatorname{OPE}(\pi_e, \iota | D)$ for $\iota \in \{1,\dotsc,n-1\}$. This makes $x$ an array of $n$ times (each episode corresponds to one unit of time) and $y$ an array of the corresponding $n$ observations. Our goal is to predict the expected value of the next point in this time series, which will occur at $x_n=n$. Pseudocode for this \textit{time series prediction} (TSP) approach is given in Algorithm \ref{alg:TSP_POPE}.

\begin{algorithm}[]
   \caption{Time Series Prediction (TSP)}
   \label{alg:TSP_POPE}
\begin{algorithmic}[1]
	\STATE {\bfseries Input:} Evaluation policy, $\pi_e$, historical data, $D\coloneqq (H^\iota, \pi^\iota)_{\iota=0}^{n-1}$, and a time-series prediction algorithm (and its hyper-parameters).
\vspace{.1cm}
	\STATE Create arrays $x$ and $y$, both of length $n$.
	\FOR{$\iota=0$ {\bfseries to} $n-1$}
		\STATE $x_\iota \gets \iota$
		\STATE $y_\iota \gets \operatorname{OPE}(\pi_e, \iota | D)$
	\ENDFOR
	\STATE Train a time-series prediction algorithm on $x,y$.
	\STATE {\bfseries return} the time-series prediction algorithm's prediction for time $n$.
\end{algorithmic}
\end{algorithm}


When considering using time-series prediction methods for off-policy policy evaluation, it is important that we establish that the underlying process is actually nonstationary. One popular method for determining whether a process is stationary or nonstationary is to report the sample \emph{autocorrelation function} (ACF):
$$
\operatorname{ACF}_h \coloneqq \frac{\mathbf{E}[(X_{t+h}-\mu)(X_t- \mu)]}{\mathbf{E}[(X_t-\mu)^2]},
$$
where $h$ is a parameter called the \textit{lag} (which is selected by the researcher), $X_t$ is the time series, and $\mu$ is the mean of the time series.  For a stationary time series, the ACF will drop to zero relatively quickly, while the ACF of nonstationary data decreases slowly.

ARIMA models are models of time series data that can capture many different sources of non-stationarity. The time series prediction algorithm that we use in our experiments is the $R$ forecast package for fitting ARIMA models \cite{Hyndman2008}.

\subsection{Empirical Studies}
 
In this section we show that, despite the lack of theoretical results about using TSP for NOPE, it performs remarkably well on real data. Because our experiments use real-world data, we do not know ground truth---we have $\operatorname{OPE}(\pi_e,\iota|D)$ for a series of $\iota$, but we do not know $\rho(\pi_e,\iota)$ for any $\iota$. This makes evaluating our methods challenging---we cannot, for example, compute the true error or mean squared error of estimates. We therefore estimate the mean error and mean squared error directly from the data as follows. 

For each $\iota \in \{1,\dotsc,n-1\}$ we compute each method's output, $\hat y_\iota$, given all of the previous data, $D_{\iota-1}\coloneqq (H^{\hat \iota}, \pi^{\hat \iota})_{\hat \iota=0}^{\iota-1}$. We then compute the observed next value, $y_\iota = \operatorname{OPE}(\pi_e,\iota | D_\iota)$. From these, we compute the squared error, $(\hat y_\iota - y_\iota)^2$, and we report the mean squared error over all $\iota$. We perform this experiment using both the current standard OPE approach, which computes sample mean of performance over all the available data, and using our new time series prediction approach.

Notice that this scheme is not perfect. Even if an estimator perfectly predicts $J(\pi_e,\iota)$ for every $\iota$, it will be reported as having non-zero mean squared error. This is due to the high variance of $\operatorname{OPE}$, which gets conflated with the variance of $\hat y$ in our estimate of mean squared error. Although this means that the mean squared errors that we report are not good estimates of the mean squared error of the estimators, $\hat y$, the variance-conflation problem impacts all methods nearly equally. So, in the absence of ground truth knowledge, the reported mean squared error values are a reasonable measure of how accurate the methods are relative to each other.

The domain we consider is digital marketing using the data from the large companies in the hotel and entertainment industry as described in Section \ref{subsec:motivatehotel}. We refer to this domain as the \textit{Hotel} domain. For this domain, and all others, we used ordinary importance sampling for $\operatorname{OPE}$. Recall that the performance of the evaluation policy appears to drop initially---the probability of a user clicking decays from a remarkably high $10\%$ down to a near-zero probability---before it rises back to close to its starting level. Recall also that using a two-sided Student's $t$-test we found that the true mean during the first $100,\!000$ trajectories was different from the true mean during the subsequent $60,\!000$ trajectories with $p=1.6\times 10^{-33}$, so the non-stationarity that we see is likely not noise.

We collected additional data from the website of a large company in the financial industry, and used the same proprietary policy improvement algorithm to find a new policy that we might consider deploying for the user. There appears to be less long-term non-stationarity in this data, and a two-sided Student's $t$-test did not detect a difference between the early and late performance of the evaluation policy. We refer to this data as the \textit{Bank} domain.

\subsection{Results}
\begin{figure}%
\centering
\includegraphics[width=0.35\columnwidth]{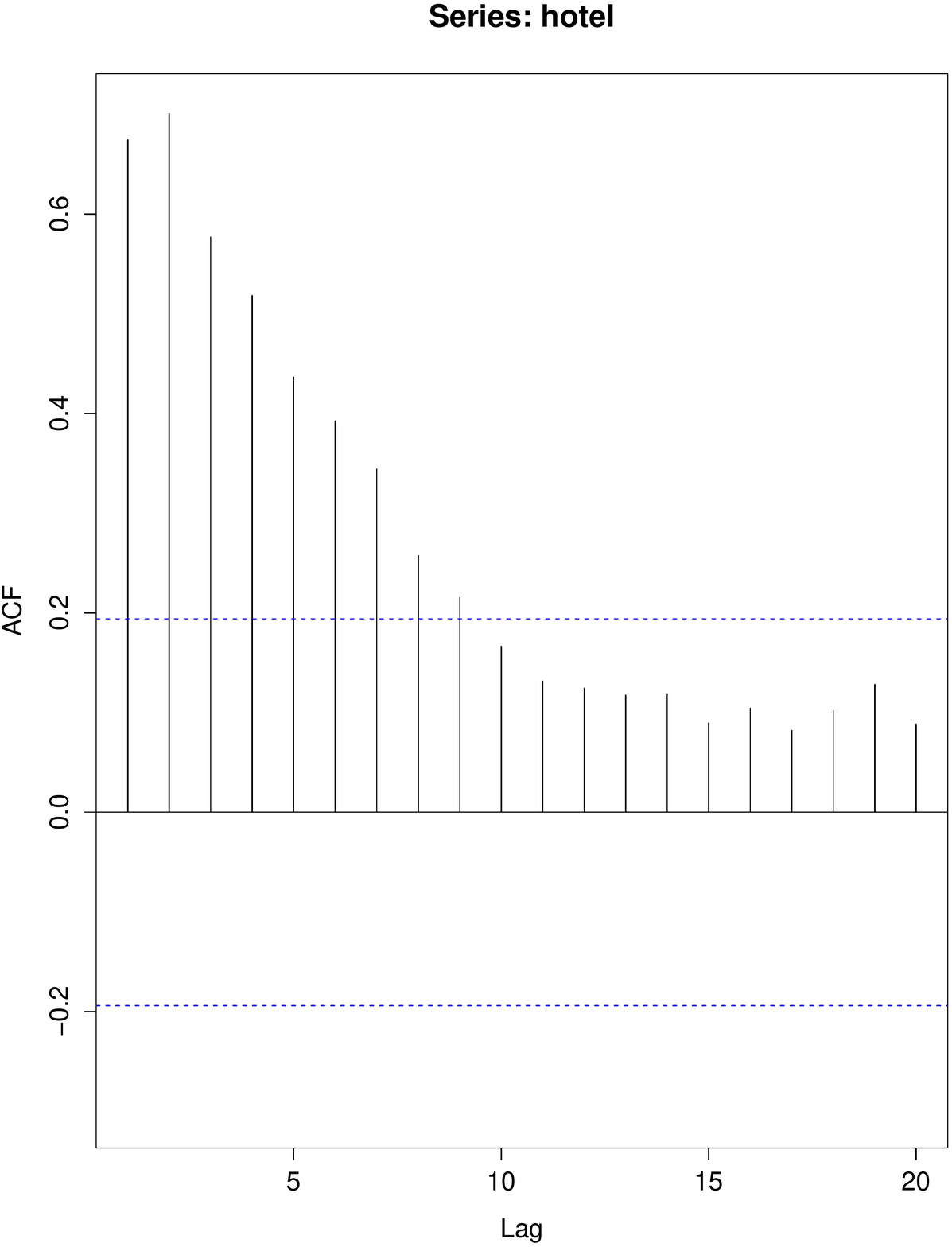}%
\includegraphics[width=0.35\columnwidth]{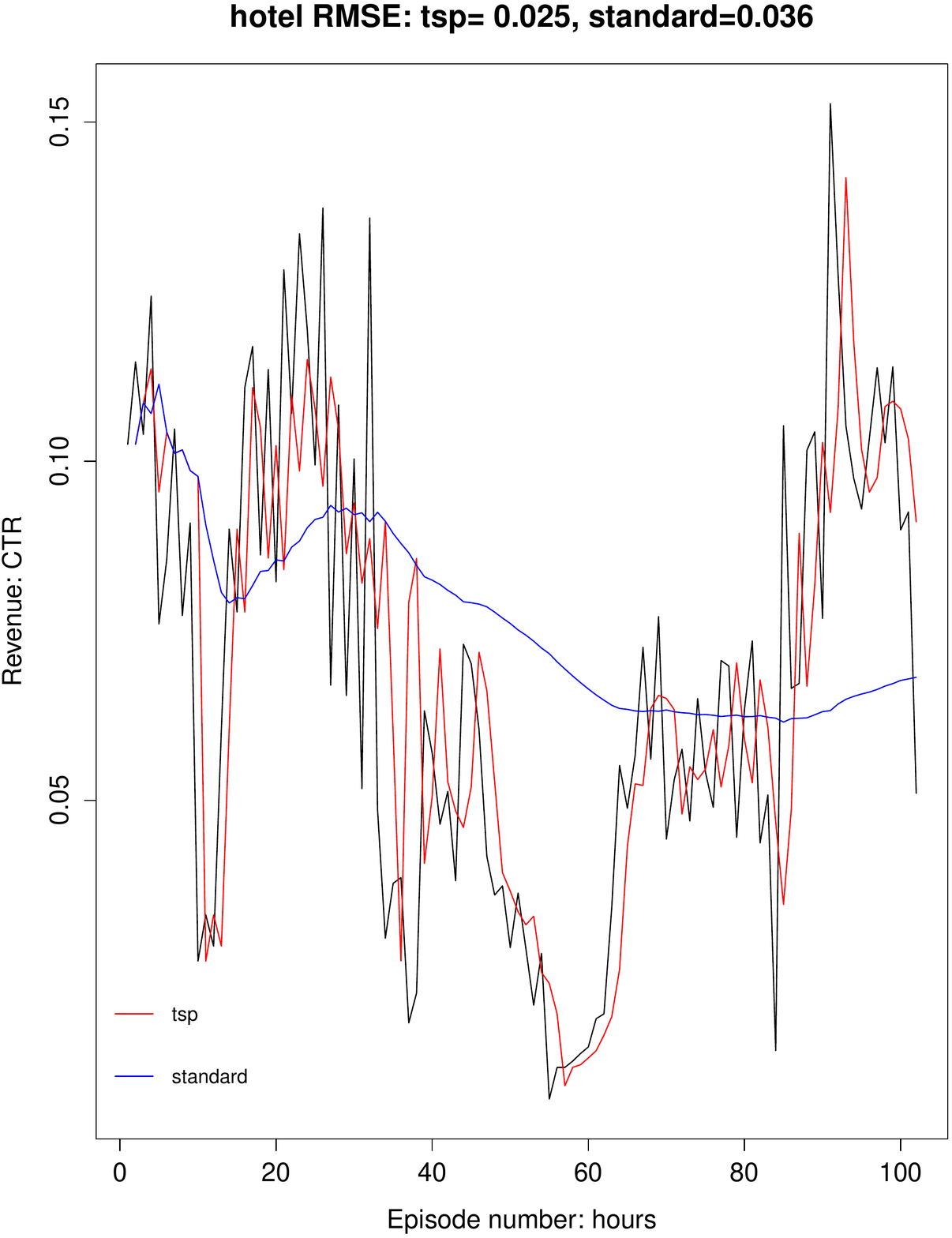}%
\\
\includegraphics[width=0.35\columnwidth]{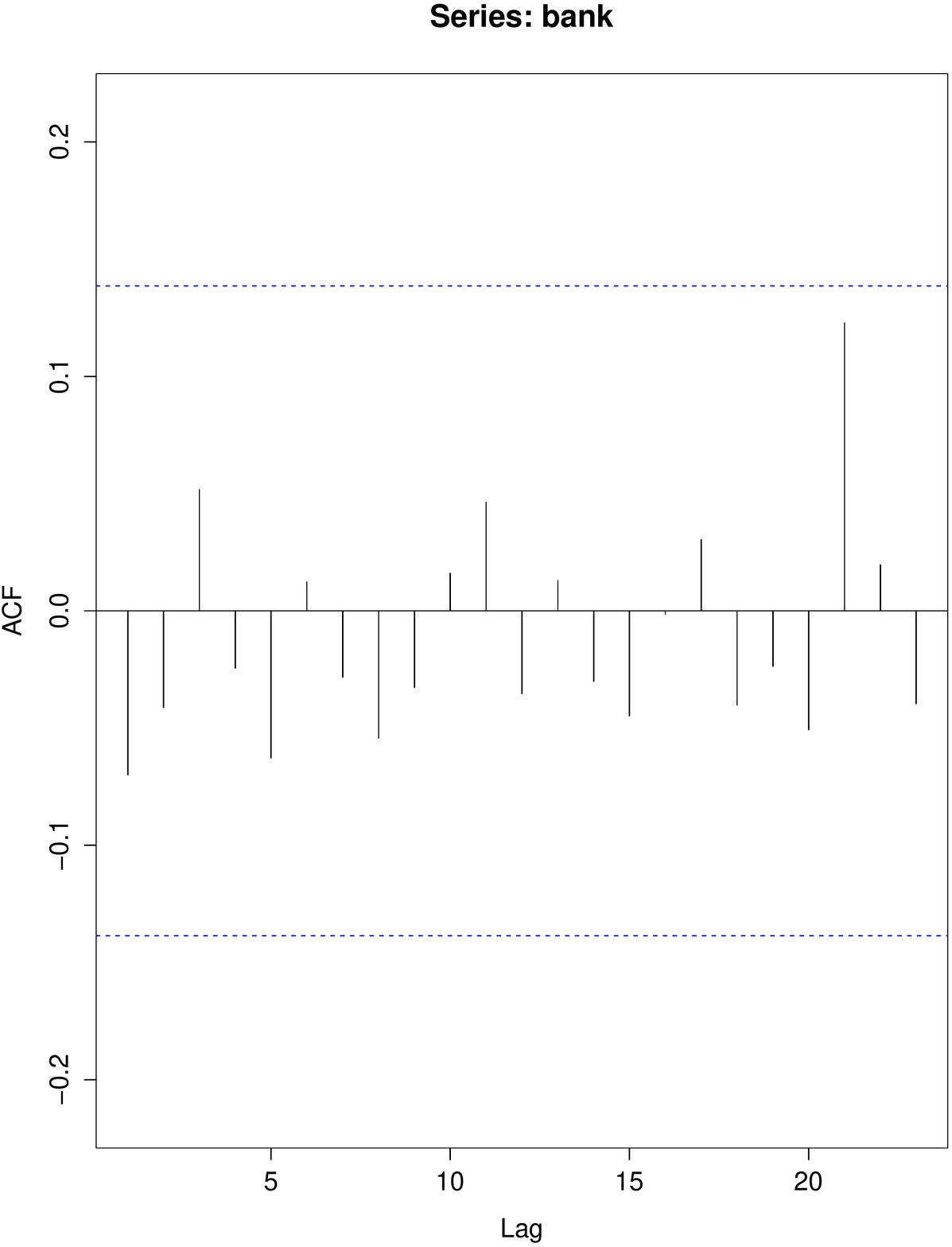}%
\includegraphics[width=0.35\columnwidth]{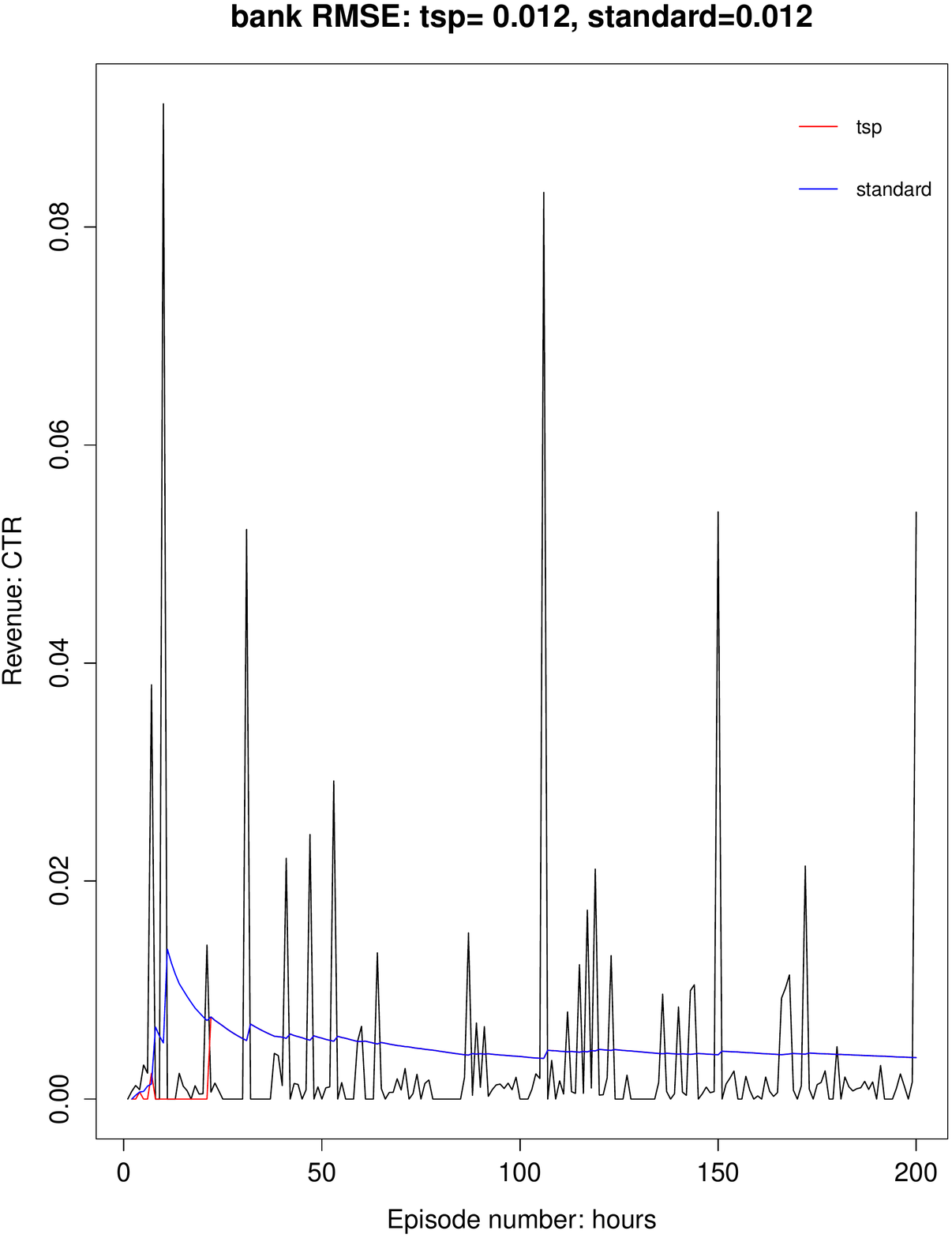}%
\caption{(Top) Hotel domain.  The left plot shows the auto-correlation for the time series, where it is obvious the signal nonstationary.  The right plot compares the tsp approach with the standard.  The tsp outperforms the standard approach, since the series is nonstationary. The time series was aggregated at the hour level.
(Bottom) Bank domain.  The left plot shows the autocorellation for the time series, where it is obvious the signal stationary.  The right plot compares the tsp approach with the standard.  They both perform the same, since the series is stationary. The time series was aggregated at the hour level.}%
\label{fig:hotel-ctr}%
\end{figure}

We applied our TSP algorithm for NOPE, described in Algorithm 
\ref{alg:TSP_POPE}, to the nonstationary hotel and bank data sets. The plots in Figure \ref{fig:hotel-ctr} all take the same form: the plots on the left are autocorrelation plots that show whether or not there appears to be non-stationarity in the data. As a rule of thumb, if the ACF values are within the dotted blue lines, then there is not sufficient evidence to conclude that there is non-stationarity. However, if the ACF values lie outside the dotted blue lines, it suggests that there is non-stationarity. 

The plots on the right depict the expected return (which is the expected CTR for the hotel and bank data sets) as predicted by several different methods. The black curves are the target values---the observed mean OPE estimate over a small time interval. For each episode number, our goal is to compute the value of the black curve given all of the previous values of the black curve. The blue curve does this using the standard method, which simply averages the previous black points. The red curve is our newly proposed method, which uses ARIMA to predict the next point on the black curve---to predict the performance of the evaluation policy during the next episode. Above the plots we report the sample \textit{root mean squared error} (RMSE) for our method, \textit{tsp}, and the standard method, \textit{standard}.


Consider the results on the hotel data set, which are depicted in Figure \ref{fig:hotel-ctr} (Top). The red curve (our method) tracks the binned values (black curve) much better than the blue curve (standard method). Also, the sample RMSE of our method is $0.025$, which is lower than the standard method's RMSE of $0.036$. This suggests that treating the problem as a time series prediction problem results in more accurate estimates.

Finally, consider the results on the bank data set, which are depicted in Figure \ref{fig:hotel-ctr} (Bottom). The auto-correlation plot suggests that there is not much non-stationarity in this data set. This validates another interesting use case for our method: does it break down when the environment happens to be (approximately) stationary? The results suggest that it does not---our method achieves the same RMSE as the standard method, and the blue and red curves are visually quite similar.

An interesting research question is whether our high-confidence policy evaluation and improvement algorithms can be extended to  non-stationary MDPs.
However, following TSP algorithm, it can be noticed that estimating performance of a policy with high-confidence in non-stationary MDP can be reduced to time-series forecasting with high-confidence, which in complete generality is infeasible.
An open research direction is to leverage domain specific structure of the problem and identify conditions under which this problem can be made feasible.

\section{Learning from Passive Data}
\label{sec:passive}
Constructing SR systems is particularly challenging due to the cold start problem. Fortunately, in many real world problems, there is an abundance of sequential data which are usually `passive' in that they do not include past recommendations. In this section we propose a practical approach that learns from passive data. We use scalar parameterization that turns a passive model into active, and posterior sampling for Reinforcement learning (PSRL) to learn the correct parameter value. In this section we summarize our work from \cite{DBLP:conf/iui/TheocharousVW17,DBLP:conf/nips/TheocharousWAV18}.

The idea is to first learn a model from passive data that predicts the next activity given the history of activities. This can be thought of as the ‘no-recommendation’ or passive model. To create actions for recommending the various activities, we can perturb the passive model. Each perturbed model increases the probability of following the recommendations, by a different amount. This leads to a set of models, each one with a different `propensity to listen’. In effect, the single `propensity to listen’ parameter is used to turn a passive model into a set of active models. When there are multiple models one can use online algorithms, such as posterior sampling for Reinforcement learning (PSRL) to identify the best model for a new user \cite{Strens:2000:BFR:645529.658114,NIPS2013_5185}. In fact, we used a deterministic schedule PSRL (DS-PSRL) algorithm, for which we have shown how it satisfies the assumptions of our parameterization in \cite{DBLP:conf/nips/TheocharousWAV18}.   The overall solution is shown in Figure \ref{fig:passive-data-solution}.

\begin{figure*}[ht]
\centering
\includegraphics[width=1.0\textwidth]{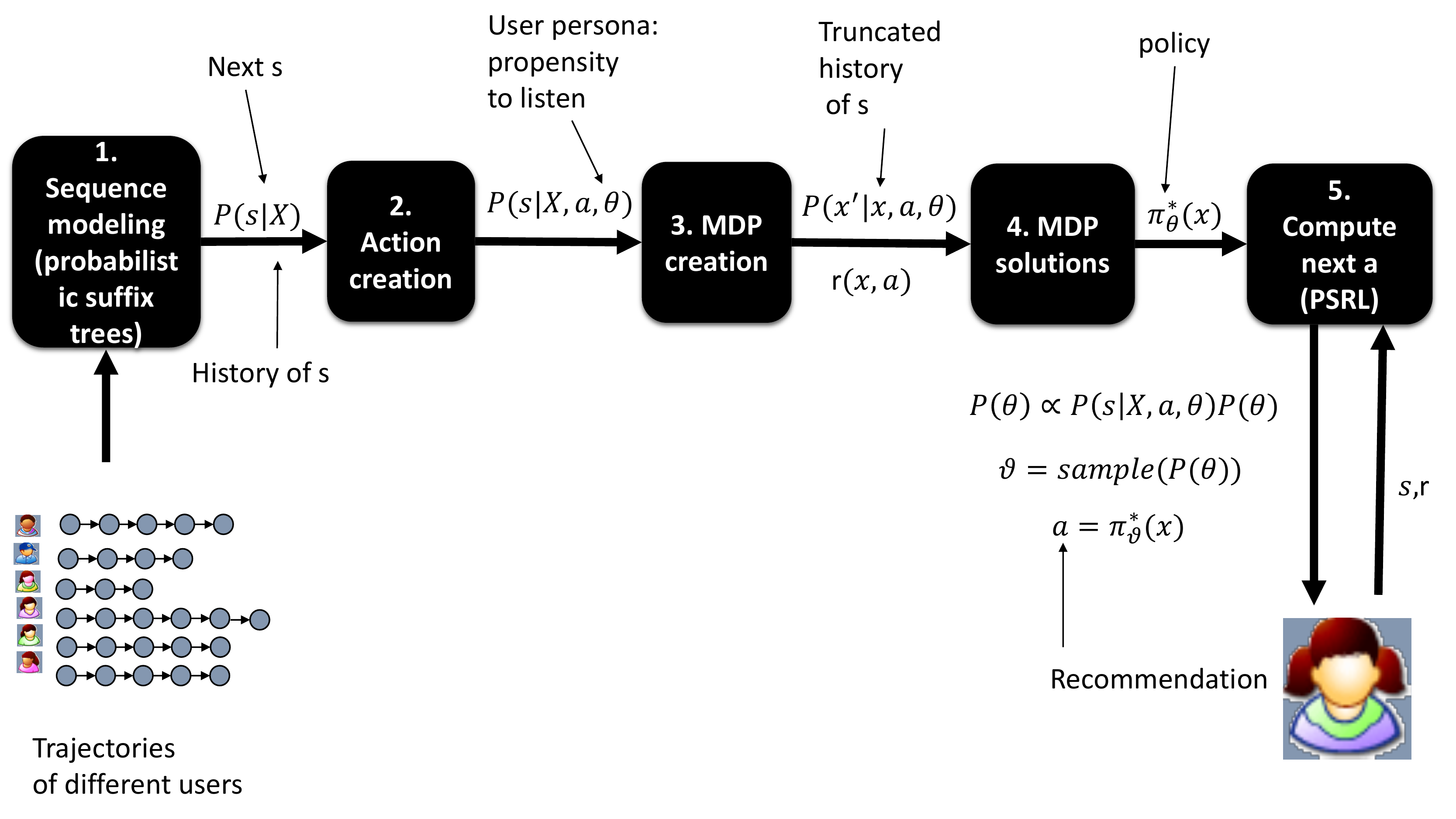}
\caption[ ]{The first 4 steps are done  offline and are used to create and solve a  discrete set of MDPs, for each value of $\theta$.  Step 5 implements the DS-PSRL algorithm of this paper.}
\label{fig:passive-data-solution}
\end{figure*}

\subsection{Sequence Modeling}  The first step in the solution is to model past sequences of activities. Due to the fact that the number of activities is usually finite and discrete and the fact that what activity  a person may do next next depends on the history of activities done, we chose to model activity sequences using probabilistic suffix trees (PST).  PSTs are a compact way of modeling the most frequent suffixes, or history of a discrete alphabet  $S$ (e.g., a set of activities).  The nodes of a PST represent suffixes (or histories).  Each node is associated with a probability distribution for observing every symbol in the alphabet, given the node suffix \citep{pst-R-package}.  Given a PST model one could easily estimate the probability of the next symbol $s=s_{t+1}$ given the history of symbols $X=(s_1, s_2 \dots s_t)$  as $P(s | X)$.    An example PST is shown in Figure \ref{fig:pst}.

\begin{figure}[h]
 \centering
  \includegraphics[width=0.5\textwidth]{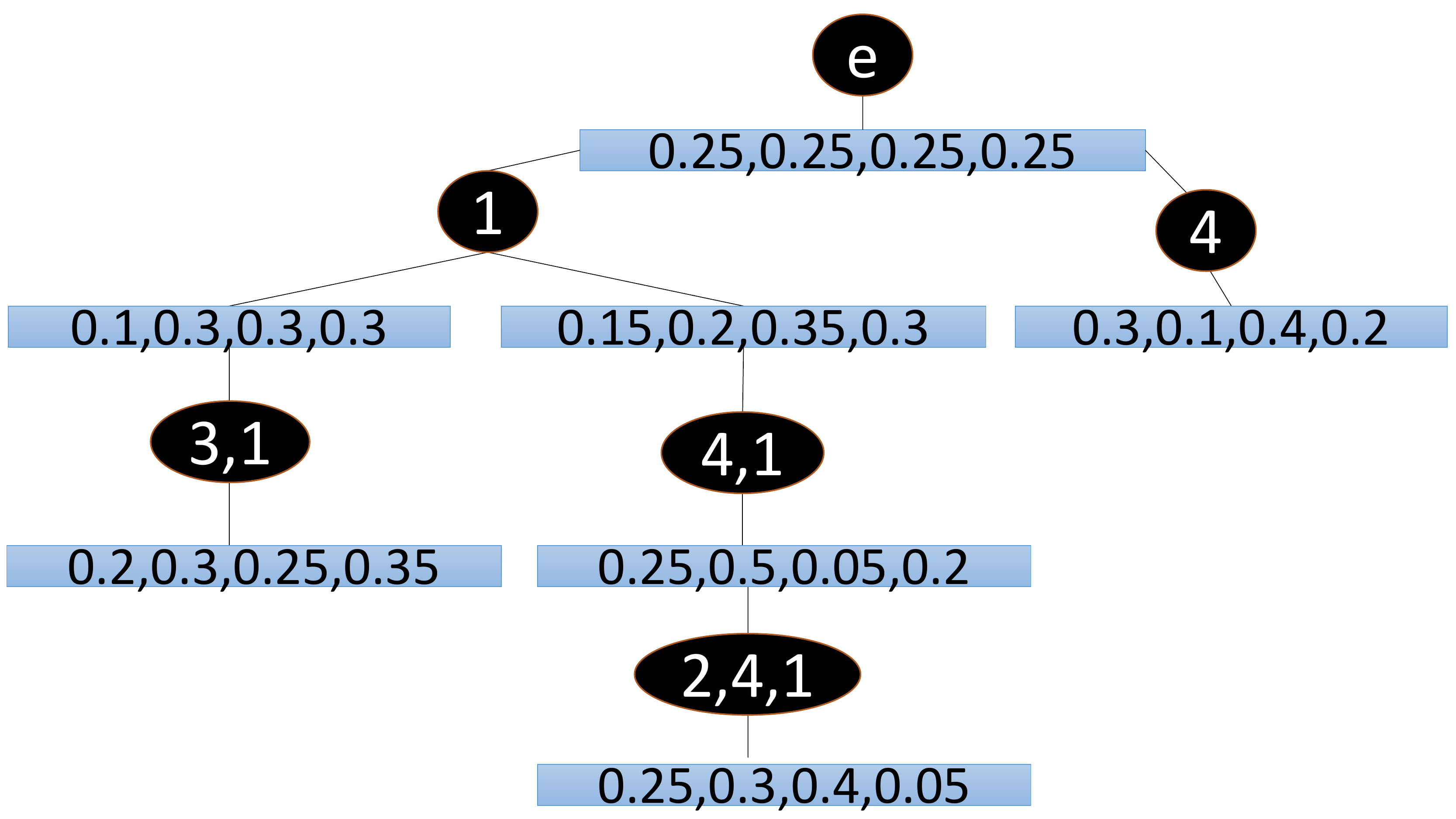}
\caption{The figure describes an example probabilistic suffix tree.  The circles represent suffixes.  In this tree, the suffixes are \{(1), (3,1), (4,1), (2,4,1), (4)\}.  The rectangles show the probability of observing the next symbol given the suffix.  The alphabet, or total number of symbols in the example are \{1,2,3,4\}}
\label{fig:pst}
\end{figure}

The log  likelihood of a set of sequences can easily be computed as $\log(\mathcal{L})=\sum_{s \in S} \log(P(s|X))$, where $s$ are all the symbols appearing in the data and $X$ maps  the longest suffixes (nodes) available in the tree for each symbol.   For our implementation we learned PSTs  using  the {\it pstree} algorithm from  \citep{pst-R-package}.  The  {\it pstree}  algorithm can take as input multiple parameters, such as the depth of the tree, the number of minimum occurrence of a suffix,  and parametrized tree pruning methods.  To select the best set of parameters we perform model selection using the Modified Akaike Information Criterion (AICc) ${\mathrm  {AICc}}=2k-2\log(\mathcal{L})+{\frac  {2k(k+1)}{n-k-1}}$, where  $\log(\mathcal{L})$ is the log likelihood  as defined earlier and $k$ is the number of  parameters \citep{akaike-1974} .

\subsection{Action Creation}  The second step involves the creation of action models for representing various personas.  An easy way to create such parameterization, is to perturb the passive dynamics  of the crowd PST (a global PST learned from all the data).  Each  perturbed model increases the probability of listening to  the recommendation by a different amount.   While there could be many functions to increase the transition probabilities, in our implementation we did it as follows: 

\begin{equation}
\label{eq:poi-dynamics}
P(s|X,a,\theta) =
\begin{cases}
    P(s|X) ^{1/ \theta},       & \text{if } a = s\\
    P(s|X)/z(\theta),                     & \text{otherwise}
\end{cases}
 \end{equation}
 where  $s$  is an activity, $X=(s_1, s_2 \dots s_t)$  a history of activities, and  $z(\theta)=\frac{\sum_{s \neq a} P(s|X)} {1-P(s=a|X) ^{1/ \theta}}$ is a normalizing factor.

\subsection{Markov Decision Processes}\label{sec:psrl-mdp}  The third step is to create MDPs from $P(s|X,a,\theta)$ and compute their policies in the fourth step.  It is straightforward to use the PST to compute an MDP model, where  the states/contexts are all the nodes of the PST. If we denote  $x$ to be a  suffix available in the tree, then we can compute the probability of transitioning from  every node to every other node by finding resulting suffixes in the tree for every additional symbol that an action can produce:
$$p(x'|x,a,\theta)=\sum_{s \in S}  \one{x'=\text{pst.suffix}(x,s)} p(s|x,a,\theta),$$ where $\text{pst.suffix}(x,s)$, is the  longest suffix in the PST of suffix $x$ concatenated with symbol $s$.
 We set the reward $r(x)=f(x,a)$, where $f$ is a function of the suffix history and the recommendation.  This gives us a finite and practically small state space.  We can use the classic {\it policy iteration} algorithm to compute the optimal policies and value functions $V_\theta^*(x) $.

\subsection{Posterior Sampling for Reinforcement Learning}\label{sec:psrl}

The fifth step is to use on-line learning to compute the true user parameters.  For this we used a posterior sampling for reinforcement learning (PSRL) algorithm called deterministic schedule PSRL (DS-PSRL) \cite{,DBLP:conf/nips/TheocharousWAV18}. The DS-PSRL algorithm shown in  Figure~\ref{alg:lazy} changes the policy in an exponentially rare fashion; if the length of the current episode is $L$, the next episode would be $2L$. This switching policy ensures that the total number of switches is $O(\log T)$.

\begin{figure}[ht]
\begin{center}
\framebox{\parbox{8cm}{
\begin{algorithmic}
\STATE {\bf Inputs}: $P_1$, the prior distribution of $\theta_*$.
\STATE $L \leftarrow 1$.

\FOR{$t\gets 1,2,\dots$}
\IF{$t  = L $}
\STATE Sample $\TTh_{t}\sim P_t$.

\STATE $L \leftarrow 2L$.

\ELSE
\STATE $\TTh_{t} \leftarrow \TTh_{t-1}$.
\ENDIF
\STATE Calculate near-optimal action $a_t \leftarrow \pi^*(x_t, \TTh_t)$.
\STATE Execute action $a_t$ and observe the new state $x_{t+1}$.
\STATE Update $P_t$ with $(x_t,a_t,x_{t+1})$ to obtain $P_{t+1}$.
\ENDFOR
\end{algorithmic}
}}
\end{center}
\caption{The DS-PSRL algorithm with deterministic schedule  of policy updates.}
\label{alg:lazy}
\end{figure}

The algorithm makes three  assumptions. First is assumes assume that MDP is weakly communicating. This is a standard assumption and under this assumption, the optimal average loss satisfies the Bellman equation. Second, it assumes that the dynamics are parametrized by a scalar parameter and satisfy a smoothness condition.

\begin{ass}[Lipschitz Dynamics]
\label{ass:lipschitz}
There exist a constant $C$ such that for any state $x$ and action $a$ and parameters $\theta,\theta'\in \Theta \subseteq \Re$,
\[
\norm{P(.|x,a,\theta) - P(.|x,a,\theta')}_1 \le C \abs{\theta-\theta'} \;.
\]
\end{ass}
Third, it makes a concentrating posterior assumption, which states that the variance of the difference between the true parameter and the sampled parameter gets smaller as more samples are gathered.
\begin{ass}[Concentrating Posterior]
\label{ass:concentrating}
Let $N_{j}$ be one plus the number of steps  in the first $j$ episodes. 
Let $\TTh_{j}$ be sampled from the posterior at the current episode $j$. Then there exists a constant $C'$ such that 
\[
\max_{j} \EE{ N_{j-1} \abs{\theta_{*} - \TTh_{j}}^2 } \le C' \log T \;.
\]
\end{ass}
The  \ref{ass:concentrating} assumption simply says the variance of  posterior decreases given more data.  In other words, we assume that the problem is learnable and not a degenerate case.    \ref{ass:concentrating}  was  actually shown to hold for two general categories of problems, finite MDPs and linearly parametrized problems with Gaussian noise \cite{Abbasi-Yadkori-Szepesvari-2015}. Under these assumptions we the following theorem can be proven~\cite{DBLP:conf/nips/TheocharousWAV18}.
\begin{thm}
\label{thm:main}
Under Assumption~\ref{ass:lipschitz} and \ref{ass:concentrating},
the regret of the DS-PSRL algorithm is bounded as 
\[
R_T = \widetilde{O}(C \sqrt{C' T}),
\]
where the $\widetilde{O}$ notation hides logarithmic factors.
\end{thm}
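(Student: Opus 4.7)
The plan is to decompose the regret in the standard PSRL manner and then exploit the two assumptions together with the doubling schedule. Write $R_T = \sum_{j} \sum_{t \in \text{episode } j} [\rho^*(\theta_*) - r_t]$, where $\rho^*(\theta)$ is the optimal average reward. Using the Bellman equation for the weakly communicating MDP with sampled parameter $\widetilde\theta_j$, the per-step instantaneous regret can be rewritten as a term involving $\rho^*(\theta_*) - \rho^*(\widetilde\theta_j)$ plus a one-step Bellman error for $\widetilde\theta_j$ versus the realized transition, plus a telescoping difference of bias functions. The crucial posterior-sampling trick is that, conditional on the history at the start of episode $j$, the sampled $\widetilde\theta_j$ and the true $\theta_*$ are identically distributed, so $\mathbb{E}[\rho^*(\widetilde\theta_j)] = \mathbb{E}[\rho^*(\theta_*)]$ and this leading term vanishes in expectation.

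Next I would bound the remaining Bellman-error term. Here Assumption~\ref{ass:lipschitz} enters: the one-step error in expected bias when transitioning under $P(\cdot|x,a,\theta_*)$ rather than $P(\cdot|x,a,\widetilde\theta_j)$ is bounded by $\operatorname{sp}(h^*_{\widetilde\theta_j}) \cdot \|P(\cdot|x,a,\theta_*) - P(\cdot|x,a,\widetilde\theta_j)\|_1 \le D \cdot C|\theta_* - \widetilde\theta_j|$, where $D$ is a bound on the span of the optimal bias (finite by weak communication). Summing over the $L_j$ steps of episode $j$ gives an episodic regret contribution of order $L_j \cdot C |\theta_* - \widetilde\theta_j|$ plus a martingale-difference sequence coming from realized versus expected next-state bias values, which is controlled by Azuma--Hoeffding at rate $\widetilde O(\sqrt{T})$.

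To combine across episodes I would use Cauchy--Schwarz: with $J = O(\log T)$ the total number of episodes induced by the deterministic doubling schedule,
\begin{equation*}
\sum_{j=1}^{J} L_j \cdot C\,|\theta_* - \widetilde\theta_j| \;\le\; C\Bigl(\sum_{j=1}^{J} \tfrac{L_j^2}{N_{j-1}}\Bigr)^{1/2} \Bigl(\sum_{j=1}^{J} N_{j-1}\,|\theta_* - \widetilde\theta_j|^2\Bigr)^{1/2}.
\end{equation*}
The doubling schedule ensures $L_j \le 2 N_{j-1}$, so $\sum_j L_j^2/N_{j-1} \le 2 \sum_j L_j \le 2T$. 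Taking expectations and invoking Assumption~\ref{ass:concentrating} on each term gives $\mathbb{E}\bigl[\sum_j N_{j-1}|\theta_* - \widetilde\theta_j|^2\bigr] \le J\,C'\log T = O(C'\log^2 T)$. Combining yields $\mathbb{E}[R_T] = O\bigl(C\sqrt{C'T}\log T\bigr) = \widetilde O(C\sqrt{C'T})$, as claimed.

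The main obstacle I expect is handling the interaction between the deterministic (non-stopping-time) schedule and the posterior-sampling identity: since switches happen at fixed times rather than at hitting times, one must verify that the identity $\mathbb{E}[\rho^*(\widetilde\theta_j)\mid \mathcal{F}_{t_j}] = \mathbb{E}[\rho^*(\theta_*)\mid \mathcal{F}_{t_j}]$ still applies at each deterministic switch, which is fine but needs care. A secondary subtlety is bounding the span $\operatorname{sp}(h^*_{\widetilde\theta_j})$ uniformly in $\theta$; this typically requires either an additional diameter-type assumption on the parametric family or absorbing the span into the constant hidden by the $\widetilde{O}$ notation. The martingale-concentration of the Bellman noise is routine but must be shown to be dominated by the $\sqrt{T}$ term rather than inflating the bound.
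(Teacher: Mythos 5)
Your proposal is essentially the argument used for this theorem: the paper itself gives no inline proof (it defers to the cited DS-PSRL reference), and that reference proceeds exactly as you do --- average-reward Bellman decomposition, the posterior-sampling exchangeability identity applied at the deterministic switch times, a Lipschitz-times-span bound on the one-step transition error with an Azuma term for the Bellman noise, and Cauchy--Schwarz over the $O(\log T)$ doubling episodes combined with the concentrating-posterior assumption. Your two flagged caveats are also handled there the way you anticipate: the deterministic schedule is precisely what makes the posterior-sampling identity unproblematic, and the span of the bias function is bounded by an explicit constant (an $H$-type quantity) that the survey's $\widetilde{O}$ statement absorbs into the hidden factors.
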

Notice that the regret bound in Theorem~\ref{thm:main} does not directly depend on $S$ or $A$.
Moreover, notice that the regret bound is smaller if the Lipschitz constant $C$ is smaller or the posterior concentrates faster (i.e. $C'$ is smaller).

\subsection{Satisfying the Assumptions}
Here we summarize how the parameterization assumption in Equation \ref{eq:poi-dynamics} satisfies assumptions \ref{ass:lipschitz} and \ref{ass:concentrating}.

\paragraph{Lipschitz Dynamics} We can show that the dynamics are Lipschitz continuous:
\begin{lemma}
\label{lemma:poi-lipschitz}
(Lipschitz Continuity)  Assume the dynamics are given by Equation \ref{eq:poi-dynamics}.  Then for all
 $\theta, \theta' \geq 1$ and all $X$ and $a$, we have 

 \[
 \| P(\cdot|X,a,\theta) - P(\cdot|X,a,\theta') \|_1 \leq \frac{2}{e} |\theta -\theta'|.
 \]
\end{lemma}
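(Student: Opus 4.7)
The plan is to first collapse the $L_1$ norm into a scalar expression involving only the action $a$, and then obtain the constant $2/e$ by a single-variable calculus argument. Let $p \coloneqq P(a|X)$. By Equation~\eqref{eq:poi-dynamics}, the mass placed on $a$ is $p^{1/\theta}$, while the remaining mass $1 - p^{1/\theta}$ is distributed over $s\neq a$ proportionally to $P(s|X)$. Hence for every $s\neq a$, the difference $P(s|X,a,\theta)-P(s|X,a,\theta')$ equals $P(s|X)$ times the common factor $\bigl(1/z(\theta)-1/z(\theta')\bigr)$, so these differences all share one sign. This means the $L_1$ norm can be split as
\begin{equation*}
\|P(\cdot|X,a,\theta)-P(\cdot|X,a,\theta')\|_1 = \bigl|p^{1/\theta}-p^{1/\theta'}\bigr| + \Bigl|\sum_{s\neq a}\bigl(P(s|X,a,\theta)-P(s|X,a,\theta')\bigr)\Bigr|,
\end{equation*}
and because the non-$a$ masses total $1-p^{1/\theta}$ and $1-p^{1/\theta'}$ respectively, the second term also equals $|p^{1/\theta}-p^{1/\theta'}|$. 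Consequently the whole norm reduces to $2|p^{1/\theta}-p^{1/\theta'}|$.

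The second step is to bound $|p^{1/\theta}-p^{1/\theta'}|$ as a function of $|\theta-\theta'|$, uniformly in $p\in[0,1]$ and $\theta,\theta'\ge 1$. Set $g(\theta)\coloneqq p^{1/\theta}$, which is smooth on $[1,\infty)$ with derivative
\begin{equation*}
g'(\theta) = -\frac{p^{1/\theta}\,\ln p}{\theta^2}.
\end{equation*}
By the mean value theorem it suffices to bound $|g'(\theta)|$ uniformly. I would then maximize $h(p)\coloneqq -p^{1/\theta}\ln p$ over $p\in(0,1)$: differentiating gives the unique interior critical point $p^\star = e^{-\theta}$, with $h(p^\star) = \theta/e$. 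Combined with $\theta^2\ge \theta$ for $\theta\ge 1$, this yields $|g'(\theta)|\le 1/(e\theta)\le 1/e$. Applying MVT gives $|p^{1/\theta}-p^{1/\theta'}|\le (1/e)|\theta-\theta'|$, and multiplying by the factor of $2$ from the first step produces the claimed constant $2/e$.

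I expect no single step to be a serious obstacle; the one place that requires care is the $L_1$ splitting argument, where one must verify that all $s\neq a$ contributions share a sign so that the sum of absolute values equals the absolute value of the sum (otherwise the overall bound could be larger). Once that is in place, the tight constant $2/e$ arises naturally from the elementary fact that $\max_{p\in(0,1)}\bigl(-p^{1/\theta}\ln p\bigr)$ is achieved at $p=e^{-\theta}$ and equals $\theta/e$, which exactly cancels one factor of $\theta$ in $g'(\theta)$.
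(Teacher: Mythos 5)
Your proof is correct: the $L_1$ norm does collapse to $2\bigl|p^{1/\theta}-p^{1/\theta'}\bigr|$ because the off-$a$ differences share a common sign and sum to the negative of the on-$a$ difference, and the mean value theorem with $\max_{p\in(0,1)}\bigl(-p^{1/\theta}\ln p\bigr)=\theta/e$ gives the uniform derivative bound $1/(e\theta)\le 1/e$ for $\theta\ge 1$. The paper itself defers this proof to the cited DS-PSRL reference, and your argument is essentially the same one used there (reduce to the scalar function $\theta\mapsto p^{1/\theta}$, bound its derivative via the elementary maximum of $q\ln(1/q)$ at $q=e^{-1}$), so there is nothing to add beyond noting that the degenerate cases $p\in\{0,1\}$ are trivially handled since both distributions then coincide.
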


\paragraph{Concentrating Posterior} we can also show that Assumption~\ref{ass:concentrating} holds. Specifically, we can show that under mild technical conditions, we have
\[
\max_j \EE{ N_{j-1} \abs{\theta_{*} - \TTh_{j}}^2 } =O(1)
\]
Please refer to~\cite{DBLP:conf/nips/TheocharousWAV18} for the proofs.

\section{Optimizing for Recommendation Acceptance}
\label{sec:acceptance}
Accepting recommendations needs deeper consideration than simply predicting click through probability of an offer.  In this section we examine two acceptance factors, the `propensity to listen' and `recommendations fatigue'.   The `propensity to listen' is a byproduct of the passive data solution shown in Section~\ref{sec:passive}.  The `recommendation fatigue' is a problem where people may quickly stop paying attention to recommendations such as ads, if they are presented too often.  The property of RL algorithms for solving delayed reward problems gives a natural solution to this fundamental marketing problem.  For example, if the decision was to recommend, or not some product every day, where the final goal would be to buy at some point in time, then RL would naturally optimize the right sending schedule and thus avoid fatigue.
In this section we present experimental results for a Point-of-Interest (POI) recommendation system that solves both, the `propensity to listen' as well as `recommendation fatigue' problems \cite{DBLP:conf/iui/TheocharousVW17}.

We experimented with a points of interest domain.  For  experiments  we used the Yahoo! Flicker Creative Commons 100M (YFCC100M)~\cite{Thomee:2016:YND:2886013.2812802}, which consists of 100M Flickr photos and videos. This dataset also comprises the meta information regarding the photos, such as the date/time taken, geo-location coordinates and accuracy of these geo-location coordinates. The geo-location accuracy range from world level (least accurate) to street level (most accurate).  We used  location sequences that were mapped to POIs  near Melbourne Australia\footnote{The data and pre-processing algorithms are publicly available on https://github.com/arongdari/flickr-photo}.  After preprocessing, and removing loops, we had  7246 trajectories and 88 POIs.   

%
%
%

We trained a PST using the data and performed various experiments to test the ability of our algorithm to quickly optimize the cumulative  reward for a given user.  We used $\theta=\{1,10,20\}$ and did experiments  assuming the true user to  be any of those $\theta$.  For reward, we used a signal between $[0,1]$ indicating the frequency/desirability  of the POIs.  We computed the frequency from the data.  The action space was  a recommendation for each POI (88 POIs), plus a null action.  All actions but the null action had a cost $0.2$ of the reward.  Recommending a POI that was already seen (e.g in the current suffix) had an additional cost of $0.4$  of the reward.   This was done in order to reduce the number of recommendations otherwise called the fatigue factor.  We compared DS-PSRL  with greedy policies.  Greedy policies do not solve the underlying MDP but rather choose the action with maximum immediate reward, which is equivalent to the classic Thompson sampling for contextual bandits.  PSRL could also be thought of as Thompson sampling for MDPs.  We also compared with the optimal solution, which is the one that knows the true model from the beginning.  Our experiments are shown in Tables \ref{tab:reward} and \ref{tab:fatigue} and Figure \ref{fig:exp}.  DS-PSRL  quickly learns to optimize the average reward. At the same time it produces more reward than the greedy approach, while minimizing the fatigue factor.

\begin{table}[!htb]
\vspace{0pt}\centering%
\begin{tabular}{r r l}
& {\small \textbf{GREEDY}}
& {\small \textbf{MDP}} \\
\toprule
$*$ & 0.45 & 0.5  \\
TS  & 0.32 & 0.42 \\
\bottomrule
\\
\end{tabular}
\caption{Average reward comparisons between Thomson sampling and the optimal policy in hindsight denoted by $*$. The columns label indicate the type of policies being used.}
\label{tab:reward}
\end{table}

\begin{table}[!htb]
\vspace{0pt}\centering%
\begin{tabular}{r r l}
Time & {\small \textbf{GREEDY}}
& {\small \textbf{MDP}} \\
\toprule
1 & 72 & 72 \\
2 & 71 & 51 \\
3 & 72 &  0 \\
4 & 71 & 72 \\
5 & 72 & 51 \\
6 & 71 &  0 \\
\bottomrule
\\
\end{tabular}
\caption{The table shows the actions taken by each algorithm for the first 6 recommendations.  For cost of recommendation of 0.3.  Thompson sampling for MDPs with deterministic switching schedule (or DS-PSRL) does not give recommendation at every  step, and yet achieves higher reward.  In a way it solves  the recommendation fatigue problem.}\label{tab:fatigue}
\end{table}

\begin{figure}
\centering
\includegraphics[width=0.6\textwidth]{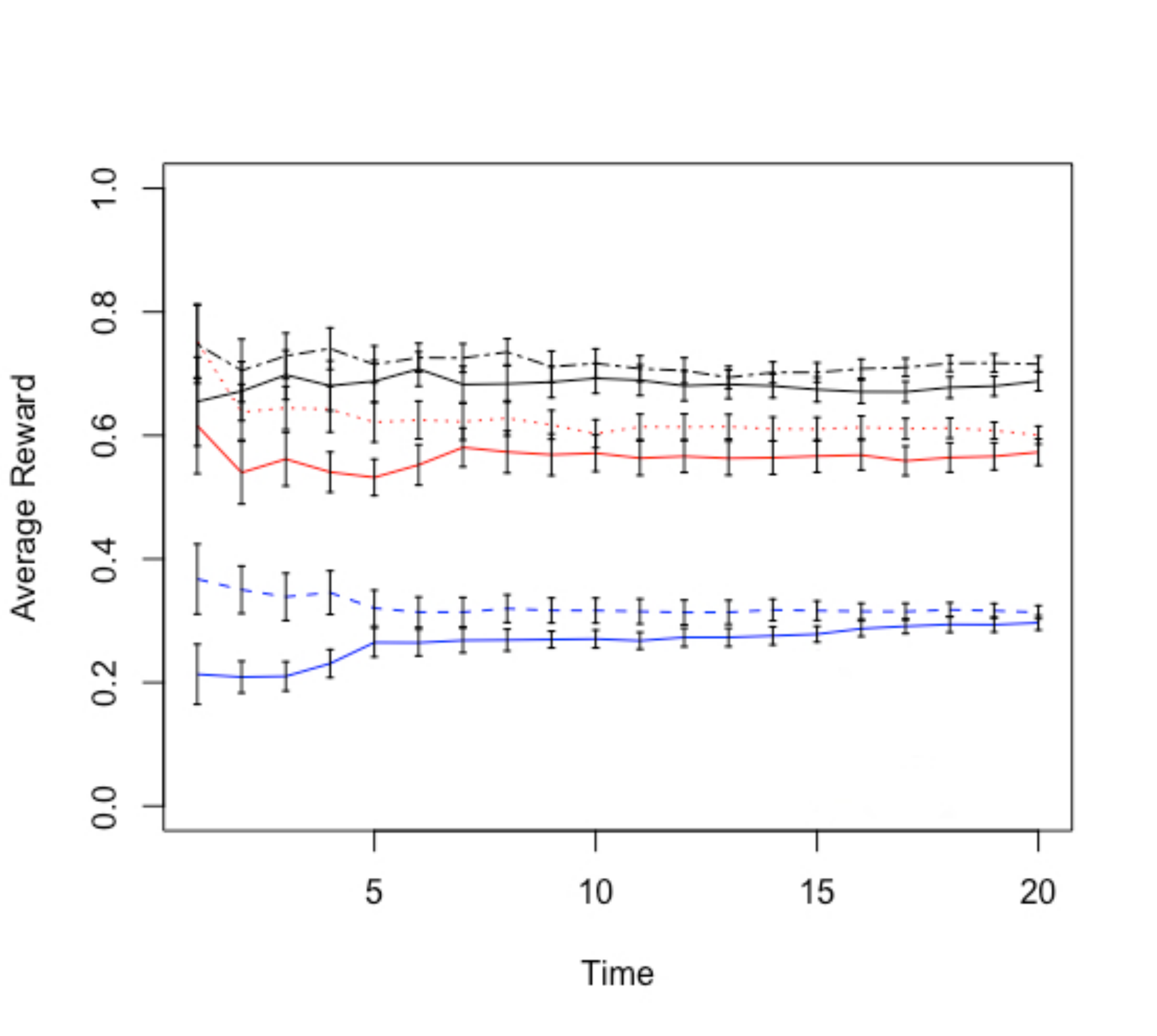}
\caption[ ]{The dotted lines is the performance assuming we knew the true different $\theta$.  DS-PSRL denoted with solid lines learns quickly for different true $\theta$.}
\label{fig:exp}
\end{figure}

\section{Capacity-aware Sequential Recommendations}
\label{sec:constraints}
So far we have considered recommendation systems that consider each user individually, ignoring the collective effects of recommendations. However, ignoring the collective effects could result in diminished utility for the user, for example through overcrowding at high-value points of interest~(POI) considered in Section~\ref{sec:acceptance}. In this section we summarize a solution that can optimize for both latent factors and resource constraints in SR systems~\cite{DBLP:conf/atal/NijsTVWS18}.

To incorporate collective effects in recommendation systems, we extend our model to a multi-agent system with global \emph{capacity constraints}, representing for example, the maximum capacity for visitors at a POI. Furthermore, to handle latent factors, we model each user as a \emph{partially observable} decision problem, where the hidden state factor represents the user's latent interests.

An optimal decision policy for this partially observable problem chooses recommendations that find the best possible trade-off between exploration and exploitation. Unfortunately, both global constraints and partial observability make finding the optimal policy intractable in general. However, we show that the structure of this problem can be exploited, through a novel belief-space sampling algorithm which bounds the size of the state space by a limit on regret incurred from switching from the partially observable model to the most likely fully observable model. We show how to decouple constraint satisfaction from sequential recommendation policies, resulting in algorithms which issue recommendations to thousands of agents while respecting constraints.

\subsection{Model of capacity-aware sequential recommendation problem}
\label{sec:constraints:model}
While PSRL~(Section~\ref{sec:psrl}) eventually converges to the optimal policy, it will never select actions which are not part of the optimal policy for any~$\text{MDP}_{\theta}$, even if this action would immediately reveal the true parameters~$\theta_{\ast}$ to the learner. In order to reason about such information gathering actions, a recommender should explicitly consider the decision-theoretic value of information~\cite{Howard1966}. To do so, we follow~\cite{Chades2012} in modeling such a hidden-model MDP as a Mixed-Observability MDP (MOMDP).

The state space of a MOMDP model factors into a fully observable factor~$x\in X$ and a
partially observable factor~$y\in Y$, each with their own transition
functions,~$T_X(x'\mid x,y,a)$ and $T_Y(y' \mid x,y,a,x')$. An observation
function~$\Omega(o \mid a, y')$ exists to inform the decision maker about transitions
of the hidden factor. However, in addition to the observations, the decision maker also
conditions his policy~$\pi(t,x,o)$ on the observable factor~$x$. Given a parametric
MDP~$\langle \Theta, S, A, \bar{R}, \bar{T}, h \rangle$ over a \emph{finite} set of 
user types~$\Theta$, for example as generated in Section~\ref{sec:psrl-mdp}, we
derive an equivalent MOMDP~$\langle X, Y, A, O, T_X, T_Y, R, \Omega, h \rangle$ having
elements
\begin{equation}%
\begin{aligned}%
X &= S,\: Y = \Theta, &
R(s,\theta,a) &= R_\theta(s,a), &
T_X(s' \mid s, \theta, a) &= T_\theta(s' \mid s, a),
\\
O &= \{ o_{\textsc{null}} \}, & 
\Omega(o_{\textsc{null}} \mid a,\theta') &= 1, &
T_Y(\theta' \mid s,\theta,a,s') &= {\begin{cases} 1 & \text{if } \theta = \theta', \\ 0 & \text{otherwise}. \end{cases}}
\\
\end{aligned}%
\end{equation}%

The model uses the latent factor $Y$ to represent the agent's type, selecting the type-specific transition and reward functions based on its instantiation. Because a user's type does not change over the plan horizon, the model is a `stationary' Mixed-Observability MDP~\cite{Martin2017a}. The observation function~$O$ is uninformative, meaning that there is no direct way to infer a user's type. Intuitively, this means the recommender can only learn a user's type by observing state transitions.

This gives a recommender model for a single user~$i$, out of a total of $n$~users. To model the global capacities at different points of interest, we employ a consumption function~$C$ and limit vector~$L$ defined over $m$~POIs. The consumption of resource type~$r$ is defined using function~$C_{r} : S \times A \rightarrow \{ 0, 1 \}$, where 1 indicates that the user is present at~$r$. The limit function~$L_r$ gives POI~$r$'s peak capacity. The optimal (joint) recommender policy satisfies the constraints \emph{in expectation}, optimizing
\begin{equation}
\max_\pi 
	\mathbb{E} \bigl[ V^{\pi} \bigr] 
	\text{, subject to } 
		\mathbb{E} \bigl[ C^{\pi}_{r,t} \bigr] \leq L_r \quad \forall t, r.
\end{equation}
For multi-agent problems of reasonable size, directly optimizing this joint policy is infeasible. For such models Column Generation~(CG;~\cite{Gilmore1961}) has proven to be an effective algorithm~\cite{deNijs2017,Walraven2018,Yost2000}. Agent planning problems are decoupled by augmenting the optimality criterion of the (single-agent) planning problem with a Lagrangian term pricing the expected resource consumption cost $\mathbb{E}[C^{\pi_i}_{r,t}]$, i.e.,
\begin{equation}\label{eq:cg:plan}
\argmax_{\pi_i}
	{\Bigl( 
		\mathbb{E}[V^{\pi_i}] - 
		\sum_{t,r} \lambda_{t,r} \mathbb{E}[C^{\pi_i}_{r,t}] 
	\Bigr)}\quad\forall i.
\end{equation}
This routine is used to compute a new policy~$\pi_i$ to be added to a set~$Z_i$ of potential policies of agent~$i$. These sets form the search space of the CG LP, which optimizes the current best \emph{joint} mix of policies subject to constraints, by solving:
\begin{equation}\label{eq:cg:solve}
\begin{aligned}
\max_{x_{i,j}}\:\:& \sum_{i=1}^n \sum_{\pi_{i,j} \in Z_i} x_{i,j}\,\mathbb{E}[V^{\pi_{i,j}}], \\
\text{s.t.}\:\:	& \sum_{i=1}^n \sum_{\pi_{i,j} \in Z_i} x_{i,j}\,\mathbb{E}[C^{\pi_{i,j}}_{r,t}] \leq L_r
				& \forall r, t, \\
				& \sum_{\mathclap{\pi_{i,j} \in Z_i}} x_{i,j} = 1,\text{ and } x_{i,j} \geq 0 &\forall i, j.\\
\end{aligned}
\end{equation}
Solving this LP results in: 1) a probability distribution over policies, having agents follow a policy with $\Pr(\pi_i = \pi_{i,j})=x_{i,j}$, and 2) a new set of dual prices~$\lambda'_{t,r}$ to use in the subsequent iteration. This routine stops once $\lambda=\lambda'$, at which point a global optimum is found. 

\subsection{Bounded belief state space planning}
Unfortunately, in every iteration of column generation, we need to find $n$ optimal policies satisfying Equation~\eqref{eq:cg:plan}, which in itself has PSPACE complexity for MOMDPs~\cite{Papadimitriou1987}. Therefore, we propose a heuristic algorithm exploiting the structure of our problems: bounded belief state space planning (Alg.~\ref{alg:boundtree}).

To plan for partially observable MDP models it is convenient to reason over belief states~\citep{Kaelbling1998}. In our case, a belief state~$b$ records a probability distribution over the possible types $\Theta$, with $b(\theta)$ indicating how likely the agent is of type~$\theta$. Given a belief state~$b$, the action taken~$a$, and the observation received~$o$, the subsequent belief state~$b'(\theta)$ can be derived using application of Bayes' theorem. In principle, this belief-state model can be used to compute the optimal policy, but the exponential size of $B$ prohibits this. Therefore, approximation algorithms generally focus on a subset of the space~$B'$.
\begin{algorithm}[tb]
\begin{algorithmic}[1]
\STATE Given parametric MDP~$\langle \Theta, S, A, \bar{R}, \bar{T}, h \rangle$ and
approximate belief space~$B'$\label{algline:sample}
\STATE Plan $\pi_j^{\ast}$ for all $j$ \label{algline:planmdp}
\STATE Compute $V_{\theta_i,\pi_j^{\ast}}$ for all $i$, $j$ \label{algline:evalmdp}
\STATE Create policy $\pi[b]$
\FOR{time $t = h \to 1$}
  \FOR{belief point $b \in B'(t)$}
	\STATE $V[b] = -\infty$
    \FOR{action $a \in A$}
		\STATE $Q[b,a] = R(b,a)$
		\FOR{observed next state $s' \in S$}
			\STATE $b' = \text{updateBelief}(b, a, s')$
			\IF{$b' \in B'$}
				\STATE $Q[b,a] = Q[b,a] + \Pr(s' \mid b, a) \cdot V[b']$
			\ELSE \label{algline:mdpex1}
				\STATE $j = \argmax_j Q\bigl[b', \pi^{\ast}_j\bigr]$
					\label{algline:mdpex2}
				\STATE $\pi[b'] = \pi_j^{\ast}$\label{ch6:algline:minregretpi}
				\STATE $Q[b,a] = Q[b,a] + \Pr(s' \mid b, a) \cdot \bar{V}\bigl[ b' \bigr]$ \label{algline:mdpex3}
			\ENDIF
		\ENDFOR
		\IF{$Q[b,a] > V[b]$}
			\STATE $V[b] = Q[b,a]$
			\STATE $\pi[b] = a$\label{ch6:algline:bestaction}
		\ENDIF
    \ENDFOR
  \ENDFOR
\ENDFOR
\STATE \Return $\langle \pi, V[b] \rangle$
\label{ch6:algline:brreturn}
\end{algorithmic}
\caption{Bounded belief state space planning~\cite{DBLP:conf/atal/NijsTVWS18}.}
\label{alg:boundtree}
\end{algorithm}

When computing a policy~$\pi$ for a truncated belief space $B'$ we have to be careful that we compute unbiased consumption expectations~$\mathbb{E}[C_\pi]$, to guarantee feasibility of the Column Generation solution. This can be achieved if we know the exact expected consumption of the policy at each `missing' belief point \emph{not} in $B'$. For corners of the belief space, where $b(\theta_i) = 1$ (and $b(\theta_j) = 0$ for $i \neq j$), the fact that agent types are stationary ensures that the optimal continuation is the optimal policy for the $\text{MDP}_{\theta_i}$. If we use the same policy in a non-corner belief, policy~$\pi^{\ast}_i$ may instead be applied on a different $\text{MDP}_{\theta_j}$, with probability~$b(\theta_j)$. In general, the expected value of choosing policy~$\pi^{\ast}_i$ in belief point~$\langle t,s,b \rangle$ is
\begin{equation}
Q\bigl[\langle t,s,b \rangle, \pi^{\ast}_i\bigr] = \sum_{j = 1}^{|\Theta|}
	\Bigl( b(\theta_j) \cdot V^{\theta_j}_{\pi^{\ast}_i}[t,s] \Bigr).
\end{equation}
For belief points close to corner~$i$, policy~$\pi^{\ast}_i$ will be the optimal policy with high probability. If we take care to construct~$B'$ such that truncated points are close to corners, we can limit our search to the optimal policies of each type,
\begin{equation}\label{eqn:ch6:approxpol}
\bar{V}\bigl[\langle t,s,b \rangle\bigr] =
	\max_{\theta_i \in \Theta}
		Q\bigl[\langle t,s,b \rangle, \pi^{\ast}_i\bigr].
\end{equation}

When we apply policy~$\pi^{\ast}_i$ in a belief point that is not a corner, we incur regret proportional to the amount of value lost from getting the type wrong. Policy~$\pi_i^{\ast}$ applied to $\text{MDP}_{\theta_j}$ obtains expected value~$V^{\theta_j}_{\pi_i^{\ast}} \leq V^{\theta_j}_{\pi_j^{\ast}}$ by definition of optimality. Thus, the use of policy $\pi_i^{\ast}$ in belief point $b$ incurs a regret of
\begin{equation}\label{eq:regret}
\textsc{regret}(b) = \min_i \textsc{regret}(b, i) = \min_i \sum_{j = 1}^{|\Theta|}
	\biggl( b(\theta_j) \cdot
			\Bigl( V^{\theta_j}_{\pi_j^{\ast}} - V^{\theta_j}_{\pi_i^{\ast}} \Bigr)
	\biggr).
\end{equation}
This regret function can serve as a scoring rule for belief points worth considering in belief space $B'$. Let~$\mathrm{P}(b)$ stand for the probability of belief point~$b$, then we generate all subsequent belief points from initial belief~$b_0$ that meet a threshold (for hyper-parameters minimum probability~$p$ and shape~$\alpha$):
\begin{equation}\label{eq:boundedregret}
b \in B' \text{ if } \textsc{regret}(b) >
	\bigl( e^{-\alpha(\mathrm{P}(b)-p)} - e^{-\alpha(1-p)} \bigr) 
		\cdot \textsc{regret}(b_0).
\end{equation}

Algorithm~\ref{alg:boundtree} starts by computing the optimal MDP policy~$\pi_j^{\ast}$ for each type~$\theta_j$, followed by determining the exact expected values~$V^{\theta_i}_{\pi_j^{\ast}}$ of applying these policies to all different user types $\theta_i$. The remainder of the algorithm computes expected values at each belief point in regret-truncated space~$B'$, according to the typical dynamic programming recursion. However, in case of a missing point~$b'$, the best policy~$\pi_{j}^{\ast}$ is instead selected (line~\ref{algline:mdpex2}), and the expected value of using this MDP policy is computed according to the belief state. The resulting policy~$\pi$ thus consists of two stages: the maximally valued action stored in~$\pi[b]$ is selected, unless $b \notin B'$, at which point MDP policy~$\pi_j^{\ast}$ replaces~$\pi$ for the remaining steps.

\subsection{Empirical evaluation of scalability versus quality}
By bounding the exponential growth of the state space, Algorithm~\ref{alg:boundtree} trades off solution quality for scalability. To assess this trade-off, we perform an experiment on the POI recommendation problem introduced in Section~\ref{sec:acceptance}. We compare with the highly scalable PSRL on the one hand, and state-of-the-art mixed-observability MDP planner SARSOP~\cite{Kurniawata2008} on the other. We consider a problem consisting of 5~POIs, 3~user types, 50~users and PST depth~1. For this experiment we measure the quality of the computed policy as the mean over 1,000~simulations per instance, solving~$5$~instances per setting of the horizon. We consider two settings, the regular single recommendation case, and a dual recommendation case where the recommender is allowed to give an alternative to the main recommendation, which may provide more opportunities to gather information in each step.

\begin{figure}[hb]
\centering
\begin{tikzpicture}[scale=0.96,transform shape,every node/.style={inner sep=0}]
\node at (0,0) {\includegraphics{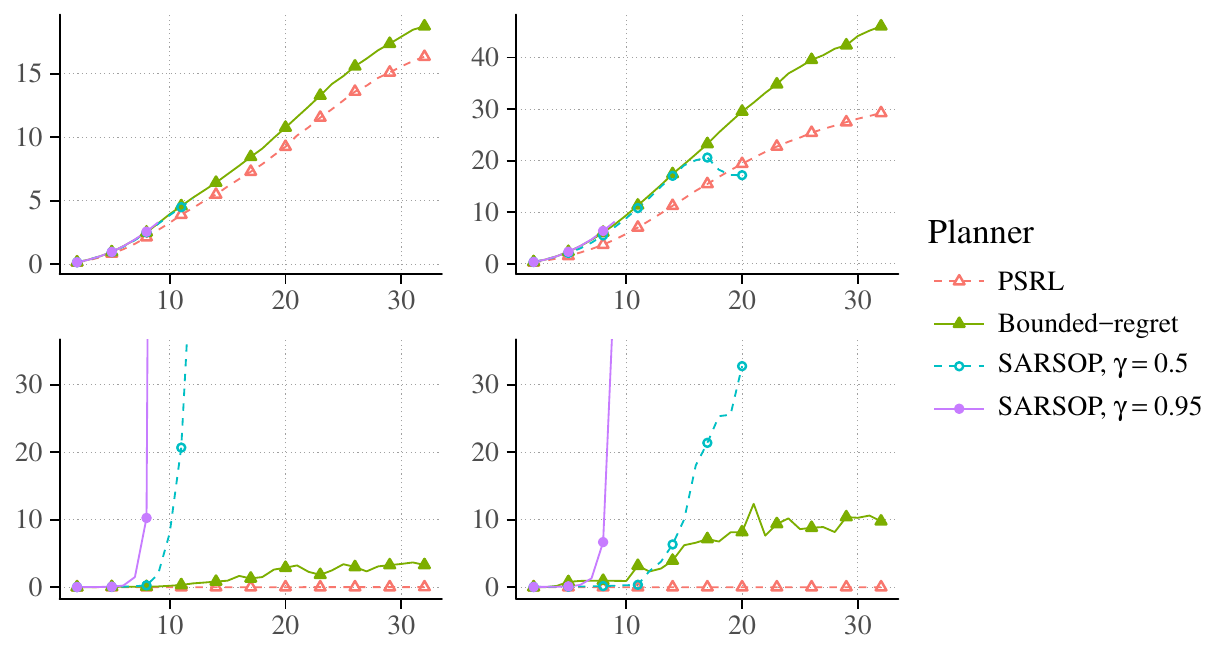}};

\node[anchor=base] at ( -3.55, 3.55)  {\small\textbf{Single} recommendation};
\node[anchor=base] at (  1.10, 3.55)  {\small\textbf{Dual} recommendations};

\node[anchor=base] at ( -3.35, -3.8)  {Horizon ($h$)};
\node[anchor=base] at (  1.30, -3.8)  {Horizon ($h$)};

\node[anchor=base,rotate=90] at ( -6.38,  1.82)  {Mean value};
\node[anchor=base,rotate=90] at ( -6.38, -1.45)  {Runtime (m)};
\end{tikzpicture}
\caption{Solution quality and planning time of the different sequential recommendation
planners, as a function of the horizon.}
\label{fig:performance}
\end{figure}
Figure~\ref{fig:performance} presents the results. The top row presents the observed mean reward, while the bottom row presents the required planning time in minutes. We observe that for our constrained finite-horizon problems, SARSOP quickly becomes intractable, even when the discount factor is set very low. However, by not optimizing for information value, PSRL obtains significantly lower lifetime value. Our algorithm finds policies which do maximize information value, while at the same time remaining tractable through its effective bounding condition on the state space growth. We note that its runtime stops increasing significantly beyond~$h=20$, as a result of the bounded growth of the state space.

\section{Large Action Spaces}
\label{sec:large-actions}
In many real-world recommendation systems the number of actions could be prohibitively large.  Netflix for example employs  a few thousands  of movie recommendations.   For SR systems the difficulty is even more severe, since the search space grows exponentially with the planning horizon.  In this section we show how to learn action embeddings for action generalization.  Most model-free reinforcement learning methods leverage state representations (embeddings) for generalization, but either ignore structure in the space of actions or assume the structure is provided \emph{a priori}. We show how a policy can be decomposed into a component that acts in a low-dimensional space of action representations and a component that transforms these representations into actual actions. These representations improve generalization over large, finite action sets by allowing the agent to infer the outcomes of actions similar to actions already taken. We provide an algorithm to both learn and use action representations and provide conditions for its convergence. The efficacy of the proposed method is demonstrated on large-scale real-world problems 
\cite{DBLP:conf/icml/ChandakTKJT19}.

\begin{figure}[h]
    \centering
	\includegraphics[scale=0.17]{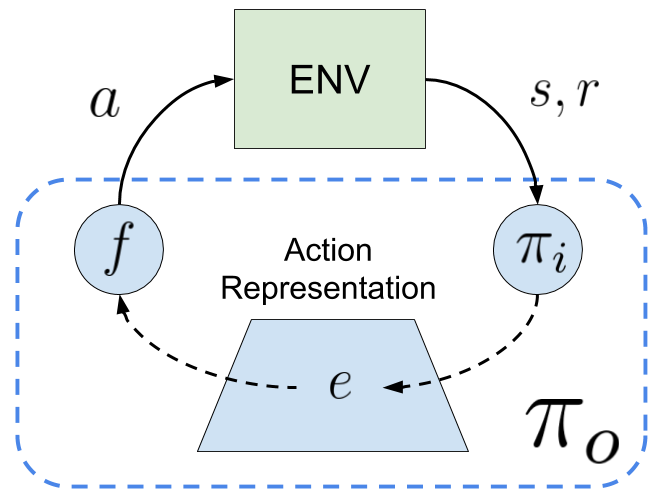}
	\quad\quad\quad\quad\quad
\includegraphics[width=0.45\textwidth]{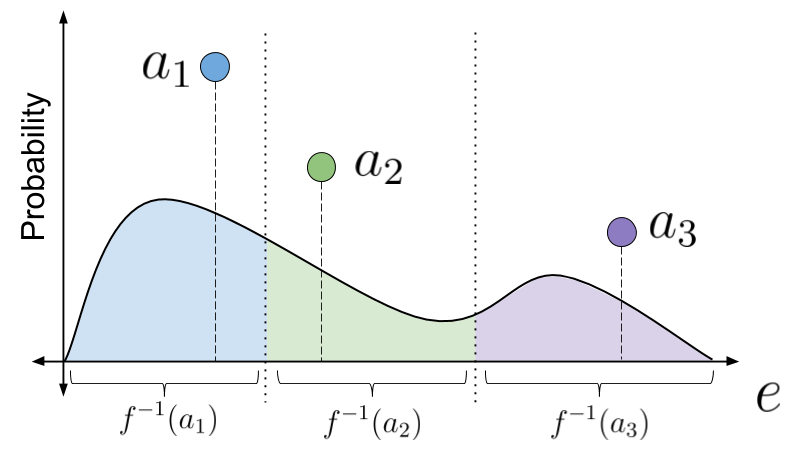}

	\caption{ 
	(Left) The structure of the proposed overall policy, $\pi_o$, consisting of $f$ and $\pi_i$, that learns action representations to generalize over large action sets. (Right) Illustration of the  probability induced for three actions by the probability density of $\pi_i(e|s)$ on a $1$-D embedding space. The $x$-axis represents the embedding, $e$, and the $y$-axis represents the probability. The colored regions represent the mapping $a=f(e)$, where each color is associated with a specific action.
	}
	\label{Fig:execution}
\end{figure}

    \subsection{Generalization over Actions}

The benefits of capturing the structure in the underlying state space of MDPs is a well understood and a widely used concept in RL.
State representations allow the policy to generalize across states.
Similarly, there often exists additional structure in the space of actions that can be leveraged.
We hypothesize that exploiting this structure can enable quick generalization across actions, thereby making learning with large action sets feasible. 
To bridge the gap, we introduce an action representation space, $\mathcal{E} \subseteq \mathbb{R}^{ d}$, and consider a factorized policy, $\pi_o$, parameterized by an embedding-to-action mapping function, $f \colon \mathcal{E} \to \mathcal{A}$, and an internal policy, $\pi_i \colon \mathcal{S} \times \mathcal{E} \to [0,1]$, such that the distribution of $A_t$ given $S_t$ is characterized by:
\begin{equation}
        E_t \sim \pi_i(\cdot |S_t), \hspace{2cm}  A_t = f(E_t). \label{eqn:decomposed-policy}
\end{equation}
Here, $\pi_i$ is used to sample $E_t \in \mathcal{E}$, and the function $f$ deterministically maps this representation to an action in the set $\mathcal{A}$.
Both these components together form an \textit{overall policy}, $\pi_o$.
Figure \ref{Fig:execution} (Right) illustrates the probability of each action under such a parameterization.
With a slight abuse of notation, we use $f^{-1}(a)$ as a one-to-many function that denotes the set of representations  that are mapped to the action $a$ by the function $f$, i.e., $f^{-1}(a) \coloneqq \{e\in\mathcal E:f(e)=a\}$. 
%

%
In the following sections we discuss the existence of an optimal policy $\pi_o^*$ and the learning procedure for $\pi_o$.
To elucidate the steps involved, we split it into four parts. 
First, we show that there exist $f$ and $\pi_i$ such that $\pi_o$ is an optimal policy.
Then we present the supervised learning process for the function $f$ when $\pi_i$ is fixed.
Next we give the policy gradient learning process for $\pi_i$ when $f$ is fixed.
Finally, we combine these methods to learn $f$ and $\pi_i$ simultaneously.

\subsection{Existence of $\pi_i$ and $f$ to Represent an Optimal Policy} 
   
%
In this section, we aim to establish a condition under which $\pi_o$ can represent an optimal policy. 
Consequently, we then define the optimal set of $\pi_o$ and $\pi_i$ using the proposed parameterization.    
To establish the main results we begin with the necessary assumptions.

%
The characteristics of the actions can be naturally associated with how they influence state transitions.
In order to learn a representation for actions that captures this structure, we consider a standard Markov property, often used for learning probabilistic graphical models \cite{ghahramani2001introduction}, and make the following assumption that the transition information can be sufficiently encoded to infer the action that was executed. 
\begin{ass}
\label{ass:A1}
Given an embedding $E_t$, $A_t$ is conditionally independent of $S_t$ and $S_{t+1}$:
{\small
$$
    P(A_t|S_t,S_{t+1}) =\!\!
    \int_{\mathcal{E}} \!\!\!P(A_t|E_t=e) P(E_t=e|S_t,S_{t+1})\,\mathrm{d}e.
$$
}
%
\end{ass}
%
%
%
\begin{ass}
\label{ass:A2}
Given the embedding $E_t$ the action, $A_t$ is deterministic and is represented by a function $f:\mathcal E \to \mathcal A$, i.e., $\exists a \text{ such that } P(A_t=a|E_t=e)=1$.
\end{ass}   
%

%
        %
       %
       
        We now establish a necessary condition under which our proposed policy can represent an optimal policy. 
        This condition will also be useful later when deriving learning rules.
    	\begin{lemma}  
    	    \label{lemma:bellman}
    	    Under Assumptions \eqref{ass:A1}--\eqref{ass:A2}, which defines a function $f$, for all $\pi$, there exists a $\pi_i$ such that
            \begin{align}
                v^\pi(s) = \sum_{a \in \mathcal{A}} \int_{f^{-1}(a)} \pi_i(e|s) q^\pi(s, a)\, \mathrm{d}e. \label{eqn:lemma-1}
            \end{align}
        \end{lemma}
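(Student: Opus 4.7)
The plan is to reduce the claim to the familiar Bellman one-step decomposition
\begin{equation*}
v^\pi(s) = \sum_{a \in \mathcal{A}} \pi(a|s)\, q^\pi(s,a),
\end{equation*}
and then exhibit an embedding-space policy $\pi_i$ whose pushforward through $f$ reproduces $\pi(\cdot|s)$. Assumption~\ref{ass:A2} gives a well-defined deterministic map $f \colon \mathcal{E} \to \mathcal{A}$, so the preimages $\{f^{-1}(a) : a \in \mathcal{A}\}$ form a partition of $\mathcal{E}$. Since $\mathcal{A}$ is finite, the lemma therefore reduces to the following statement: every categorical distribution on $\mathcal{A}$ can be realized as the marginal of some density on $\mathcal{E}$ against this partition.

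I would carry out the construction as follows. For each state $s$ and each action $a$, pick any auxiliary density $\mu_{s,a}$ supported on $f^{-1}(a)$ — for concreteness, a uniform density on a bounded subset of the preimage, or any convenient reference measure restricted to the region — and set
\begin{equation*}
\pi_i(e \mid s) \coloneqq \sum_{a \in \mathcal{A}} \pi(a|s)\, \mu_{s,a}(e)\, \one{e \in f^{-1}(a)}.
\end{equation*}
Because the preimages are pairwise disjoint, $\pi_i(\cdot|s)$ is a non-negative density on $\mathcal{E}$ that integrates to $\sum_a \pi(a|s) = 1$, and by construction $\int_{f^{-1}(a)} \pi_i(e|s)\, \mathrm{d}e = \pi(a|s)$ for every action $a$.

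Plugging this into the right-hand side of the claimed identity yields
\begin{equation*}
\sum_{a \in \mathcal{A}} \int_{f^{-1}(a)} \pi_i(e|s)\, q^\pi(s,a)\, \mathrm{d}e = \sum_{a \in \mathcal{A}} \pi(a|s)\, q^\pi(s,a) = v^\pi(s),
\end{equation*}
which is the desired equation; note the pull of $q^\pi(s,a)$ out of the inner integral is legitimate because $a$ is fixed by the outer sum and $e$ is restricted to $f^{-1}(a)$. The only real technical subtlety is measurability: the preimages $f^{-1}(a)$ must be measurable sets that admit densities. Assumption~\ref{ass:A1} tacitly supplies this structure, since its integral expression is only well-defined when $P(A_t|E_t=e)$ is a measurable function of $e$, and Assumption~\ref{ass:A2} sharpens it to a deterministic $f$, so that each $f^{-1}(a)$ is automatically measurable. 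With those considerations in place the argument is essentially bookkeeping, and I do not expect a genuine obstacle — the lemma really is the observation that any action distribution $\pi(\cdot|s)$ can be lifted to an embedding distribution compatible with the partition induced by $f$.
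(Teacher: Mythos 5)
Your proposal is correct and takes essentially the same route as the paper's proof (which appears in the cited reference \cite{DBLP:conf/icml/ChandakTKJT19}): start from the Bellman decomposition $v^\pi(s)=\sum_{a}\pi(a|s)q^\pi(s,a)$ and use Assumption~\ref{ass:A2} to write $\pi(a|s)=\int_{f^{-1}(a)}\pi_i(e|s)\,\mathrm{d}e$ for a suitable embedding-space density, the only difference being that you construct $\pi_i$ explicitly as a mixture of densities on the preimages while the paper defines it implicitly as the conditional density of $E_t$ given the state. The one point both treatments leave implicit is that each preimage $f^{-1}(a)$ must have positive measure so that it can carry a density, which is tacitly assumed in the paper's parameterization of $\pi_o$.
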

         The proof is available in \cite{DBLP:conf/icml/ChandakTKJT19}.
         Following Lemma \eqref{lemma:bellman}, we use $\pi_i$ and $f$ to define the overall policy as
        \begin{align}
            \pi_o(a|s) &\coloneqq \int_{f^{-1}(a)}\pi_i(e|s)\,\mathrm{d}e.
            \label{eqn:optimal-policy}
        \end{align}

        \begin{thm} Under Assumptions \eqref{ass:A1}--\eqref{ass:A2}, which defines a function $f$,
        there exists an overall policy, $\pi_o$, such that $v^{\pi_o}=v^{\star}$.   \label{thm:optimal-overall-policy}
        \end{thm}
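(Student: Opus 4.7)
The plan is to obtain $\pi_o$ by instantiating Lemma~\ref{lemma:bellman} at an optimal base policy and then recognizing the resulting identity as a Bellman fixed-point equation. Concretely, I would proceed in four steps.

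First, invoke the existence of an optimal deterministic (and in particular stationary) policy $\pi^{\star}$ for the MDP $\mathcal{M}$; this is standard for finite-action discounted MDPs with $\gamma\in[0,1)$ as defined in the Preliminaries. Let $v^{\star}=v^{\pi^{\star}}$ and let $q^{\pi^{\star}}$ denote its associated action-value function.

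Second, apply Lemma~\ref{lemma:bellman} with $\pi=\pi^{\star}$. Under Assumptions~\ref{ass:A1}--\ref{ass:A2}, the lemma yields an internal policy $\pi_i$ such that, for every $s\in\mathcal{S}$,
\begin{equation*}
v^{\star}(s) \;=\; \sum_{a\in\mathcal{A}}\int_{f^{-1}(a)} \pi_i(e\mid s)\, q^{\pi^{\star}}(s,a)\,\mathrm{d}e.
\end{equation*}
Now define the overall policy via Eq.~\eqref{eqn:optimal-policy}, i.e.\ $\pi_o(a\mid s)\coloneqq \int_{f^{-1}(a)}\pi_i(e\mid s)\,\mathrm{d}e$. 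Because $\{f^{-1}(a)\}_{a\in\mathcal A}$ partitions $\mathcal{E}$ (Assumption~\ref{ass:A2} guarantees that $f$ is a well-defined function on $\mathcal{E}$), the integral rewrites cleanly as
\begin{equation*}
v^{\star}(s) \;=\; \sum_{a\in\mathcal{A}} \pi_o(a\mid s)\, q^{\pi^{\star}}(s,a).
\end{equation*}

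Third, recognize the right-hand side as the Bellman policy-evaluation operator $T^{\pi_o}$ applied to $v^{\star}$. Unfolding $q^{\pi^{\star}}(s,a) = \mathcal{R}(s,a)+\gamma\sum_{s'}\mathcal{P}(s'\mid s,a)v^{\star}(s')$, we obtain $(T^{\pi_o} v^{\star})(s)=v^{\star}(s)$ for every $s$, so $v^{\star}$ is a fixed point of $T^{\pi_o}$. Since $T^{\pi_o}$ is a $\gamma$-contraction on the sup-norm and has a unique fixed point equal to $v^{\pi_o}$, we conclude $v^{\pi_o}=v^{\star}$.

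Fourth, collect the pieces: the constructed $\pi_o$ (determined by the $\pi_i$ from the lemma and the fixed $f$) satisfies $v^{\pi_o}=v^{\star}$, establishing the theorem. I do not anticipate a serious technical obstacle here; the only delicate point is justifying the rewriting of the integral in terms of $\pi_o$ (which requires that $f^{-1}(a)$ for distinct $a$ are disjoint and exhaust $\mathcal{E}$, granted by Assumption~\ref{ass:A2}) and invoking contraction uniqueness, which needs $\gamma<1$ as already assumed in the MDP formulation. The heavy lifting is done by Lemma~\ref{lemma:bellman}; this theorem is essentially its corollary via the contraction argument.
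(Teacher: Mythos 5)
Your proposal is correct and follows essentially the same route as the paper: instantiate Lemma~\ref{lemma:bellman} at an optimal policy $\pi^{\star}$, define $\pi_o$ via $\pi_o(a\mid s)=\int_{f^{-1}(a)}\pi_i(e\mid s)\,\mathrm{d}e$, and conclude $v^{\pi_o}=v^{\star}$. The only difference is that you spell out the final inference (recognizing $v^{\star}=T^{\pi_o}v^{\star}$ and invoking uniqueness of the contraction fixed point), a step the paper's proof leaves implicit.
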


        \begin{proof}
        This follows directly from Lemma \ref{lemma:bellman}.
        Because the state and action sets are finite, the rewards are bounded, and $\gamma \in [0,1)$, there exists at least one optimal policy. 
        For any optimal policy $\pi^\star$, the corresponding state-value and  state-action-value functions are the unique $v^\star$ and $q^\star$, respectively.
       By Lemma \ref{lemma:bellman} there exist $f$ and $\pi_i$ such that 
       \begin{align}
           v^\star(s)&=  \sum_{a \in \mathcal{A}}  \int_{f^{-1}(a)}\pi_i(e|s) q^\star(s,a)\,\mathrm{d}e. \label{eqn:optimal-overall-policy}
       \end{align}
   Therefore, there exists $\pi_i$ and $f$, such that the resulting $\pi_o$ has the state-value function $v^{\pi_o}=v^{\star}$, and hence it represents an optimal policy.
        \end{proof}
        Note that Theorem \ref{thm:optimal-overall-policy} establishes existence of an optimal overall policy based on equivalence of the state-value function, but does \emph{not} ensure that all optimal policies can be represented by an overall policy. 
        Using \eqref{eqn:optimal-overall-policy}, we define $\Pi_o^\star \coloneqq \{\pi_o : v^{\pi_o}=v^\star\}$.
        Correspondingly, we define the set of \textit{optimal internal policies} as $\Pi_i^\star \coloneqq \{\pi_i : \exists \pi_o^\star \in \Pi_o^\star,\exists f, \pi_o^\star(a|s) = \int_{f^{-1}(a)}\pi_i(e|s)\,\mathrm{d}e \}$. 
        %
       %
       %
        %
        \subsection{Supervised Learning of $f$ for a Fixed $\pi_i$}    
        \label{section:learn-f}    
        %
        
        Theorem \ref{thm:optimal-overall-policy} shows that there exist $\pi_i$ and a function $f$, which helps in predicting the action responsible for the transition from $S_t$ to $S_{t+1}$, such  that the corresponding overall policy is optimal.  
        %
        %
        However, such a function, $f$, may not be known \emph{a priori}.
        In this section, we present a method to estimate $f$ using data collected from visits with the environment.

        By Assumptions \eqref{ass:A1}--\eqref{ass:A2}, $P(A_t|S_t,S_{t+1})$ can be written in terms of $f$ and $P(E_t|S_t,S_{t+1})$. 
        We propose searching for an estimator, $\hat f$, of $f$ and an estimator, $\hat g(E_t|S_t,S_{t+1})$, of $P(E_t|S_t,S_{t+1})$ such that a reconstruction of $P(A_t|S_t,S_{t+1})$ is accurate. 
        Let this estimate of $P(A_t|S_t,S_{t+1})$ based on $\hat f$ and $\hat g$ be
        {
        \begin{equation}
            \small
            \hat P(A_t|S_t,S_{t+1})  =  \int_\mathcal{E} \!\! \hat f (A_t|E_t\!=\!e) \hat g(E_t\!=\!e|S_t,S_{t+1})\,\mbox{d}e \label{eqn:action-rep-estimator}
        \end{equation}\textnormal{} 
        }
        %
        %
        One way to measure the difference between $P(A_t|S_t,S_{t+1})$ and $\hat P(A_t|S_t,S_{t+1})$ is using the expected (over states coming from the on-policy distribution) Kullback-Leibler (KL) divergence
        
        \begin{align}
             \text{KL}(P(A_t|S_t,S_{t+1}) || \hat P(A_t|S_t,S_{t+1}))=& -\mathbf{E} \left [\sum_{a \in \mathcal{A}} P(a|S_t,S_{t+1}) \ln \left ( \frac{\hat P(a|S_t,S_{t+1})}{P(a|S_t,S_{t+1}) } \right ) \right ]
            \\
            =& -\mathbf{E} \left [ \ln \left ( \frac{\hat P(A_t|S_t,S_{t+1})}{P(A_t|S_t,S_{t+1})} \right )  \right ]. \label{eqn:KL-sample}
        \end{align}
        %
        
        Since the observed transition tuples, $(S_t,A_t,S_{t+1})$, contain the action responsible for the given $S_t$ to $S_{t+1}$ transition,
        %
        an on-policy sample estimate of the KL-divergence 
        can be computed readily using \eqref{eqn:KL-sample}.
        We adopt the following loss function based on the KL divergence between $P(A_t|S_t,S_{t+1})$ and $\hat P(A_t|S_t,S_{t+1})$:
        \begin{align}
            \mathcal{L}(\hat f, \hat g) &= - \mathbf{E}\left [ \ln \left (\hat P(A_t|S_t,S_{t+1}) \right )\right ], 
            \label{Eqn:self-supervised-loss}
        \end{align}
        where the denominator in \eqref{eqn:KL-sample} is not included in \eqref{Eqn:self-supervised-loss} because it does not depend on $\hat f$ or $\hat g$. 
        If $\hat f$ and $\hat g$ are parameterized, their parameters can be learned by minimizing the loss function, $\mathcal{L}$, using a supervised learning procedure.
           	\begin{figure}[t]
    		\centering
    		\includegraphics[width=0.65\textwidth]{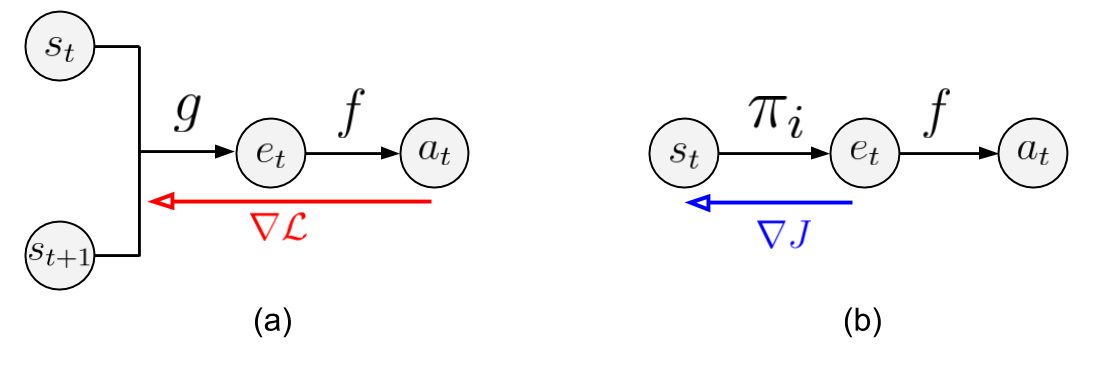}
    		\caption{%
    		(a) Given a state transition tuple, functions $g$ and $f$ are used to estimate the action taken. 
    		The red arrow denotes the gradients of the supervised loss \eqref{Eqn:self-supervised-loss} for learning the parameters of these functions.
    		%
    		(b) During execution, an internal policy, $\pi_i$, can be used to first select an action representation, $e$. 
    		The function $f$, obtained from previous learning procedure, then transforms this representation to an action.
    		The blue arrow represents the internal policy gradients \eqref{Eqn:internal_gradient} obtained using Lemma \ref{prop:local-policy-gradient} to update $\pi_i$.
    		}
    		\label{Fig:architecture-graph}
    	\end{figure}

        A computational graph for this model is shown in Figure \ref{Fig:architecture-graph}. 
        %
        %
        Note that, while $\hat f$ will be used for $f$ in an overall policy, $\hat g$ is only used to find $\hat f$, and will not serve an additional purpose.

        As this supervised learning process only requires estimating $P(A_t|S_t,S_{t+1})$, 
        it does not require (or depend on) the rewards. 
        This partially mitigates the problems due to sparse and stochastic rewards, since an alternative informative supervised signal is always available.
        This is advantageous for making the action representation component of the overall policy learn quickly and with low variance updates.

       \subsection{Learning $\pi_i$ For a Fixed $f$}
        \label{section:learn-internal-policy}
    	A common method for learning a policy parameterized with weights $\theta$ is to optimize the discounted start-state objective function, 
    	$ 
    	    J(\theta) := \sum_{s \in \mathcal{S}} d_0(s) v^\pi(s). 
    	$ 
    	For a policy with weights $\theta$, the expected performance of the policy can be improved by ascending the \emph{policy gradient}, $\frac{\partial J(\theta)}{\partial \theta}$. 

    	Let the state-value function associated with the internal policy, $\pi_i$, be $v^{\pi_i}(s) =  \mathbf{E}[\sum_{t=0}^{\infty}\gamma^tR_{t} |s, \pi_i, f]$, and the state-action value function $q^{\pi_i}(s,e) = \mathbf{E}[\sum_{t=0}^{\infty}\gamma^t R_{t}$ $ |s, e, \pi_i, f]$.
    	 We then define the performance function for $\pi_i$ as:
        \begin{align}
    	    J_i(\theta) := \sum_{s \in \mathcal{S}} d_0(s) v^{\pi_i}(s) \label{Eqn:internal-Performance-function}.
    	\end{align}
    	Viewing the embeddings as the action for the agent with policy $\pi_i$, the policy gradient theorem \cite{sutton2000policy}, states that the unbiased \cite{thomas2014bias} gradient of \eqref{Eqn:internal-Performance-function} is,
        \begin{align}
        	\frac{\partial J_i(\theta)}{\partial \theta} = \sum_{t=0}^{\infty}\mathbf{E}\left [  \gamma^t \int_\mathcal{E} q^{\pi_i}(S_t, e) \frac{\partial}{\partial \theta} \pi_i(e|S_t)  \, \mathrm{d}e\right ],
    	    \label{Eqn:internal_gradient}
    	\end{align}
    	where, the expectation is over states from $d^\pi$, as defined in \cite{sutton2000policy} (which is not a true distribution, since it is not normalized). 
    	The parameters of the internal policy can be learned by iteratively updating its parameters in the direction of $\partial J_i(\theta) /\partial \theta$.
    	Since there are no special constraints on the policy $\pi_i$, any policy gradient algorithm designed for continuous control, like DPG \cite{silver2014deterministic}, PPO \cite{schulman2017proximal}, NAC \cite{bhatnagar2009natural} etc., can be used out-of-the-box.

    	However, note that the performance function associated with the overall policy, $\pi_o$ (consisting of function $f$ and the internal policy parameterized with weights $\theta$), is:
    	\begin{align}
    	    J_o(\theta,f) = \sum_{s \in \mathcal{S}} d_0(s) v^{\pi_o}(s) \label{Eqn:Performance-function}.
    	\end{align}
    	The ultimate requirement is the improvement of this overall performance function, $J_o(\theta,f)$, and not just $J_i(\theta)$.
        So, how useful is it to update the internal policy, $\pi_i$, by following the gradient of its own performance function? The following lemma answers this question. 
    
        \begin{lemma}
         For all deterministic functions, $f$, which map each point, $e \in \mathbb{R}^{ d}$, in the representation space to an action, $a \in \mathcal{A}$, the expected updates to $\theta$ based on $\frac{\partial J_i(\theta)}{\partial \theta}$ are equivalent to updates based on $\frac{\partial J_o(\theta,f)}{\partial \theta}$. 
    	That is,
    	\begin{align*}
        	 \frac{\partial J_o(\theta,f)}{\partial \theta} = \frac{\partial J_i(\theta)}{\partial \theta}.
    	\end{align*}
    	\label{prop:local-policy-gradient}
    	\end{lemma}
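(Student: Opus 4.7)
The plan is to show that $J_o(\theta,f)$ and $J_i(\theta)$ are in fact the \emph{same} function of $\theta$, so their $\theta$-gradients must coincide. The key observation is that, with $f$ fixed and deterministic, sampling $E_t \sim \pi_i(\cdot|S_t)$ and then taking $A_t = f(E_t)$ induces exactly the overall action distribution $\pi_o(\cdot|S_t)$ by the definition in \eqref{eqn:optimal-policy}: $\pi_o(a|s) = \int_{f^{-1}(a)} \pi_i(e|s)\,\mathrm{d}e$. Since the MDP's transition and reward kernels depend only on the action chosen (not on the embedding used to pick it), the two sampling procedures induce identical distributions over trajectories $(S_0,A_0,R_0,S_1,\ldots)$ given $S_0$.

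Concretely, I would first argue that $v^{\pi_o}(s) = v^{\pi_i}(s)$ for every $s \in \mathcal{S}$. This can be done by induction on the Bellman recursion: assuming equality of the next-state values, write
\begin{align*}
v^{\pi_i}(s) &= \int_{\mathcal{E}} \pi_i(e|s)\bigl(\mathcal{R}(s,f(e)) + \gamma \sum_{s'}\mathcal{P}(s'|s,f(e))v^{\pi_i}(s')\bigr)\mathrm{d}e \\
&= \sum_{a\in\mathcal{A}} \Bigl(\int_{f^{-1}(a)} \pi_i(e|s)\,\mathrm{d}e\Bigr) \bigl(\mathcal{R}(s,a) + \gamma \sum_{s'}\mathcal{P}(s'|s,a)v^{\pi_o}(s')\bigr) \\
&= \sum_{a\in\mathcal{A}} \pi_o(a|s)\bigl(\mathcal{R}(s,a) + \gamma \sum_{s'}\mathcal{P}(s'|s,a)v^{\pi_o}(s')\bigr) = v^{\pi_o}(s),
\end{align*}
where the regrouping of the integral over $\mathcal{E}$ into a sum over $a$ with integration over $f^{-1}(a)$ uses that $\{f^{-1}(a):a\in\mathcal{A}\}$ partitions $\mathcal{E}$ since $f$ is a deterministic function. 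A fixed-point/contraction argument (via $\gamma<1$) then gives the equality of value functions.

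Taking expectations under $d_0$ yields $J_o(\theta,f) = \sum_s d_0(s)v^{\pi_o}(s) = \sum_s d_0(s) v^{\pi_i}(s) = J_i(\theta)$. As $f$ does not depend on $\theta$ and both sides are the same function of $\theta$, differentiating gives $\partial J_o(\theta,f)/\partial\theta = \partial J_i(\theta)/\partial\theta$, completing the proof. The main subtlety I anticipate is purely measure-theoretic: ensuring that the partition $\{f^{-1}(a)\}_{a\in\mathcal{A}}$ is measurable (so the integrals split as claimed) and that $\pi_i(\cdot|s)$ admits a density against which to integrate. Under the paper's standing assumption that $\pi_i$ is a probability density and $f$ is a (measurable) deterministic mapping with finite range $\mathcal{A}$, this is routine, and the rest of the proof is just bookkeeping with the Bellman equation.
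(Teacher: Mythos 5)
Your proof is correct. Note that the paper defers this lemma's proof to the cited reference (Chandak et al., ICML 2019), and that proof works at the level of the gradient expressions themselves: it expands $\partial J_o(\theta,f)/\partial\theta$ via the policy gradient theorem for $\pi_o$, substitutes $\pi_o(a|s)=\int_{f^{-1}(a)}\pi_i(e|s)\,\mathrm{d}e$, exchanges differentiation with the integral, and uses that $q^{\pi_o}(s,a)=q^{\pi_i}(s,e)$ for every $e\in f^{-1}(a)$ (together with the fact that $\pi_o$ and the sample-then-map procedure induce the same discounted state distribution) to reassemble the sum over actions of integrals over preimages into the single integral over $\mathcal{E}$ appearing in $\partial J_i(\theta)/\partial\theta$. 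You instead prove the stronger, function-level identity $J_o(\theta,f)=J_i(\theta)$ for every $\theta$ --- via the same preimage-partition regrouping applied inside the Bellman equation, plus uniqueness of the fixed point of the $\gamma$-contraction --- and then differentiate; your appeal to the contraction argument is the right way to make the informal ``induction on next-state values'' rigorous. Both arguments hinge on the same two facts (the sets $\{f^{-1}(a)\}_{a\in\mathcal{A}}$ partition $\mathcal{E}$, and the environment is affected only through $a=f(e)$), so the routes are close cousins; what yours buys is a shorter, more elementary derivation that bypasses the policy-gradient-theorem expansion and immediately yields equality of higher derivatives too, while the reference's term-by-term route makes explicit the correspondence between the two gradient estimators, which is the form actually invoked to justify updating $\pi_i$ with its own sampled policy gradient in the PG-RA algorithm.
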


    	The proof is available in \cite{DBLP:conf/icml/ChandakTKJT19}.
        The chosen parameterization for the policy has this special property, which allows $\pi_i$ to be learned using its internal policy gradient.
       	Since this gradient update does not require computing the value of any $\pi_o(a|s)$ explicitly,  
       	the potentially intractable computation of $f^{-1}$ in \eqref{eqn:optimal-policy} required for $\pi_o$ can be avoided.
       	Instead, $\partial J_i(\theta) / \partial \theta$ can be used directly to update the parameters of the internal policy while still optimizing the overall policy's performance, $J_o(\theta,f)$.
           
        \subsection{Learning $\pi_i$ and $f$ Simultaneously} 
        \label{section:learn-simultaneously}
        Since the supervised learning procedure for $f$ does not require rewards, a few initial trajectories can contain enough information to begin learning a useful action representation. 
        As more data becomes available it can be used for fine-tuning and improving the action representations.
        %
	\subsubsection{Algorithm}
		We call our algorithm \textbf{p}olicy \textbf{g}radients with \textbf{r}epresentations for \textbf{a}ctions (PG-RA). 
		%
		%
		%
		PG-RA first initializes the parameters in the action representation component by sampling a few trajectories using a random policy and using the supervised loss defined in \eqref{Eqn:self-supervised-loss}.
		If additional information is known about the actions, as assumed in prior work \cite{dulac2015deep}, it can also be considered when initializing the action representations. 
		Optionally, once these action representations are initialized, they can be kept fixed.

		In the Algorithm \ref{Alg:1}, Lines $2$-$9$ illustrate the online update procedure for all of the parameters involved. 
		Each time step in the episode is represented by $t$.
		For each step, an action representation is sampled and is then mapped to an action by $\hat f$. 
		Having executed this action in the environment, the observed reward is then used to update the internal policy, $\pi_i$, using \textit{any} policy gradient algorithm.
		Depending on the policy gradient algorithm, if a critic is used then semi-gradients of the TD-error are used to update the parameters of the critic.
		In other cases, like in REINFORCE \cite{williams1992simple} where there is no critic, this step can be ignored.
		The observed transition is then used in Line $9$ to update the parameters of $\hat f$ and $\hat g$ so as to minimize the supervised learning loss \eqref{Eqn:self-supervised-loss}. 
		In our experiments, Line $9$ uses a stochastic gradient update. 

	\IncMargin{1em}
	\begin{algorithm2e}[t]
		Initialize action representations \\
		\For {$episode = 0,1,2...$}{
			\For {$t = 0,1,2...$} {
			    Sample action embedding, $E_t$, from $\pi_i(\cdot|S_t) $ \\
			    $A_t = \hat f(E_t)$\\
			    Execute $A_t$ and observe $S_{t+1}, R_{t}$ \\
			    Update $\pi_i$ using \textit{any} policy gradient algorithm\\ 
	            Update critic (if any) to minimize TD error\\
	            Update $\hat f$ and $\hat g$ to minimize $\mathcal L$ defined in \eqref{Eqn:self-supervised-loss} 

			}
		}
		\caption{Policy Gradient with Representations for Action (PG-RA)}
		\label{Alg:1}  
	\end{algorithm2e}
	\DecMargin{1em}    	
 
    \subsubsection{PG-RA Convergence}
     If the action representations are held fixed while learning the internal policy, then as a consequence of Lemma \ref{prop:local-policy-gradient}, convergence of our algorithm directly follows from previous two-timescale results \cite{borkar1997actor,bhatnagar2009natural}.
 	Here we show that learning both $\pi_i$ and $f$ simultaneously using our PG-RA algorithm can also be shown to converge by using a three-timescale analysis.

    Similar to prior work \cite{bhatnagar2009natural,degris2012off,konda2000actor}, for analysis of the updates to the parameters, $\theta \in \mathbb{R}^{d_\theta}$, of the internal policy, $\pi_i$, we use a projection operator $\Gamma : \mathbb{R}^{d_\theta} \rightarrow \mathbb{R}^{d_\theta}$ that projects any $x \in \mathbb{R}^{d_\theta}$ to a compact set $\mathcal{C}\subset \mathbb R^{d_\theta}$.
 	We then define an associated  vector field operator, $\hat \Gamma$,
    that projects any gradients leading outside the compact region,  $\mathcal{C}$, back to $\mathcal{C}$.
 	Practically, however, we do not project the iterates to a constraint region as they are seen to remain bounded (without projection).
   Formally, we make the following assumptions,
   	\begin{ass}
	    \label{ass:differentiable}
	    For any state action-representation pair (s,e), internal policy, $\pi_i(e|s)$, is continuously differentiable in the parameter  $\theta$. 
	\end{ass}
	\begin{ass}
	\label{ass:projection}
	 The updates to the parameters, $\theta \in \mathbb{R}^{d_\theta}$, of the internal policy, $\pi_i$, includes a projection operator $\Gamma : \mathbb{R}^{d_\theta} \rightarrow \mathbb{R}^{d_\theta}$ that projects any $x \in \mathbb{R}^{d_\theta}$ to a compact set $\mathcal{C} = \{x|c_i(x) \leq 0, i=1,...,n\} \subset \mathbb{R}^{d_\theta}$, where $c_i(\cdot), i=1,...,n$ are real-valued, continuously differentiable functions on $\mathbb{R}^{d_\theta}$ that represents the constraints specifying the compact region. For each $x$ on the boundary of $\mathcal C$, the gradients of the active $c_i$ are considered to be linearly independent.  
	\end{ass}
	\begin{ass}
        \label{ass:param-bounded}
        The iterates $\omega_t$ and $\phi_t$ satisfy  $\underset{t}{\mathrm{sup}} \ (|| \omega_t||) < \infty$ and $\underset{t}{\mathrm{sup}} \ (|| \phi_t||) < \infty$.
	\end{ass}

   \begin{thm}
    	\label{thm:convergence}
    	  Under Assumptions \eqref{ass:A1}--\eqref{ass:param-bounded}, the internal policy parameters   $\theta_t$, converge to $\mathcal{\hat Z} = \left\{x \in \mathcal{C}|\hat \Gamma\left(\frac{\partial J_i(x)}{\partial \theta}\right)=0\right \}$ as $t \rightarrow \infty$, with probability one.
	    \end{thm}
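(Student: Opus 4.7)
The plan is to establish Theorem \ref{thm:convergence} as a three-timescale stochastic approximation result, following the template of Borkar and Bhatnagar et al.\ for projected actor-critic methods, and leveraging Lemma \ref{prop:local-policy-gradient} as the key structural ingredient that lets us analyze updates to the internal policy parameters $\theta$ as if they were the ``outer loop'' of a standard policy-gradient scheme.

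First I would make the timescale separation explicit. The PG-RA algorithm updates three sets of parameters: the action-representation parameters $\omega$ (for $\hat f$ and $\hat g$, driven by the supervised loss $\mathcal L$), the critic parameters $\phi$ (driven by TD error), and the internal-policy parameters $\theta$ (driven by the internal policy gradient). I would posit three positive step-size sequences $\{\alpha_t\},\{\beta_t\},\{\gamma_t\}$ satisfying the standard Robbins--Monro conditions $\sum_t \alpha_t = \sum_t \beta_t = \sum_t \gamma_t = \infty$ and $\sum_t(\alpha_t^2 + \beta_t^2 + \gamma_t^2) < \infty$, together with the separation $\alpha_t/\beta_t \to 0$ and $\beta_t/\gamma_t \to 0$. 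Under this ordering, on the slowest timescale (that of $\theta$) the faster iterates $\phi_t$ and $\omega_t$ appear quasi-static-equilibrated, while on the fastest timescale (that of $\omega$) the slower $\theta_t,\phi_t$ appear frozen.

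Second I would verify convergence on the two faster timescales for each frozen $\theta$. For $\omega$, $\mathcal L$ in \eqref{Eqn:self-supervised-loss} is a smooth expected loss under the on-policy distribution induced by $\theta$; standard SGD arguments (plus Assumption \ref{ass:param-bounded}) give convergence of $\omega_t$ to a stationary point $\omega^\star(\theta)$ of $\mathcal L$, which defines an implicit $\hat f_\theta$. For $\phi$, with both $\theta$ and $\omega$ appearing quasi-static, the critic update is the usual TD recursion for a fixed policy; its convergence to a point $\phi^\star(\theta)$ tracking the value of $\pi_o$ under $\hat f_\theta$ follows from established actor-critic results. The middle lemma I then invoke is Lemma \ref{prop:local-policy-gradient}: for any deterministic $\hat f$ produced on the faster timescales, the expected internal-policy update along $\partial J_i(\theta)/\partial\theta$ coincides with the true gradient $\partial J_o(\theta,\hat f)/\partial\theta$ of the overall objective. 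This lets me rewrite the slow update as an unbiased-plus-noise ascent on $J_i(\theta)$ (with $\hat f$ treated as implicitly parameterized by the equilibrated $\omega$). Combined with Assumption \ref{ass:differentiable} (smoothness in $\theta$), the induced ODE on the slow timescale is $\dot\theta = \hat\Gamma\!\bigl(\partial J_i(\theta)/\partial\theta\bigr)$, where $\hat\Gamma$ is the projected vector field from Assumption \ref{ass:projection}.

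Finally, I would invoke the Kushner--Clark lemma (or Borkar's ODE-with-projection theorem) to conclude that, almost surely, $\theta_t$ converges to an internally chain-transitive invariant set of this projected ODE, which must lie in its equilibrium set $\mathcal{\hat Z} = \{x\in\mathcal C : \hat\Gamma(\partial J_i(x)/\partial\theta) = 0\}$. Assumption \ref{ass:projection} (linear independence of active constraint gradients at the boundary) ensures $\hat\Gamma$ is well-defined and the projected ODE is a valid differential inclusion; Assumption \ref{ass:param-bounded} supplies the tightness needed to apply the ODE method. The main obstacle I anticipate is the cross-timescale bias: because $\hat f_\theta$ need not equal any ``true'' $f$ satisfying Assumptions \ref{ass:A1}--\ref{ass:A2}, the slow-timescale ODE is defined via $\hat f_\theta$ rather than the idealized $f$; I would need to argue carefully that the internal-policy gradient estimate remains asymptotically unbiased \emph{relative to $J_i$ with the current $\hat f$}, which is precisely what Lemma \ref{prop:local-policy-gradient} buys us once $\omega$ has stabilized, and that the residual errors from finite-time non-equilibration on the faster timescales vanish in the averaging sense required by the multi-timescale theorem.
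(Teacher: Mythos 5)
Your proposal follows essentially the same route as the paper's: a three-timescale stochastic-approximation argument with the internal policy on the slowest timescale, using Lemma \ref{prop:local-policy-gradient} to identify the slow-timescale mean ODE with the projected gradient field of $J_i$ (equivalently of $J_o$), and concluding via the Kushner--Clark/Borkar projected-ODE theorem, under Assumptions \ref{ass:differentiable}--\ref{ass:param-bounded}, that $\theta_t \to \mathcal{\hat Z}$ with probability one. The one deviation is that you place the action-representation updates on the fastest timescale and the critic on the intermediate one, whereas the paper's construction makes the critic fastest and the representation module intermediate; since the supervised loss \eqref{Eqn:self-supervised-loss} does not depend on the critic while the critic's fixed point does depend on the current policy and representation, both orderings equilibrate to the same limits as seen from the slow timescale, so this is an inessential difference of construction rather than a gap.
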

%
  
%
%
	    \begin{proof} (Outline) We consider three learning rate sequences, such that the update recursion for the internal policy is on the slowest timescale, the critic's update recursion is on the fastest, and the action representation module's has an intermediate rate. 
	    With this construction, we leverage the three-timescale analysis technique \cite{borkar2009stochastic} and prove convergence.
	    The complete proof is available in \cite{DBLP:conf/icml/ChandakTKJT19}.
	    \end{proof}
 
\subsection{Experimental Analysis}
        We evaluate our proposed algorithms on the following domains.
        \paragraph{Maze: } As a proof-of-concept, we constructed a continuous-state maze environment where the state comprised of the coordinates of the agent's current location. 
        The agent has $n$ equally spaced actuators (each actuator moves the agent in the direction the actuator is pointing towards) around it, and it can choose whether each actuator should be on or off. 
        Therefore, the size of the action set is exponential in the number of actuators, that is $|\mathcal{A}| = 2^n$. 
        The net outcome of an action is the vectorial summation of the displacements associated with the selected actuators.
        The agent is rewarded with a small penalty for each time step, and a reward of $100$ is given upon reaching the goal position. 
        To make the problem more challenging, random noise was added to the action $10\%$ of the time and the maximum episode length was $150$ steps. 
        %
    	%
    	
    	This environment is a useful test bed as it requires solving a long horizon task in an MDP with a large action set and a single goal reward.
    	Further, we know the Cartesian representation for each of the actions, and can thereby use it to visualize the learned representation, as shown in Figure \ref{Fig:emb}.
    	%
    	%

    	\paragraph{Real-word recommender systems: } 
    	We consider two real-world applications of recommender systems that require decision making over \textit{multiple time steps}.

        First, a web-based video-tutorial platform, which has a recommendation engine that suggests a series of tutorial videos on various software.
    	    The aim is to meaningfully engage the users in learning how to use these software and convert novice users into experts in their respective areas of interest.
	    The tutorial suggestion at each time step is made from a large pool of available tutorial videos on several software.

        The second application is a professional multi-media editing software. 
	    Modern multimedia editing software often contain many tools that can be used to manipulate the media, and this wealth of options can be overwhelming for users. 
	    In this domain, an agent suggests which of the available tools the user may want to use next.
	    The objective is to increase user productivity and assist in achieving their end goal. 

        For both of these applications, an existing log of user's click stream data was used to create an $n$-gram based MDP model for user behavior \cite{shani2005mdp}.
        In the tutorial recommendation task, user activity for a three month period was observed. 
        Sequences of user visit were aggregated to obtain over $29$ million clicks.   
        Similarly, for a month long duration, sequential usage patterns of the tools in the multi-media editing software were collected to obtain a total of over $1.75$ billion user clicks.  
        Tutorials and tools that had less than $100$ clicks in total were discarded. 
        The remaining $1498$ tutorials and $1843$ tools for the web-based tutorial platform and the multi-media software, respectively, were used to create the action set for the MDP model.
        The MDP had continuous state-space, where each state consisted of the feature descriptors associated with each item (tutorial or tool) in the current $n$-gram. 
        Rewards were chosen based on a surrogate measure for difficulty level of tutorials and popularity of final outcomes of user visits in the multi-media editing software, respectively. 
        Since such data is sparse, only $5\%$ of the items had rewards associated with them, and the maximum reward for any item was $100$.

        Often the problem of recommendation is formulated as a contextual bandit or collaborative filtering problem, but as shown in \cite{TheocharousTG15} these approaches fail to capture the long term value of the prediction.
        Solving this problem for a longer time horizon with a large number of  actions (tutorials/tools) makes this real-life problem a useful and a challenging domain for RL algorithms.
\subsection{Results}
\subsubsection*{Visualizing the learned action representations }
 	\begin{figure*}[ht]
    		\centering
    		\includegraphics[ height=2.8cm, width=4cm]{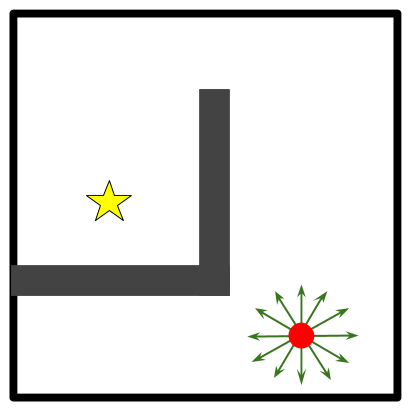}
    		\includegraphics[ height=3.2cm, width=4cm]{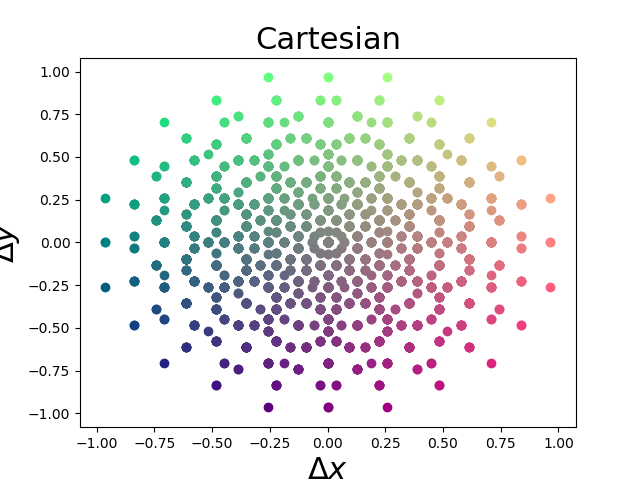}
    		\includegraphics[ height=3.2cm, width=4cm]{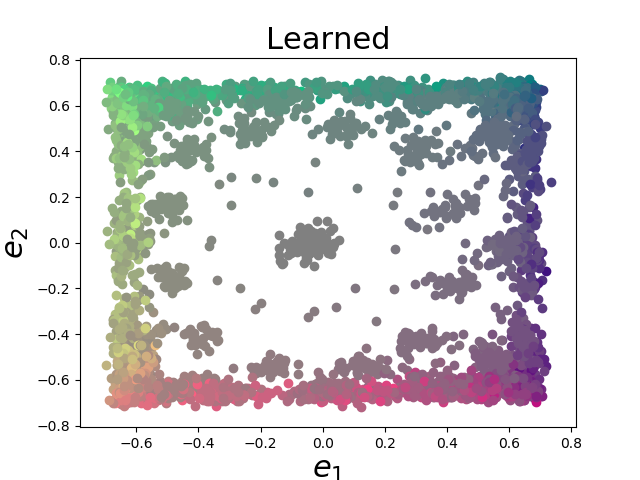}
    		\caption{ (a) The maze environment. 
    		The star denotes the goal state, the red dot corresponds to the agent and the arrows around it are the $12$  actuators.
    		Each action corresponds to a unique combination of these actuators.
    		Therefore, in total $2^{12}$ actions are possible.  
    		(b)  2-D representations for the displacements in the Cartesian co-ordinates caused  by each action, and (c) learned action embeddings.
    		In both (b) and (c), each action is colored based on the displacement ($\Delta x$, $\Delta y$) it produces. 
    		That is, with the color \lbrack R= $\Delta x$, G=$\Delta y$, B=$0.5$\rbrack, where $\Delta x$ and $\Delta y$ are normalized to $[0,1]$ before coloring. 
    		Cartesian actions are plotted on co-ordinates ($\Delta x$, $\Delta y$), and learned ones are on the coordinates in the embedding space.
    	    Smoother color transition of the learned representation is better as it corresponds to preservation of the \textit{relative} underlying structure.
    		The `squashing' of the learned embeddings is an artifact of a non-linearity applied to bound its range.
    		}
    		\label{Fig:emb}
    	\end{figure*}
To understand the internal working of our proposed algorithm, we present visualizations of the learned action representations on the maze domain.
%
%
A pictorial illustration of the environment is provided in Figure \ref{Fig:emb}. 
Here, the underlying structure in the set of actions is related to the displacements in the Cartesian coordinates.
This provides an intuitive base case against which we can compare our results.

In Figure \ref{Fig:emb}, we provide a comparison between the action representations learned using our algorithm and the underlying Cartesian representation of the actions.
It can be seen that the proposed method extracts useful structure in the action space.
Actions which correspond to settings where the actuators on the opposite side of the agent are selected result in relatively small displacements to the agent.
These are the ones in the center of plot.
In contrast, maximum displacement in any direction is caused by only selecting actuators facing in that particular direction.
Actions corresponding to those are at the edge of the representation space.
The smooth color transition indicates that not only the information about magnitude of displacement but the direction of displacement is also represented.
Therefore, the learned representations efficiently preserve the relative transition information among all the actions. %
To make exploration step tractable in the internal policy, $\pi_i$, we bound the  representation space along each dimension to the range [$-1,1$] using \textit{Tanh} non-linearity.
This results in `squashing' of these representations around the edge of this range.

\subsubsection*{Performance Improvement }

    	\begin{figure*}[t]
		\centering
		\includegraphics[width=0.29\textwidth]{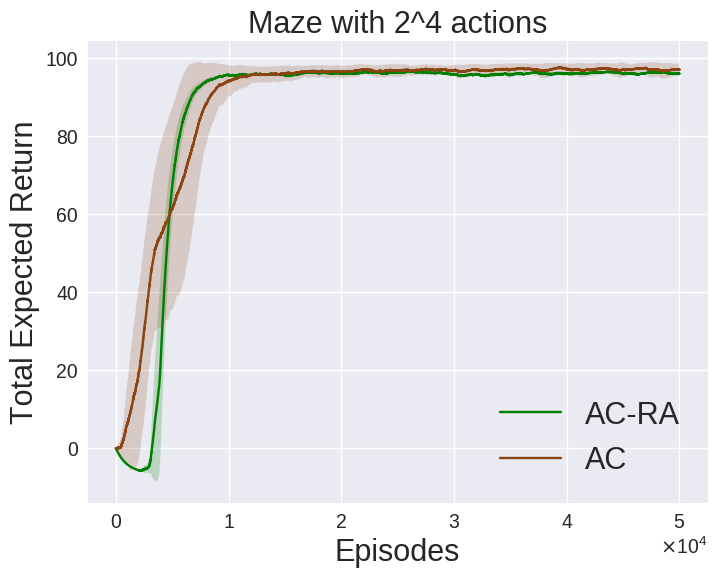} \hfill
		\includegraphics[width=0.29\textwidth]{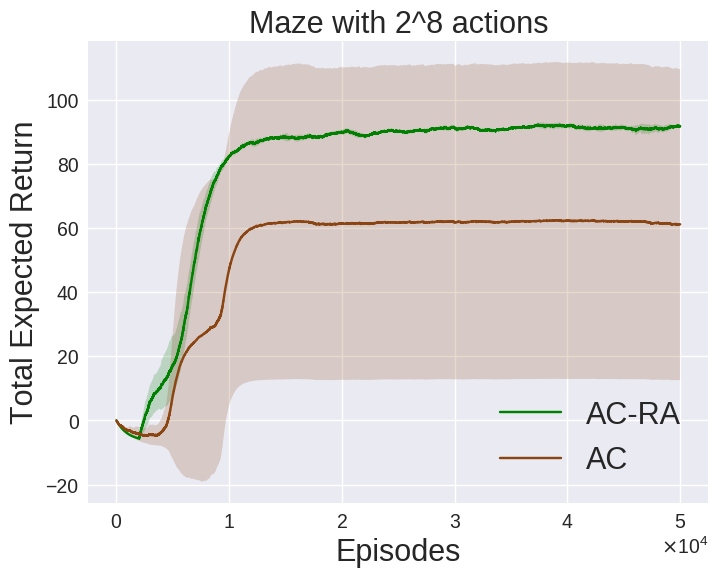} \hfill
		\includegraphics[width=0.29\textwidth]{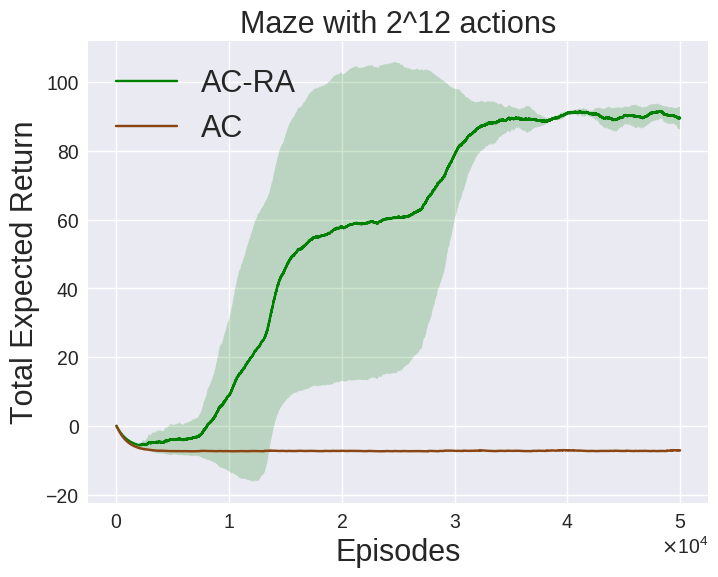} \\
		\includegraphics[width=0.29\textwidth]{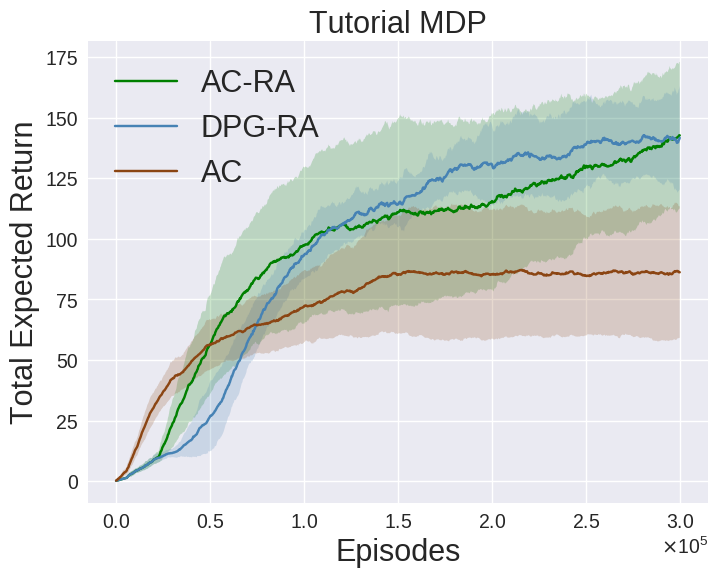}
		\hspace{20pt}
		\includegraphics[width=0.29\textwidth]{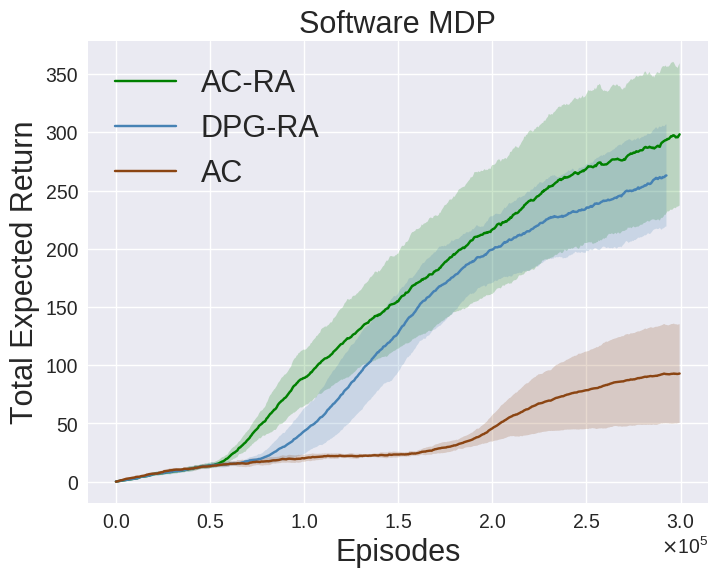}
		\caption{(Top) Results on the Maze domain with $2^4, 2^8,$ and $2^{12}$ actions respectively. (Bottom) Results on a) Tutorial MDP b) Software MDP. AC-RA and DPG-RA are the variants of PG-RA algorithm that uses actor-critic (AC) and DPG, respectively. The shaded regions correspond to one standard deviation and were obtained using $10$ trials.}
		\label{Fig:performance-plots}
	\end{figure*}
 
 %

The plots in Figure \ref{Fig:performance-plots} for the Maze domain show how the performance of standard actor-critic (AC) method deteriorates as the number of actions increases, even though the goal remains the same. 	
However, with the addition of an action representation module it is able to capture the underlying structure in the action space and consistently perform well across all settings.
Similarly, for both the tutorial and the software MDPs, standard AC methods fail to reason over longer time horizons under such an overwhelming number of actions, choosing mostly one-step actions that have high returns.  
In comparison, instances of our proposed algorithm are not only able to achieve significantly higher return, up to $2\times$ and $3\times$ in the respective tasks, but they do so much quicker.
These results reinforce our claim that learning action representations allow implicit generalization of feedback to other actions embedded in proximity to executed action.

Further, under the PG-RA algorithm, only a fraction of total parameters, the ones in the internal policy, are learned using the high variance policy gradient updates.
    	The other set of parameters associated with action representations are learned by a supervised learning procedure.
    	This reduces the variance of updates significantly, thereby making the PG-RA algorithms learn a better policy faster.
        This is evident from the plots in the Figure  \ref{Fig:performance-plots}.
These advantages allow the internal policy, $\pi_i$, to quickly approximate an optimal policy without succumbing to the curse of large actions sets. 

\section{Dynamic Actions}
\label{sec:dynaimic-actions}
Beside the large number of actions, in many real-world sequential decision making problems, the number of available actions (decisions) can vary over time. While problems like catastrophic forgetting, changing transition dynamics, changing rewards function, etc. have been well-studied in the lifelong learning literature, the setting where the action set changes remains unaddressed. In this section, we present an algorithm that autonomously adapts to an action set whose size changes over time. To tackle this open problem, we break it into two problems that can be solved iteratively: inferring the underlying, unknown, structure in the space of actions and optimizing a policy that leverages this structure. We demonstrate the efficiency of this approach on large-scale real-world lifelong learning problems
\cite{DBLP:journals/corr/abs-1906-01770}.

\subsection{Lifelong Markov Decision Process}
\label{sec:lmdp}
        MDPs, the standard formalization of decision making problems, are not flexible enough to encompass lifelong learning problems wherein the action set size changes over time. 
    	In this section we extend the standard MDP framework to model this setting. 

        In real-world problems where the set of possible actions changes, there is often underlying structure in the set of all possible actions (those that are available, and those that may become available).
        For example, tutorial videos can be described by feature vectors that encode their topic, difficulty, length, and other attributes;
        in robot control tasks, primitive locomotion actions like left, right, up, and down could be encoded by their change to the Cartesian coordinates of the robot, etc. 
        Critically, we will not assume that the agent knows this structure, merely that it exists. 
        If actions are viewed from this perspective, then the set of all possible actions (those that are available at one point in time, and those that might become available at any time in the future) can be viewed as a vector-space, $\mathcal E \subseteq \mathbb R^d$. 

\begin{figure}[t]
		\centering
		\includegraphics[width=0.7\textwidth]{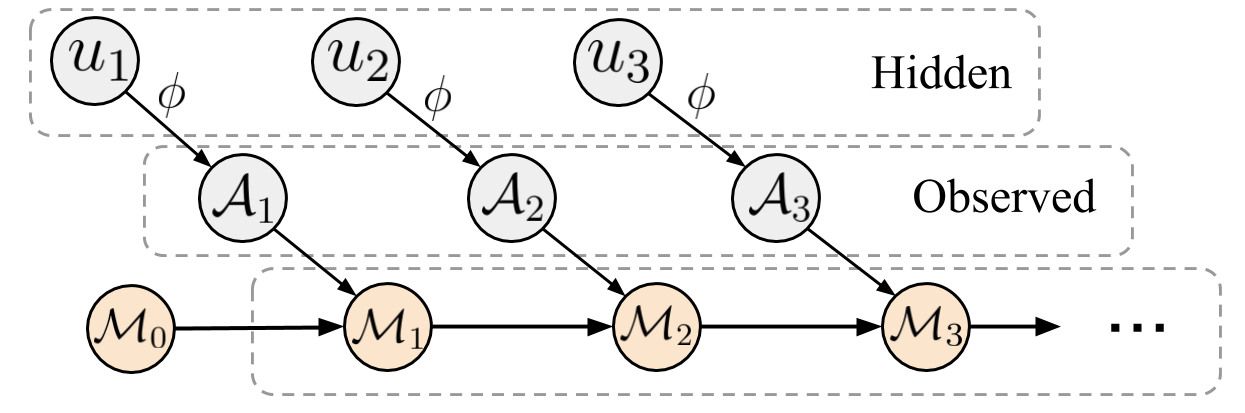}
		\caption{ Illustration of a \emph{lifelong MDP} where $\mathcal M_0$ is the base MDP. For every change $k$, $\mathcal M_K$ builds upon $\mathcal M_{k-1}$ by including the newly available set of actions $\mathcal A_k$. The internal structure in the space of actions is hidden and only a set of discrete actions is observed. 
		}
		\label{Fig:CL-MDP}
\end{figure}

        To formalize the lifelong MDP, we first introduce the necessary variables that govern when and how new actions are added.
        We denote the episode number using $\tau$.
        Let $I_\tau \in \{0, 1\}$ be a random variable that indicates whether a new set of actions are added or not at the start of episode $\tau$, and let frequency $\mathcal F: \mathbb N \rightarrow [0, 1]$ be the associated probability distribution over episode count, such that $\Pr(I_\tau=1) = \mathcal F(\tau)$. 
        Let $U_\tau \in 2^{\mathcal E}$ be the random variable corresponding to the set of actions that is added before the start of episode $\tau$. 
        When $I_\tau=1$, we assume that $U_\tau\neq\emptyset$, and when $I_\tau=0$, we assume that $U_\tau = \emptyset$. 
        Let $\mathcal D_\tau$ be the distribution of $U_\tau$ when $I_\tau=1$, i.e., $U_\tau \sim \mathcal D_\tau$ if $I_\tau=1$. 
        We use $\mathcal D$ to denote the set $\{\mathcal D_\tau\}$ consisting of these distributions.
        Such a formulation using $I_\tau$ and $\mathcal D_\tau$ provides a fine control of when and how new actions can be incorporated.
        This allows modeling a large class of problems where both the distribution over the type of incorporated actions as well intervals between successive changes might be irregular. 
        Often we will not require the exact episode number $\tau$ but instead require $k$, which denotes the number of times the action set is changed.

        Since we do not assume that the agent knows the structure associated with the action, we instead provide actions to the agent as a set of discrete entities, $\mathcal A_k$. 
        To this end, we define $\phi$ to be a map relating the underlying structure of the new actions to the observed set of discrete actions $\mathcal A_k$ for all $k$, i.e., if the set of actions added is $u_k$, then 
        $\mathcal A_k=\{  \phi(e_i) | e_i \in u_k\}$. 
        Naturally, for most problems of interest, neither the underlying structure $\mathcal E$, nor the set of distributions $\mathcal D$, nor the frequency of updates $\mathcal F$, nor the relation $\phi$ is known---the agent only has access to the observed set of discrete actions.

        We now define the \textit{lifelong Markov decision process} (L-MDP)  as $\mathscr{L} = (\mathcal M_0, \mathcal E, \mathcal D, \mathcal F)$, which extends a \textit{base} MDP $\mathcal M_0 = (\mathcal{S}, \mathcal{A},\mathcal{P},\mathcal{R}, \gamma, d_0)$. 
    	$\mathcal{S}$ is the set of all possible states that the agent can be in, called the state set.
    	$\mathcal A$ is the discrete set of actions available to the agent, and for $\mathcal M_0$ we define this set to be empty, i.e., $\mathcal A=\emptyset$. 
    	When the set of available actions changes and 
    	the agent observes a new set of discrete actions, $\mathcal A_k$, then $\mathcal M_{k-1}$ transitions to $\mathcal M_k$, such that  $\mathcal A$ in $\mathcal M_k$ is the set union of $\mathcal A$ in $\mathcal M_{k-1}$ and $\mathcal A_k$. 
    	Apart from the available actions, other aspects of the L-MDP remain the same throughout.
    	An illustration of the framework is provided in Figure \ref{Fig:CL-MDP}.
    	We use $S_t \in \mathcal S$, $A_t \in \mathcal A$, and $R_t \in \mathbb R$ as random variables for denoting the state, action and reward at time $t \in \{0,1,\dotsc\}$ within each episode.
    	%
        %
    	The first state, $S_0$, comes from an initial distribution, $d_0$, and the reward function $\mathcal R$ is defined to be only dependent on the state such that $\mathcal R(s)=\mathbf{E}[R_t|S_t=s]$ for all $s \in \mathcal S$. 
    	We assume that $R_t \in [-R_\text{max},R_\text{max}]$ for some finite $R_\text{max}$. 
    	The reward discounting parameter is given by $\gamma \in [0,1)$. 
    	$\mathcal{P}$ is the state transition function, such that for all $s,a,s',t$, the function $ \mathcal{P}(s,a,s')$ denotes the transition probability $ P(s'| s, e)$, where $a = \phi(e)$.\footnote{For notational ease, (a) we overload symbol $P$ for representing both probability mass and density; 
    	(b) we assume that the state set is finite, however, our primary results extend to MDPs with continuous states.
    	} 

    	In the most general case, new actions could be completely arbitrary and have no relation to the ones seen before.
    	In such cases, there is very little hope of lifelong learning by leveraging past experience.
    	To make the problem more feasible, we resort to a notion of \textit{smoothness} between actions.
    	Formally, we assume that transition probabilities in an L-MDP are $\rho-$Lipschitz in the structure of actions, i.e., $\exists \rho > 0$ such that 
    	\begin{equation} 
    	 \forall s, s', e_i, e_j \hspace{5pt} \lVert P(s'| s,e_i) - P(s'| s,e_j) \rVert_1 \leq \rho \lVert e_i - e_j\rVert_1. 
    	\label{eqn:lipschitz}
    	\end{equation}
    	For any given MDP $\mathcal{M}_k$ in $\mathscr L$, an agent's goal is to find a policy, $\pi_k$, that maximizes the expected sum of discounted future rewards.
    	For any policy $\pi_k$, the corresponding state value function is $v^{\pi_k}(s) = \mathbf{E}[\sum_{t=0}^{\infty}\gamma^t R_{t} |s, \pi_k]$.

 \subsection{Blessing of Changing Action Sets}
 \label{sec:blessing}
Finding an optimal policy when the set of possible actions is large 
is 
difficult due to the curse of dimensionality.
In the L-MDP setting this problem might appear to be exacerbated, as an agent must additionally adapt to the changing levels of possible performance as new actions become available. 
This raises the natural question: \textit{as new actions become available, how much does the performance of an optimal policy change? }
%
If it fluctuates significantly, can a lifelong learning agent succeed by continuously adapting its policy, or is it better to learn from scratch with every change to the action set? 

To answer this question, 
consider an optimal policy, $\pi^*_k$, for MDP $\mathcal M_k$, i.e., an optimal policy when considering only policies that use actions that are available during the $k^\text{th}$ episode. 
%
%
We now 
quantify how sub-optimal $\pi^*_k$ is relative to the performance of a hypothetical policy, $\mu^*$, that acts optimally given access to all possible actions. 
\begin{thm}
\label{thm:1}
In an L-MDP, let $\epsilon_k$ denote the maximum distance in the underlying structure of the closest pair of available actions, i.e.,  
$\epsilon_k \coloneqq \underset{a_i \in \mathcal A}{\text{sup}} ~ \underset{a_j \in \mathcal A}{\text{inf}}  \lVert e_i - e_j \rVert_1$, 
then
\begin{align}
    v^{\mu^*}(s_0) -v^{\pi^*_k}(s_0)   &\leq \frac{\gamma  \rho \epsilon_k}{(1 - \gamma)^2}  R_{\text{max}}.  \label{eqn:thm1} 
\end{align}
\end{thm}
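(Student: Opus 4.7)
The plan is to upper-bound $v^{\mu^*}(s_0) - v^{\pi^*_k}(s_0)$ by introducing a surrogate policy $\tilde\pi_k$ that is restricted to the currently available action set $\mathcal A$ but mimics $\mu^*$ as closely as the action-structure geometry allows. Since $\pi^*_k$ is optimal among policies that use only actions in $\mathcal A$, we will have $v^{\pi^*_k}(s_0) \ge v^{\tilde\pi_k}(s_0)$, so it suffices to bound $v^{\mu^*}(s_0) - v^{\tilde\pi_k}(s_0)$. This reduces the theorem to a simulation-style argument between two policies on the \emph{same} MDP whose per-state transition distributions are close in $L_1$.

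Concretely, I would first recall that finite-state, finite-action MDPs admit deterministic optimal policies, so without loss of generality $\mu^*$ is deterministic, with $\mu^*(s) \in \mathcal E$ for each $s$. By the definition of $\epsilon_k$, for every $s$ there exists an embedding $\tilde e(s)$ with $\phi(\tilde e(s)) \in \mathcal A$ and $\lVert \mu^*(s) - \tilde e(s)\rVert_1 \le \epsilon_k$. Define $\tilde\pi_k(s) \coloneqq \phi(\tilde e(s))$. Then the $\rho$-Lipschitz assumption \eqref{eqn:lipschitz} yields, uniformly in $s$,
\begin{equation*}
\lVert P(\cdot\mid s,\mu^*(s)) - P(\cdot\mid s,\tilde\pi_k(s))\rVert_1 \;\le\; \rho\,\epsilon_k.
\end{equation*}
Crucially, because the reward depends only on the state, the instantaneous reward contribution of $\tilde\pi_k$ matches that of $\mu^*$ at every visited state; the entire discrepancy is pushed into the transition dynamics.

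The second step is a standard recursive unrolling of the Bellman equations. Writing out $v^{\mu^*}(s) - v^{\tilde\pi_k}(s)$ and adding and subtracting $\gamma \mathbb E_{s'\sim P(\cdot\mid s,\tilde\pi_k(s))}[v^{\mu^*}(s')]$, I would split the one-step difference into (i) a term measuring the mismatch of the next-state distributions applied to the same value function $v^{\mu^*}$, bounded by $\gamma \cdot \rho\epsilon_k \cdot \lVert v^{\mu^*}\rVert_\infty \le \gamma \rho \epsilon_k R_{\max}/(1-\gamma)$, and (ii) a propagation term $\gamma\,\mathbb E_{s'\sim P(\cdot\mid s,\tilde\pi_k(s))}[v^{\mu^*}(s') - v^{\tilde\pi_k}(s')]$. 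Taking a supremum over $s$ and denoting $\Delta \coloneqq \sup_s \{v^{\mu^*}(s) - v^{\tilde\pi_k}(s)\}$, the recursion $\Delta \le \gamma \rho \epsilon_k R_{\max}/(1-\gamma) + \gamma \Delta$ rearranges to $\Delta \le \gamma\rho\epsilon_k R_{\max}/(1-\gamma)^2$, which is precisely the claimed rate. Finally, applying this at $s_0$ and using $v^{\pi^*_k}(s_0) \ge v^{\tilde\pi_k}(s_0)$ closes the argument.

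The only mildly subtle point is the first step, justifying the existence of $\tilde\pi_k$: one has to be careful that $\epsilon_k$ as defined is a supremum over actions in $\mathcal A$ of the distance to the nearest \emph{other} available action, so to extract a bound of $\epsilon_k$ on $\lVert \mu^*(s) - \tilde e(s)\rVert_1$ for arbitrary $\mu^*(s) \in \mathcal E$ one needs to read the definition as applying to the full embedding space (or, equivalently, to treat $\mathcal A$ as an $\epsilon_k$-cover of the set of embeddings reachable by $\mu^*$). Beyond this definitional check, the rest is a routine telescoping argument, and the two factors of $(1-\gamma)^{-1}$ have the standard interpretation: one from $\lVert v^{\mu^*}\rVert_\infty$ and one from the geometric propagation of single-step errors across the infinite horizon.
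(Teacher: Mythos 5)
Your proposal is correct and matches the argument the paper relies on (the proof is deferred to the cited work of Chandak et al., which proceeds exactly this way): build a surrogate policy that at each state plays the available action whose embedding is nearest to the one $\mu^*$ would use, invoke the $\rho$-Lipschitz condition to bound the per-step $L_1$ transition mismatch by $\rho\epsilon_k$, note the state-only reward cancels, and close with the standard simulation/telescoping recursion using $\lVert v^{\mu^*}\rVert_\infty \leq R_{\max}/(1-\gamma)$, followed by optimality of $\pi_k^*$ over the surrogate. Your remark on reading $\epsilon_k$ as a covering radius of the embeddings used by $\mu^*$ is exactly the interpretation the original proof requires.
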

%
%
The proof is available in \cite{DBLP:journals/corr/abs-1906-01770}. With a bound on the maximum possible sub-optimality, Theorem \ref{thm:1} presents an important connection between achievable performances, the nature of underlying structure in the action space, and a property of available actions in any given $\mathcal M_k$.
Using this, we can make the following conclusion.
\begin{cor}
\label{cor:1}
Let $\mathcal Y\subseteq \mathcal E$ be the smallest closed set such that,  $P(U_k \subseteq 2^\mathcal Y)=1$. We refer to $\mathcal Y$ as the element-wise-support of $U_k$. If $\,\,\forall k$, the element-wise-support of $U_k$ in an L-MDP is $\mathcal E$, 
then as $k \rightarrow \infty$ the sub-optimality vanishes. That is,
$$\lim_{k \rightarrow \infty} v^{\mu^*}(s_0) -v^{\pi^*_k}(s_0)  \rightarrow 0. $$
\end{cor}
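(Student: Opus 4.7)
The plan is to chain the finite-horizon bound from Theorem~\ref{thm:1} with a density argument for the sampled actions. Since Theorem~\ref{thm:1} gives
\[
v^{\mu^*}(s_0) - v^{\pi^*_k}(s_0) \leq \frac{\gamma \rho\, \epsilon_k}{(1-\gamma)^2}R_{\max},
\]
and the prefactor $\gamma\rho R_{\max}/(1-\gamma)^2$ is a fixed finite constant, it suffices to show that $\epsilon_k \to 0$ as $k \to \infty$ (with probability one under the law of the action-generating process).

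First, I would make explicit the standing hypothesis that the embedding space $\mathcal{E}$ (or at least the element-wise-support of interest) is compact, so that for every $\eta > 0$ there is a finite cover of $\mathcal{E}$ by open $\ell_1$-balls $B_1, \ldots, B_{N(\eta)}$ of radius $\eta/2$. By the definition of element-wise-support, $\mathcal{Y} = \mathcal{E}$ means that for every open $B_i \subseteq \mathcal{E}$ and every $k$ with $\mathcal{D}_k$ active, $\Pr(U_k \cap B_i \neq \emptyset) > 0$, and in fact bounded away from zero uniformly along a subsequence (since a closed set that excludes any such $B_i$ cannot be the element-wise-support). Combined with the assumption underlying the corollary that infinitely many updates occur (otherwise $k$ does not tend to infinity), a second Borel--Cantelli argument yields that with probability one each $B_i$ receives at least one sampled embedding for all sufficiently large $k$.

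Once every ball in the $\eta/2$-cover contains an available action, any $e \in \mathcal{E}$ lies in some $B_i$ whose center is within $\eta/2$ of both $e$ and of an available embedding, so by the triangle inequality $\inf_{a \in \mathcal{A}_k} \|e - a\|_1 \leq \eta$. Taking the supremum over $e$ gives $\epsilon_k \leq \eta$ for all sufficiently large $k$, almost surely. Since $\eta > 0$ was arbitrary, $\epsilon_k \to 0$, and substituting into Theorem~\ref{thm:1} yields the claimed vanishing of the sub-optimality gap.

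The main obstacle I anticipate is the Borel--Cantelli step: it requires a uniform lower bound on $\Pr(U_k \cap B_i \neq \emptyset)$ along the subsequence of episodes on which $I_\tau = 1$, plus the assurance that such updates occur infinitely often. The first needs a careful appeal to the element-wise-support definition (possibly tightening it to ensure a positive probability mass on each open ball, not merely non-exclusion), and the second needs an implicit assumption on $\mathcal{F}$ such as $\sum_\tau \mathcal{F}(\tau) = \infty$. If either is absent, the statement should be interpreted as an in-expectation or asymptotic claim, and I would note that the argument adapts by replacing almost-sure density with density in probability, still giving $\mathbf{E}[\epsilon_k] \to 0$ and hence the corollary in expectation.
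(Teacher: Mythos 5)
Your proposal is correct and follows essentially the same route as the paper's (cited) argument: reduce the corollary to showing the covering radius $\epsilon_k \to 0$ as actions accumulate, then substitute into the bound of Theorem~\ref{thm:1}, whose prefactor is a fixed constant. Your compactness cover and Borel--Cantelli elaboration simply makes explicit the density-of-sampled-actions step that the original proof treats informally, and the caveats you flag (uniform positive mass on open balls, independence across changes, infinitely many updates) are exactly the implicit regularity assumptions under which that argument is carried out.
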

%
%
Through Corollary \ref{cor:1}, we can now establish that the change in optimal performance will eventually converge to zero as new actions are repeatedly added. 
An intuitive way to observe this result would be to notice that every new action that becomes available indirectly provides more information about the underlying, unknown, structure of $\mathcal E$.
However, in the limit, as the size of the available action set increases, the information provided by each each new action vanishes and thus performance saturates.

Certainly, in practice, we can never have $k \rightarrow \infty$, but this result is still advantageous.
Even when the underlying structure $\mathcal E$, the set of distributions $\mathcal D$, the change frequency $\mathcal F$, and the mapping relation $\phi$ are all \textit{unknown}, it establishes the fact that the difference between the best performances in \textit{successive changes} will remain bounded and will not fluctuate arbitrarily.
%
%
This opens up new possibilities for developing algorithms that do 
not need to start from scratch after new actions are added, but rather can build upon their past experiences using updates to their existing policies that efficiently leverage estimates of the structure of $\mathcal E$ to adapt to new actions.  
%
%
%
\subsection{Learning with Changing Action Sets}
\label{sec:learning}
Theorem \ref{thm:1}  characterizes what \textit{can be} achieved in principle, however, it does not specify \textit{how} to achieve it---how to find $\pi_k^*$ efficiently.
Using any parameterized policy, $\pi$, which acts directly in the space of observed actions, suffers from one key practical drawback in the L-MDP setting.
That is, the parameterization is deeply coupled with the number of actions that are available. 
That is, not only is the meaning of each parameter coupled with the number of actions, but often the number of parameters that the policy has is dependent on the number of possible actions. 
This makes it unclear how the policy should be adapted when additional actions become available. 
A trivial solution would be to ignore the newly available actions and continue only using the previously available actions.
However, this is clearly myopic, and will prevent the agent from achieving the better long term returns that might be possible using the new actions.

To address this parameterization-problem, instead of having the policy, $\pi$, act directly in the observed action space, $\mathcal A$, we propose an approach wherein the agent reasons about the underlying structure of the problem in a way that makes its policy parameterization invariant to the number of actions that are available. 
To do so, we split the policy parameterization into two components.
The first component corresponds to the state conditional policy responsible for making the decisions, $\beta : \mathcal S \times \hat{\mathcal E} \rightarrow [0, 1]$, where $\hat {\mathcal E} \in \mathbb{R}^d$.
The second component corresponds to $\hat \phi : \hat{ \mathcal E} \times \mathcal A \rightarrow [0,1]$, an estimator of the relation $\phi$, which is used to map the output of $\beta$ to an action in the set of available actions.
That is, an $E_t \in \hat{\mathcal E}$ is sampled from $\beta(S_t, \cdot)$ and then $ \hat \phi(E_t)$ is used to obtain the action $A_t$. 
Together, $\beta$ and $\hat \phi$ form a complete policy, and $\hat {\mathcal E}$ corresponds to the inferred structure in action space.

One of the prime benefits of estimating $\phi$ with $\hat \phi$ 
is that it makes the parameterization of $\beta$ invariant to the cardinality of the action set---changing the number of available actions does not require changing the number of parameters of $\beta$.
%
%
Instead, only the parameterization of $\hat \phi$, the estimator of the underlying structure in action space, must 
be modified when new actions become available. 
We show next that the update to the parameters of $\hat \phi$ can be performed using \emph{supervised learning} methods that are independent of the reward signal and thus typically more efficient than RL methods. 
%
%
%

 %

%
%
%
 
While our proposed 
parameterization of the policy using both $\beta$ and $\hat \phi$ has the advantages described above, the performance of $\beta$ is now constrained by the quality of $\hat \phi$, as in the end $\hat \phi$ is responsible for selecting an action from $\mathcal A$.
Ideally we want $\hat \phi$ to be such that it lets $\beta$  be both: (a) invariant to the cardinality of the action set for practical reasons and (b) as expressive 
as a policy, $\pi$, explicitly parameterized for the currently available actions. 
Similar trade-offs have been considered in the context of learning optimal state-embeddings for representing sub-goals in hierarchical RL \citep{nachum2018near}.  
For our lifelong learning setting, we build upon their method to efficiently estimate $\hat \phi$ in a way that provides bounded sub-optimality. 
%
Specifically, we make use of an additional \textit{inverse dynamics} function, $\varphi$, that takes as input two states, $s$ and $s'$, and produces as output a prediction of which $e \in \mathcal E$ caused the transition from $s$ to $s'$. 
Since the agent does not know $\phi$, when it observes a transition from $s$ to $s'$ via action $a$, it does \emph{not} know which $e$ caused this transition. 
So, we cannot train $\varphi$ to make good predictions using the actual action, $e$, that caused the transition. 
Instead, we use $\hat \phi$ to transform the prediction of $\varphi$ from $e \in \mathcal E$ to $a \in \mathcal A$, and train both $\varphi$ and $\hat \phi$ so that this process accurately predicts which action, $a$, caused the transition from $s$ to $s'$. 
Moreover, rather than viewing $\varphi$ as a deterministic function mapping states $s$ and $s'$ to predictions $e$, we define $\varphi$ to be a \textit{distribution} over $\mathcal E$ given two states, $s$ and $s'$. 
%

For any given $\mathcal M_k$ in L-MDP $\mathscr L$, let $\beta_k$ and $\hat \phi_k$ denote the two components of the overall policy and let $\pi_k^{**}$ denote the best overall policy that can be represented using some fixed $\hat \phi_k$. 
The following theorem bounds the sub-optimality of $\pi_k^{**}$.
\begin{thm}
\label{thm:2} For an L-MDP $\mathcal M_k$,
If there exists a $\varphi : S \times S \times \hat{\mathcal E} \rightarrow [0, 1] $ and $\hat \phi_k : \hat {\mathcal E} \times \mathcal A \rightarrow [0, 1]$ such that 
\begin{align}
    \footnotesize
    \sup_{s \in \mathcal S, a \in \mathcal A} \text{KL}\Big(P(S_{t+1}|S_t=s, A_t=a) \Vert P(S_{t+1}|S_t=s, A_t=\hat A) \Big) &\leq \delta_k^2/2, \label{eqn:lemma1} 
\end{align}
where $\hat A 
\sim \hat \phi_k(\cdot|\hat E)$ and $\hat E \sim \varphi(\cdot | S_t, S_{t+1})$, then 
\begin{align*}
 v^{\mu^*}(s_0) -v^{\pi_k^{**}}(s_0)  &\leq \frac{\gamma \left( \rho \epsilon_k + \delta_k \right)}{(1 - \gamma)^2}  R_{\text{max}}.
\end{align*}
\end{thm}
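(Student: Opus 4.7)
The plan is to decompose the gap via the intermediate optimum $\pi_k^*$ from Theorem~\ref{thm:1}:
\begin{equation*}
v^{\mu^*}(s_0) - v^{\pi_k^{**}}(s_0)
= \bigl(v^{\mu^*}(s_0) - v^{\pi_k^*}(s_0)\bigr) + \bigl(v^{\pi_k^*}(s_0) - v^{\pi_k^{**}}(s_0)\bigr).
\end{equation*}
Theorem~\ref{thm:1} already bounds the first bracket by $\gamma \rho \epsilon_k R_{\max}/(1-\gamma)^2$, so the whole argument reduces to bounding the ``representation cost'' $v^{\pi_k^*}(s_0) - v^{\pi_k^{**}}(s_0)$ by $\gamma \delta_k R_{\max}/(1-\gamma)^2$, after which we just add the two pieces.

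To handle that second bracket, I would first convert the KL assumption into an $L_1$ bound on transition kernels via Pinsker's inequality: $\|P(\cdot|s,a) - \hat P(\cdot|s,a)\|_1 \leq \sqrt{2\cdot\delta_k^2/2} = \delta_k$ uniformly in $(s,a)$, where $\hat P(\cdot|s,a)$ is the kernel induced by first sampling $S' \sim P(\cdot|s,a)$, then $\hat E \sim \varphi(\cdot|s,S')$, then $\hat A \sim \hat\phi_k(\cdot|\hat E)$, and finally transitioning under $\hat A$. Next I would exhibit an explicit $(\beta,\hat\phi_k)$-policy $\tilde\pi$ that mimics $\pi_k^*$: set $\beta(e|s) \coloneqq \sum_{a} \pi_k^*(a|s) \sum_{s'} P(s'|s,a)\,\varphi(e|s,s')$, i.e., run $\pi_k^*$, let the environment reveal $s'$, and encode $(s,s')$ back into the latent action space through $\varphi$. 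By construction the one-step transition kernel of $\tilde\pi$ at any state $s$ is exactly the marginalization of the hypothesis's RHS over $a\sim\pi_k^*(\cdot|s)$, so the triangle inequality lifts the per-$(s,a)$ bound to $\|P^{\pi_k^*}(\cdot|s) - P^{\tilde\pi}(\cdot|s)\|_1 \leq \delta_k$ at every $s$.

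At this point a standard simulation-lemma telescoping argument closes the gap: two MDPs with identical rewards bounded by $R_{\max}$ and state-wise transition kernels within $\delta_k$ in $L_1$ have value functions that differ by at most $\gamma \delta_k R_{\max}/(1-\gamma)^2$. Applied to $\pi_k^*$ and $\tilde\pi$ this gives $v^{\pi_k^*}(s_0) - v^{\tilde\pi}(s_0) \leq \gamma \delta_k R_{\max}/(1-\gamma)^2$, and since $\pi_k^{**}$ is by definition the best policy representable via the fixed $\hat\phi_k$, it performs at least as well as $\tilde\pi$. Summing with Theorem~\ref{thm:1}'s bound yields the claim.

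The main obstacle I anticipate is the middle step: making rigorous sense of the ``reconstructed'' transition $P(S_{t+1}|S_t=s, A_t=\hat A)$ in the hypothesis, because $\hat A$ itself depends on $S_{t+1}$ via $\varphi$, so the statement is really a joint-distribution claim rather than a plain conditional. The construction of $\tilde\pi$ above dodges that circularity by routing through the true $\pi_k^*$-generated $s'$ before invoking $\varphi$, which matches exactly the conditional structure assumed in the hypothesis; the delicate verification is that marginalizing over $a\sim\pi_k^*(\cdot|s)$ preserves, rather than inflates, the $\delta_k$ $L_1$ bound, which follows by a triangle/Jensen argument action by action.
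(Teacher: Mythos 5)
Your proposal is correct and follows essentially the same route as the proof this paper defers to in \cite{DBLP:journals/corr/abs-1906-01770}: split the gap at $\pi_k^*$, bound the first bracket by Theorem~\ref{thm:1}, convert the KL hypothesis into an $L_1$ bound via Pinsker, build a $(\beta,\hat\phi_k)$ policy that mimics $\pi_k^*$ by routing the sampled transition through $\varphi$ and $\hat\phi_k$, and close with a simulation-lemma telescoping together with the optimality of $\pi_k^{**}$ among policies using the fixed $\hat\phi_k$. Your reading of the reconstructed transition $P(S_{t+1}\,|\,S_t=s, A_t=\hat A)$ as the marginal kernel induced by sampling $S'\sim P(\cdot|s,a)$, then $\hat E\sim\varphi(\cdot|s,S')$, then $\hat A\sim\hat\phi_k(\cdot|\hat E)$ is exactly the intended one, so no gap remains.
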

%
%

See \cite{DBLP:journals/corr/abs-1906-01770} for the proof. By quantifying the impact $\hat \phi$ has on the sub-optimality of achievable performance, Theorem \ref{thm:2} provides the necessary constraints for estimating $\hat \phi$. 
%
%
At a high level, Equation \eqref{eqn:lemma1} ensures $\hat \phi$ to be such that it can be used to generate an action corresponding to any $s$ to $s'$ transition.
%
%
This allows $\beta$ to leverage $\hat \phi$ and choose the required action that induces the state transition needed for maximizing performance.
Thereby, following \eqref{eqn:lemma1}, sub-optimality would be minimized if $\hat \phi$ and $\varphi$ are optimized to reduce the supremum of KL divergence over all $s$ and $a$. 
In practice, however, the agent does not have access to all possible states, rather it has access to a limited set of samples collected from interactions with the environment.
Therefore, instead of the supremum, we propose minimizing the average over all $s$ and $a$ from a set of observed transitions,
\begin{align}
    \!\!\! \mathcal L(\hat \phi, \varphi) \!\!&\coloneqq \!\! \sum_{s \in \mathcal S}\! \sum_{a \in \mathcal A_k} P(s, a) \textit{KL}\left(P(s'|s, a) \Vert P(s'|s, \hat a) \right) . \label{eqn:exp_kl}
\end{align}
Equation \eqref{eqn:exp_kl}  suggests that $\mathcal L(\hat \phi, \varphi)$ would be minimized when $\hat a$ equals $a$, but using \eqref{eqn:exp_kl}  directly in the current form is inefficient as it requires computing KL over all probable $s' \in \mathcal S$ for a given $s$ and $a$.
To make it practical, we make use of the following property.
\begin{prop}
\label{prop:lb}
For some constant C, 
$- \mathcal L(\hat \phi, \varphi)$ is lower bounded by
\begin{align}
     \sum_{s \in \mathcal S} \sum_{a \in \mathcal A_k}\sum_{s' \in \mathcal S} P(s, a, s') \Bigg (  \mathbf{E}\left[\log \hat \phi(\hat a|\hat e) \middle | \hat e \sim \varphi(\cdot|s,s')  \right]
     - \text{KL}\Big(\varphi(\hat e|s, s') \Big \Vert P(\hat e|s, s')  \Big) \Bigg) + C. ~~~~~\label{eqn:beta_vae}
\end{align}
\end{prop}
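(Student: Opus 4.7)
The plan is to recognize Proposition~\ref{prop:lb} as a standard evidence lower bound (ELBO), in which $\varphi(\hat e \mid s, s')$ plays the role of a variational posterior over the latent action embedding and $P(\hat e \mid s, s')$ plays the role of the prior. The derivation proceeds in two stages: first, I would rewrite $-\mathcal{L}(\hat\phi, \varphi)$ so that the $(\hat\phi, \varphi)$-dependent part becomes an expected log-likelihood of the observed action under the model's predictive distribution over actions given $(s, s')$; second, I would apply Jensen's inequality slice by slice to introduce $\varphi$ and obtain the ELBO.

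For the first stage, expand the KL inside $\mathcal{L}$ as
\begin{equation*}
\mathcal{L}(\hat\phi, \varphi) \;=\; \sum_{s,a,s'} P(s,a,s') \log P(s' \mid s, a) \;-\; \sum_{s,a,s'} P(s,a,s') \log P(s' \mid s, \hat a).
\end{equation*}
The first summand depends only on the true dynamics and is absorbed into the constant $C$. For the second summand, interpret $P(s' \mid s, \hat a)$ as the transition probability under an action drawn by $\hat e \sim \varphi(\cdot \mid s, s')$ and $\hat a \sim \hat\phi(\cdot \mid \hat e)$, and apply Bayes' rule $P(s' \mid s, \hat a) = P(\hat a \mid s, s')\, P(s' \mid s) / P(\hat a \mid s)$. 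The $\log P(s' \mid s)$ and $\log P(\hat a \mid s)$ factors depend only on fixed marginals and can be folded into $C$ as well. What remains is the expected model log-likelihood
$\sum_{s,a,s'} P(s,a,s')\,\log \hat P(\hat a = a \mid s, s'),$
where $\hat P(\hat a \mid s, s') \coloneqq \int \hat\phi(\hat a \mid \hat e)\, P(\hat e \mid s, s')\, \mathrm{d}\hat e$ is the predictive distribution of the action induced by the decoder and the prior.

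For the second stage, fix $(s, a, s')$ and insert $\varphi(\hat e \mid s, s')$ into the integral defining $\hat P$; Jensen's inequality then yields the textbook ELBO
\begin{equation*}
\log \hat P(a \mid s, s') \;\geq\; \mathbf{E}_{\varphi(\hat e \mid s, s')}\!\bigl[\log \hat\phi(a \mid \hat e)\bigr] \;-\; \mathrm{KL}\!\bigl(\varphi(\hat e \mid s, s') \,\big\Vert\, P(\hat e \mid s, s')\bigr).
\end{equation*}
Multiplying by $P(s, a, s')$ and summing over $(s, a, s')$ reproduces the right-hand side of \eqref{eqn:beta_vae}, and together with the constant $C$ collected in the first stage gives the claimed lower bound on $-\mathcal{L}$.

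The only real obstacle is the bookkeeping: one has to pin down precisely which terms produced by the Bayes step are independent of $\hat\phi$ and $\varphi$ (so they can legitimately be swept into $C$) versus which terms form the genuine ELBO summands, and one must be careful that the implicit randomness of $\hat a$ in $P(s' \mid s, \hat a)$ is consistently handled so that the expected log-likelihood aligns with the outer expectation over $(s, a, s')$. Beyond this, no probabilistic inequality beyond Jensen is needed, and no assumption on $\hat\phi$ or $\varphi$ enters other than that they are valid conditional densities.
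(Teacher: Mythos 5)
Your derivation is correct and follows essentially the same route as the source's proof: expand the KL so the term involving the true dynamics $\log P(s'\mid s,a)$ is absorbed into $C$, apply Bayes' rule to $P(s'\mid s,\hat a)$, and lower bound the remaining action log-likelihood by the standard Jensen/ELBO argument with $\varphi(\cdot\mid s,s')$ as the variational posterior and $P(\hat e\mid s,s')$ as the prior. One caveat on the bookkeeping you flag yourself: $\log P(\hat a\mid s)$ is a fixed function evaluated at a $(\hat\phi,\varphi)$-dependent argument $\hat a$, so it is only truly constant if $\hat a$ is identified with the observed action being reconstructed; in the general reading you should instead drop that term using $-\log P(\hat a\mid s)\ge 0$, which only strengthens the lower bound, so the claimed result is unaffected.
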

As minimizing $\mathcal L(\hat \phi, \varphi)$ is equivalent to maximizing $- \mathcal L(\hat \phi, \varphi)$, we consider maximizing the lower bound obtained from Property \ref{prop:lb}.
In this form, it is now practical to optimize \eqref{eqn:beta_vae} just by using the observed $(s, a, s')$ samples. 
As this form is similar to the objective for variational auto-encoder, inner expectation can be efficiently optimized using the reparameterization trick \citep{kingma2013auto}.
$P(\hat e | s, s')$ is the prior on $\hat e$, and we treat it as a hyper-parameter that allows the KL to be computed in closed form.

Importantly, note that this optimization procedure only requires individual transitions, $s,a,s'$, 
and is independent of the reward signal. 
Hence, at its core, it is a supervised learning procedure. 
This means that learning good parameters for $\hat \phi$ tends to require far fewer samples than optimizing $\beta$ (which is an RL problem). 
This is beneficial for our approach because $\hat \phi$, the component of the policy where new parameters need to be added when new actions become available, 
can be updated efficiently.
As both $\beta$ and $\varphi$ are invariant to action cardinality, they do not require new parameters when new actions become available. 
Additional implementation level details are available in Appendix F. 

\subsection{Algorithm}
\label{sec:algo}
 \begin{figure}[t]
        \centering
		\includegraphics[width=0.5\textwidth]{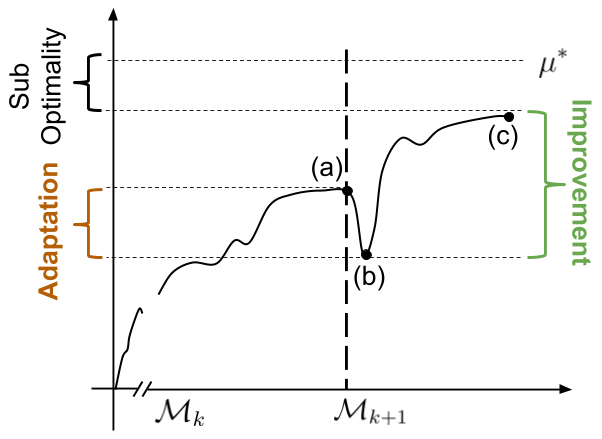}
		\caption{An illustration of a typical performance curve for a lifelong learning agent.
		The point $(a)$ corresponds to the performance of the current policy in $\mathcal M_k$.
		The point $(b)$ corresponds to the performance drop resulting as a consequence of adding new actions.
		We call the phase between (a) and (b) as the adaptation phase, which aims at minimizing this drop when adapting to new set of actions.
		The point $(c)$ corresponds to the improved performance in $\mathcal M_{k+1}$ by optimizing the policy to leverage the new set of available actions. 
		$\mu^*$ represents the best performance of the hypothetical policy which has access to the entire structure in the action space.
		}
		\label{Fig:LAICA}
\end{figure}

 When a new set of actions, $\mathcal A_{k+1}$, becomes available, the agent should leverage the existing knowledge and quickly adapt to the new action set.
 Therefore, during every change in $\mathcal M_k$, the ongoing best components of the policy, $\beta_{k-1}^*$ and $\phi_{k-1}^*$, in $\mathcal M_{k-1}$ are carried over, i.e., $\beta_k \coloneqq \beta_{k-1}^*$ and $\hat \phi_k \coloneqq \hat \phi_{k-1}^*$.
 For lifelong learning, the following property illustrates a way to organize the learning procedure so as to minimize the sub-optimality in each $\mathcal M_k$, for all $k$.  
 \begin{prop} (Lifelong Adaptation and Improvement)\label{prop:LAICA}
 In an L-MDP,  let $\Delta$ denote the difference of performance between $v^{\mu^*}$ and the best achievable using our policy parameterization, then the overall sub-optimality can be expressed as, 
  \begin{align}
    v^{\mu^*}(s_0) - v_{_{\mathcal M_1}}^{\beta_1 \hat \phi_1}(s_0) =&  \underbrace{\sum_{k=1}^{\infty}\left(v_{_{\mathcal M_k}}^{\beta_{k} \hat \phi_k^*}(s_0) - v_{_{\mathcal M_{k}}}^{\beta_{k}  \hat\phi_{k}}(s_0) \right)}_{\text{Adaptation}} + \underbrace{\sum_{k=1}^{\infty}\left(v_{_{\mathcal M_k}}^{\beta_k^*  \hat\phi_k^*}(s_0) - v_{_{\mathcal M_k}}^{\beta_k  \hat\phi_k^*}(s_0) \right)}_{\text{Policy Improvement}} + \Delta, 
 \label{eqn:adapt_improve}
  \end{align} 
  where $\mathcal M_k$ is used in the subscript to emphasize the respective MDP in $\mathscr L$. 
 \end{prop}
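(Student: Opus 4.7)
The plan is a simple add-and-subtract/telescoping argument, driven by one key identity about the carry-over of parameters across successive MDPs in $\mathscr L$. First, recall that when $\mathcal M_k$ transitions to $\mathcal M_{k+1}$, the agent initializes by $\beta_{k+1} \coloneqq \beta_k^*$ and $\hat\phi_{k+1} \coloneqq \hat\phi_k^*$. Because $\mathcal M_{k+1}$ differs from $\mathcal M_k$ only by the addition of new discrete actions---the state set, transition function on previously available actions, reward function, initial distribution, and discount factor are all unchanged---and because the carried-over $\hat\phi_k^*$ maps into $\mathcal A$ of $\mathcal M_k$ (so it places no mass on any newly added action), the pre-learning policy in $\mathcal M_{k+1}$ induces exactly the same trajectory distribution as the post-learning policy in $\mathcal M_k$. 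This yields the crucial identity
\begin{equation}
v_{\mathcal M_{k+1}}^{\beta_{k+1} \hat\phi_{k+1}}(s_0) \;=\; v_{\mathcal M_k}^{\beta_k^* \hat\phi_k^*}(s_0), \qquad \forall k \geq 1. \label{eqn:carryover}
\end{equation}

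Next I would introduce the intermediate value $v_{\mathcal M_k}^{\beta_k \hat\phi_k^*}(s_0)$ (fixed $\beta_k$, adapted $\hat\phi_k^*$) and write
\begin{equation}
v_{\mathcal M_k}^{\beta_k^* \hat\phi_k^*} - v_{\mathcal M_k}^{\beta_k \hat\phi_k} \;=\; \underbrace{\bigl( v_{\mathcal M_k}^{\beta_k \hat\phi_k^*} - v_{\mathcal M_k}^{\beta_k \hat\phi_k} \bigr)}_{\text{adaptation of } \hat\phi} \;+\; \underbrace{\bigl( v_{\mathcal M_k}^{\beta_k^* \hat\phi_k^*} - v_{\mathcal M_k}^{\beta_k \hat\phi_k^*} \bigr)}_{\text{improvement of } \beta}.
\end{equation}
Summing over $k=1,\dots,K$ and using \eqref{eqn:carryover} to telescope the left-hand side gives
\begin{equation}
v_{\mathcal M_{K+1}}^{\beta_{K+1} \hat\phi_{K+1}}(s_0) - v_{\mathcal M_1}^{\beta_1 \hat\phi_1}(s_0) \;=\; \sum_{k=1}^{K}\bigl( v_{\mathcal M_k}^{\beta_k \hat\phi_k^*} - v_{\mathcal M_k}^{\beta_k \hat\phi_k} \bigr) + \sum_{k=1}^{K}\bigl( v_{\mathcal M_k}^{\beta_k^* \hat\phi_k^*} - v_{\mathcal M_k}^{\beta_k \hat\phi_k^*} \bigr).
\end{equation}

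The final step is to rearrange and add $v^{\mu^*}(s_0)$ to both sides. Taking $K\to\infty$ and invoking the stated definition of $\Delta$ as the limiting gap between $v^{\mu^*}$ and the best performance achievable with the $(\beta,\hat\phi)$ parameterization identifies $v^{\mu^*}(s_0) - \lim_{K\to\infty} v_{\mathcal M_{K+1}}^{\beta_{K+1} \hat\phi_{K+1}}(s_0)$ with $\Delta$, reproducing precisely the claimed Adaptation-plus-Improvement-plus-$\Delta$ decomposition.

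The main obstacle is justifying \eqref{eqn:carryover} cleanly; everything else is a routine telescope. That identity rests on two facts the L-MDP definition already guarantees: invariance of every non-action component across successive $\mathcal M_k$, and the fact that the carried-over $\hat\phi_k^*$ has not yet been retrained to place mass on the newly added actions. A minor technicality is ensuring the two series on the right converge so that termwise $K\to\infty$ is legitimate, but this is immediate from the bounded-reward, $\gamma<1$ assumption: each value function lies in $[-R_{\max}/(1-\gamma),\,R_{\max}/(1-\gamma)]$, and the telescoped partial sums equal a difference of two such bounded quantities, which converges as $K\to\infty$ by the existence of the limit defining $\Delta$.
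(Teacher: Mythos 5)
Your telescoping argument is correct: the per-$k$ add-and-subtract of the intermediate value $v_{\mathcal M_k}^{\beta_k \hat\phi_k^*}$, the carry-over identity $v_{\mathcal M_{k+1}}^{\beta_{k+1}\hat\phi_{k+1}} = v_{\mathcal M_k}^{\beta_k^*\hat\phi_k^*}$ (valid because only the action set grows while all other MDP components are unchanged and the carried-over policy is literally the same object), and the identification of the residual $v^{\mu^*}(s_0)-\lim_K v_{\mathcal M_{K+1}}^{\beta_{K+1}\hat\phi_{K+1}}(s_0)$ with $\Delta$ are exactly the ingredients of the argument the paper relies on (the proof itself is deferred to the cited LAICA reference, which proceeds the same way). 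The only cosmetic looseness is that you argue convergence of the combined telescoped series rather than of the Adaptation and Improvement series separately, but this is at the same level of rigor as the original statement.
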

%
%

Property \ref{prop:LAICA} illustrates a way to understand the impact of $\beta$ and $\hat \phi$ by splitting the learning process into an adaptation phase and a policy improvement phase.
These two iterative phases are the crux of our algorithm for solving an L-MDP $\mathscr L$.
Based on this principle, we call our algorithm LAICA: \textit{lifelong adaptation and improvement for changing actions}.
%

Whenever new actions become available, adaptation is prone to cause a performance drop as the agent has no information about when to use the new actions, and so its initial uses of the new actions may be at inappropriate times. 
%
%
%
Following Property \ref{prop:lb}, we update $\hat \phi$ so as to efficiently infer the underlying structure and minimize this drop.
That is, for every $\mathcal M_k$, $ \hat\phi_k$ is first adapted to $ \hat \phi_k^*$ in the adaptation phase by adding more parameters for the new set of actions and then optimizing \eqref{eqn:beta_vae}.
After that, $ \hat \phi_k^*$ is fixed and $\beta_k$ is improved towards $\beta_k^*$ in the policy improvement phase, by updating the parameters of $\beta_k$ using the policy gradient theorem \citep{sutton2000policy}.
These two procedures are performed sequentially whenever $\mathcal M_{k-1}$ transitions to $\mathcal M_k$, for all $k$, in an L-MDP $\mathscr L$.
An illustration of the procedure is presented in Figure \ref{Fig:LAICA}.
%


%
%
%
A step-by-step pseudo-code for the LAICA algorithm is available in Algorithm 1.
The crux of the algorithm is based on the iterative adapt and improve procedure obtained from Property \ref{prop:LAICA}.

We begin by initializing the parameters for $\beta_{0}^*, \hat \phi_{0}^*$ and $\varphi_{0}^*$.
In Lines $3$ to $5$, for every change in the set of available actions, instead of re-initializing from scratch, the previous best estimates for $\beta, \hat\phi$ and $\varphi$ are carried forward to build upon existing knowledge.
As $\beta$ and $\varphi$ are invariant to the cardinality of the available set of actions, no new parameters are required for them.
In Line $6$ we add new parameters in the function $\hat \phi$ to deal with the new set of available actions.

To minimize the adaptation drop, we make use of Property \ref{prop:lb}.
Let $\mathcal L^{\text{lb}}$ denote the lower bound for $\mathcal L$, such that,
$$ \mathcal L^{\text{lb}}(\hat \phi, \varphi) \coloneqq \mathbf{E}\left[\log \hat \phi(\hat A_t|\hat E_t) \middle | \varphi(\hat E_t|S_t,S_{t+1})  \right] - \lambda \text{KL}\left(\varphi(\hat E_t|S_t, S_{t+1}) \middle \Vert P(\hat E_t|S_t, S_{t+1})  \right).$$
Note that following the literature on variational auto-encoders, we have generalized \eqref{eqn:beta_vae} to use a Lagrangian $\lambda$ to weight the importance of KL divergence penalty \citep{higgins2017beta}.
When $\lambda = 1$, it degenrates to \eqref{eqn:beta_vae}.
We set the prior $P(\hat e|s, s')$ to be an isotropic normal distribution, which also allows KL to be computed in closed form  \citep{kingma2013auto}. 
From Line $7$ to $11$ in the Algorithm 1, random actions from the available set of actions are executed and their corresponding transitions are collected in a buffer.
Samples from this buffer are then used to maximize the lower bound objective $\mathcal L^{\text{lb}}$ and adapt the parameters of $\hat \phi$ and $\varphi$.
The optimized $\hat \phi^*$ is then kept fixed during policy improvement.

Lines $16$--$22$ correspond to the standard policy gradient approach for improving the performance of a policy.
In our case, the policy $\beta$ first outputs a vector $\hat e$ which gets mapped by $\hat \phi^*$ to an action.
The observed transition is then used to compute the policy gradient \citep{sutton2000policy} for updating the parameters of $\beta$ towards $\beta^*$.  
%
%
	%
If a critic is used for computing the policy gradients, then it is also subsequently updated by minimizing the TD error \citep{Sutton1998}.
This iterative process of adaption and policy improvement continues for every change in the action set size.
%
%
%
		\IncMargin{1em}
	\begin{algorithm2e}
		\label{Alg:laica} 
		\caption{Lifelong Adaptation and Improvement for Changing Actions (LAICA)}
	    \textbf{Initialize} $\beta_{0}^*, \hat \phi_{0}^*, \varphi_{0}^*$.
	    \\
        \For {\text{change} $k = 1,2...$} 
        {
            
            $\beta_k \leftarrow \beta_{k-1}^*$  \tikzmark{top}
            \\
            $\varphi_k \leftarrow \varphi_{k-1}^*$ 
            \\
            $\hat \phi_k \leftarrow \hat \phi_{k-1}^*$ 
            \\
            Add parameters in $\hat \phi_k$ for new actions ~~~~~~~~\tikzmark{right} \tikzmark{bottom}
            \\
            
            \vspace{10pt}
    		Buffer $\mathbb{B} = \{\}$  \tikzmark{top2}
    		\\
           \For {$episode = 0,1,2...$}
            {
    			\For {$t = 0,1,2...$}
    			{
			    Execute random $a_t$ and observe $s_{t+1}$ \\
			    Add transition to $\mathbb{B}$ 
			    }
            }
           \For {$iteration = 0,1,2...$}
            {
                Sample batch $b  \sim \mathbb{B}$
                \\
                Update $\hat \phi_k$ and $\varphi_k$ by maximizing $\mathcal L^{\text{lb}}(\hat \phi_k, \varphi_k)$ for $b$ ~~~~~~~~~~\tikzmark{right2}
                
            }    \tikzmark{bottom2}
			 \\
            \vspace{8pt}
    		\tikzmark{top3}
            \For {$episode = 0,1,2...$}{
			\For {$t = 0,1,2...$} {
			    Sample $\hat e_t \sim \beta_k(\cdot|s_t) $ \\
			    Map $\hat e_t$ to an action $a_t$ using $ \hat \phi_k^*(e)$ \\
			    Execute $a_t$ and observe $s_{t+1}, r_{t}$ \\
			    Update $\beta_k$ using any policy gradient algorithm ~~~~~~~~~~~~\tikzmark{right3}\\
			    Update critic by minimizing TD error. \tikzmark{bottom3}
			}
		}
	} 
	\nonl 
	\AddNote{top}{bottom}{right}{Reuse past knowledge.}
	\AddNote{top2}{bottom2}{right2}{Adapt \\ $\hat \phi_k$ to $\hat \phi_k^*$  }
	\AddNote{top3}{bottom3}{right3}{Improve \\ $\beta_k$ to $\beta_k^*$}
	\end{algorithm2e}
	\DecMargin{1em}   
 
\subsection{Empirical Analysis}

\label{sec:empirical}
        In this section, we aim to empirically compare the following methods, 
        \begin{itemize}
            \item Baseline(1): The policy is re-initialised  and the agent learns from scratch after every change.
            \item Baseline(2): New parameters corresponding to new actions are added/stacked to the existing policy (and previously learned parameters are carried forward as-is).
            \item LAICA(1): The proposed approach that leverages the structure in the action space. To act in continuous space of inferred structure, we use DPG \citep{silver2014deterministic} to optimize $\beta$.
            \item LAICA(2): A variant of LAICA which uses an actor-critic \citep{Sutton1998} to optimize $\beta$.
        \end{itemize}
        
    	To demonstrate the effectiveness of our proposed method(s) on lifelong learning problems, we consider a maze environment and two domains corresponding to real-world applications, all with a large set of changing actions. 
		For each of these domains, the total number of actions  were randomly split into five equal sets.
		Initially, the agent only had the actions available in the first set and after every change the next set of actions was made available additionally.
        In the following paragraphs we briefly outline the domains.
        

    	\paragraph{Case Study: Real-World Recommender Systems. } 
    	We consider the following two real-world applications of large-scale recommender systems that require decision making over multiple time steps and where the number of possible decisions varies over the lifetime of the system.
    	%
    	%
    	\begin{itemize}
    	    \item A web-based video-tutorial platform, that has a recommendation engine to suggest a series of tutorial videos. The aim is to meaningfully engage the users in a learning activity.
    	    %
    	    %
    	    In total, $1498$ tutorials were considered for recommendation.
            \item A professional multi-media editing software, where sequences of tools inside the software need to be recommended. The aim is to increase user productivity and assist users in quickly achieving their end goal. In total, $1843$ tools were considered for recommendation.  
    	\end{itemize}
    	
        For both of these applications, an existing log of user's click stream data was used to create an $n$-gram based MDP model for user behavior \citep{shani2005mdp}.
        Sequences of user interaction were aggregated to obtain over $29$ million clicks and $1.75$ billion user clicks for the tutorial recommendation and the tool recommendation task, respectively.    
        The MDP had continuous state-space, where each state consisted of the feature descriptors associated with each item (tutorial or tool) in the current $n$-gram. 
        %
        \subsection{Results}
                    \begin{figure*}[t]
    		\centering
    		\includegraphics[width=0.8\textwidth]{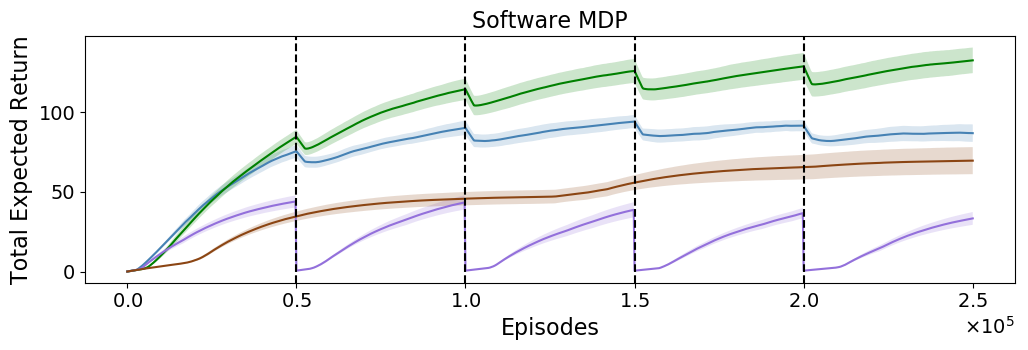}\\
    		\includegraphics[width=0.8\textwidth]{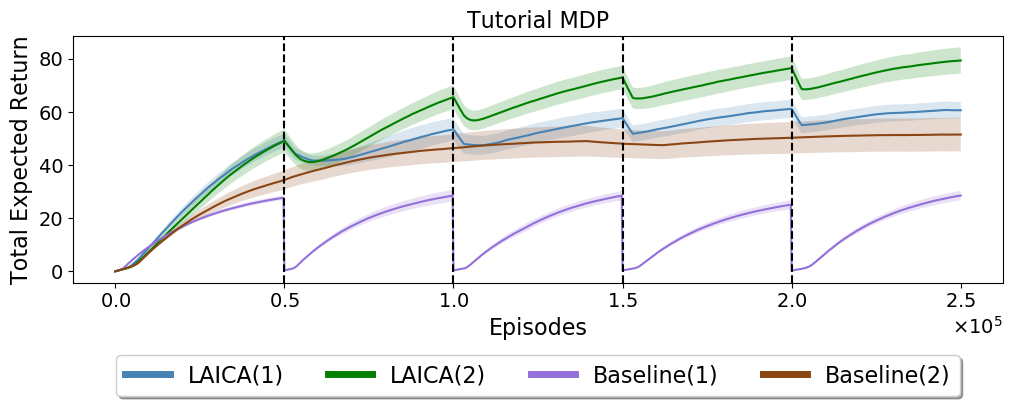}
    		\caption{Lifelong learning experiments with a changing set of actions in the recommender system domains. 
    		The learning curves correspond to the running mean of the best performing setting for each of the algorithms.  
    		The shaded regions correspond to standard error obtained using $10$ trials.
    		Vertical dotted bars indicate when the set of actions was changed.
    		}
    		\label{Fig:CL-experiments2}
		\end{figure*}
    	The plots in Figure \ref{Fig:CL-experiments2} present the evaluations on the domains considered.
    	The advantage of LAICA over Baseline(1) can be attributed to its policy parameterization.
    	The decision making component of the policy, $\beta$, being invariant to the action cardinality can be readily leveraged after every change without having to be re-initialized. 
    	This demonstrates that efficiently re-using past knowledge can improve data efficiency over the approach that learns from scratch every time.

	    Compared to Baseline(2), which also does not start from scratch and reuses existing policy, we notice that the variants of LAICA algorithm still perform favorably.
    	As evident from the plots in Figure \ref{Fig:CL-experiments2}, while Baseline(2) does a good job of preserving the existing policy, it fails to efficiently capture the benefit of new actions.

    	While the policy parameters in both LAICA and Baseline(2) are improved using policy gradients,  the superior performance of LAICA can be attributed to the adaptation procedure incorporated in LAICA which aims at efficiently inferring the underlying structure in the space of actions.
    	Overall LAICA(2) performs almost twice as well as both the baselines on all of the tasks considered.

    	Note that even before the first addition of the new set of actions, the proposed method performs better than the baselines.
    	This can be attributed to the fact that the proposed method efficiently leverages the underlying structure in the action set and thus learns faster.
    	Similar observations have been made previously \citep{dulac2015deep,he2015deep,bajpai2018transfer,naturalGuy}.

\section{Cognitive Bias Aware}
\label{sec:bias-aware}
While SRs are beginning to find their way in academia and industry, other recommendation technologies such as collaborative filtering \cite{Netflix} and contextual bandits \cite{Li10} have been the mainstream methods.  There are hundreds of papers every year in top-tier machine learning conferences advancing the state of the art in collaborative filtering and bandit technologies.   Nonetheless, all these systems do not truly understand people but rather naively optimize expected utility metrics such as click through rate. People,  do not perceive expected values in a rational fashion \cite{Tversky92}.  In fact, people have evolved with various cognitive biases to simplify information processing. Cognitive biases are systematic patterns of deviation from norm or rationality in judgment.   They are rules of thumb that help people make sense of the world and reach decisions with relative speed. Some of these biases include the perception of risk, collective effects and long-term decision making. In this final section we argue that the next generation of  recommendation systems needs to incorporate human cognitive biases.  We would like to build recommendation and personalization systems that are aware of these biases.  We would like to do it in a fashion that is win-win for both the marketer and the consumer.  Such technology is not studied in academia and does not exist in the industry \cite{DBLP:conf/um/TheocharousHMS19}.

It has becoming a well-established result that humans do not reason by maximizing their expected utility, and yet most work in artificial intelligence and machine learning continues to be based on idealized mathematical models of decisions \cite{Machina87,Ellsberg61,VonNeumann44}. There are many studies that show that given two choices that are logically equivalent, people will prefer one to another based on how the information is presented to them, even if the choices made violate the principle of maximizing expected utility \cite{Allais53}. The Allais paradox is a classic example that demonstrates this type of inconsistency in the choices people make when presented with two different gambling strategies. It was found that people often preferred a gambling strategy with a lower expected utility but with more certain positive gains over a strategy where expected utility is higher at the cost of more uncertainty. Furthermore, there are a number of biases that have been explored in the context of marketing and eCommerce that influence the manner in which consumers make purchasing decisions. An example of such a bias is the \emph{decoy effect} which refers to the phenomenon when consumers flip their preference for one item over another when presented with a third item with certain characteristics. Furthermore, consider how fatigue and novelty bias can be incorporated into a movie or book recommendation system. While a recommendation system may have identified a user's preference for a particular genre of movies or books, novelty bias, which is a well studied phenomenon in behavioral psychology, suggests that novelty in options might actually yield uplift in system results if modeled appropriately.

\subsection{Cognitive Biases}
Next we list different types of biases that have a direct impact on decision making and that we envision will be explicitly modelled in future personalization systems.

\subsubsection{Loss, Risk and Ambiguity}
Biases related to loss, risk and ambiguity are some of the most well studied from a modeling perspective and are relevant to any eCommerce recommendation.  These biases could be incorporated by modeling the degree of certainty in experiencing satisfaction from the purchase of a product, or alternatively modeling the risk of regret associated with the purchase. In \cite{Tversky92} specific curves describing the degree of loss aversion are derived for gambling examples.  From a behavioral science perspective these biases can be described as follows: 

\begin{itemize}
    \item \textbf{Loss (Regret) Aversion}: Motivated by the tendency to avoid the possibility of experiencing regret after making a choice with a negative outcome; loss aversion refers to the asymmetry between the affinity and the aversion to the same amount of gain and loss respectively. In other words, this bias refers to the phenomenon whereby a person has a higher preference towards not losing something to winning that same thing. I.e losing \$$100$ for example results in a greater loss of satisfaction than the gain in satisfaction that is caused by winning \$$100$.
    
    \item \textbf{Risk Aversion}: This refers to the tendency to prefer a certain positive outcome over a risky one despite the chance of it being more profitable (in expectation) than the certain one.
    
     \item  \textbf{Ambiguity Aversion}: This phenomenon in decision making refers to a general preference for known risks rather than unknown ones. The difference between risk and ambiguity aversion is subtle. Ambiguity aversion refers to the aversion to not having information about the degree of risk involved. \cite{Baron94,Ellsberg61}\\
\end{itemize}    

\subsubsection{Collective Effects}
Another set of biases we believe are important to model relate to how the value of recommended items is likely to be perceived in the context of other items or recommendations.  These comparative differences are important when recommendations are considered in sequence or when collectively surfaced such as multiple ads on the same web page.  These include:
    
\begin{itemize}    
    \item \textbf{Contrast Effect}: An object is perceived differently when shown in isolation than when shown in contrast to other objects.  For example a \$$5$ object might seem inexpensive next to a \$$11$ object but expensive next to an object priced at \$$1$. \cite{Plous93}
 
    \item \textbf{Decoy Effect}: This effect is common in consumer and marketing scenarios whereby a consumer's preference between two items reverses when they are presented with a third option that is clearly inferior to one of the two items and only partially inferior to the second. The Decoy effect can be viewed as a specific instance or a special case of the contrast effect described above.
    
    \item \textbf{Distinction Bias}: This refers to the situation where two items are viewed as more distinct (from each other) when viewed together than when viewed separately.
    
    \item \textbf{Framing effect}: This refers to the effect on decision making of the manner in which something is presented to a user. I.e. the same information presented in different ways can lead to different decision outcomes. \\
\end{itemize}
    
\subsubsection{Long Term Decision Making}    
Finally, we consider biases that might arise in different parts of the decision making process over time. These biases will all play a role in optimizing future recommendation systems for long term user value.  Some examples of these include:     \\

\begin{itemize}

    \item \textbf{Choice Supportive Bias}: Also referred to as post-purchase rationalization, this bias refers to the tendency to remember past choices as justifiable and in some cases retroactively ascribe a positive sentiment to them. \cite{Mather00}

    \item \textbf{Anchoring (Confirmation) Bias}: paying more attention to information that supports one's opinions while ignoring or underscoring another set of information. This type of bias includes resorting to preconceived opinions when encountering something new. 
    
    \item \textbf{Hyperbolic Discounting Effect}: This bias refers to the tendency to prefer immediate payoffs than those that occur later in time. As an example related to our application of recommendation systems, considering a consumer's preference to receive an object sooner rather than at a later time period (for instance due to shipping time) can have an direct impact on item preference.
    
    \item \textbf{Decision Fatigue}: Although not an explicit cognitive bias, decision fatigue is a phenomenon worth exploring as it refers to the manner in which decision making deviates from the expected norm as a result of fatigue induced by long periods of decision making.
    
    \item \textbf{Selective Perception}: This refers to the tendency for expectation to bias ones perception of certain phenomena. \cite {Chandler11} \\
\end{itemize}

\subsection{The Future of Personalization}

In this section, we have argued that the next-generation of  personalization systems should be designed by explicitly modeling people's cognitive biases.  Such a redesign will require developing algorithms that have a more realistic model of how humans perceive the world and will be able to exploit human deviations from perfect rationality. 

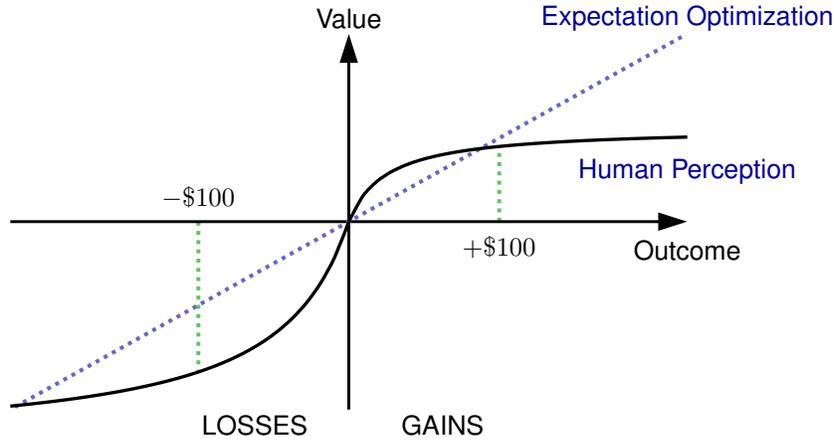
\begin{figure}[h]
\centering
\quad\begin{tikzpicture}[every node/.style={inner sep=0}]

\newcommand\myX{2.0}
\draw [line width=1.5,dotted,green!60!black!60!white] (-\myX,0) -- (-\myX, {3*(-\myX/(1+\myX))});
\draw [line width=1.5,dotted,green!60!black!60!white] (\myX,0) -- (\myX, {1.5*(\myX/(1+\myX))});
\node [anchor=south,above=5] at (-\myX,0) {$-\$100$};
\node [anchor=north,below=5] at (\myX,0) {$+ \$100$};

\draw [line width=1.1,-{Triangle[length=1.1em]}] (-4.5,0) -- (4.5,0);
\draw [line width=1.1,-{Triangle[length=1.1em]}] (0,-2.5) -- (0,2.5);

\draw [line width=1.5,dotted,blue!60!black!60!white,shorten <=2,shorten >=2] (-4.5,-2.5) -- (4.5,2.5);

\draw [line width=1.25,scale=0.5,domain=0:9,smooth,variable=\x] plot ({\x},{2.5*(\x / (1 + abs(\x)))});
\draw [line width=1.25,domain=-4.5:0,smooth,variable=\x] plot ({\x},{3*(\x / (1 + abs(\x)))});

\node [anchor=south,above=2] at (0,2.5) {\textsf{Value}};
\node [anchor=west,below=7] at (4.5,0) {\textsf{Outcome}};
\node [anchor=north east,below=2] at (-1.25,-2.5) {\textsf{LOSSES}};
\node [anchor=north west,below=2] at ( 1.25,-2.5) {\textsf{GAINS}};
\node [anchor=south,blue!60!black,above=1] at (4.5,2.5) {\textsf{Expectation Optimization}};
\node [anchor=south,blue!60!black] at (4.5,0.5) {\textsf{Human Perception}};

\end{tikzpicture}
  \caption{A piece-wise non-linear model of modelling human perception of gains and losses.  Algorithms that optimize expectation view the gain of \$100 and the loss of \$100 as equal, but humans do not.}
  \label{fig:prospect_value}
\end{figure}

Recent academic papers have shown that it is possible to model human biases by incorporating behavioral science frameworks, such as prospect theory~\cite{Kahneman79,Tversky92} into reinforcement learning algorithms \cite{Csaba16}. Traditional work in reinforcement learning is based on maximizing long-term expected utility. Prospect theory will require redesigning reinforcement learning models to reflect human nonlinear processing of probability estimates. These models incorporate the cognitive bias of loss aversion using a theory~\cite{Kahneman79,Tversky92} that models perceived loss as asymmetric to gain, as illustrated in Figure 1.  This captures a human perspective where potential gain of \$100 is less preferable to avoiding a potential loss of \$100.  Algorithms that simply maximize expectation would treat these two outcomes equally.  

We envision that future personalization algorithms will incorporate such models for a wide variety of human cognitive biases, adjusting the steepness of the curves and the value of the inflection points accordingly for each person based on their overall character, their immediate context and the history of their interactions with the system.  Unlike contextual bandits that cannot differentiate between a `visit' and a `user', the next generation of personalization systems will consider a sequence of recent events and interactions with the system when making a recommendation, and thus be able to incorporate surprise and novelty into the recommendation sequence according to the user's modelled profile.  We also argue that the cognitive bias model will be useful in deciding how best to present the recommendation to the user, for example a person with a high familiarity bias might only want to invest in a stock that she knows, and be less likely to  keep a more diverse portfolio.  A recommendation system that had an explicit model of this bias could present the diversified portfolio in a way that makes it appear more familiar, for example mentioning the user's friends who had similar diversified portfolios.  

We expect that  these next-generation personalization algorithms may initially require a high density of data, but that this dependence on data may be ameliorated as we move beyond modelling solely based on click streams, and exploiting other data sources available from sensor rich environments.  In particular we envision curated experiences such as visits to theme parks, cruise ships or hotel chains with loyalty programs.  In such environments rich data is available, activity preferences, shop purchases, facility utilization and user queries could all contribute to help train a sufficiently effective model.  
  
Designing an SR system that understands how people actually reason  has a huge potential to retain users and far more effectively market products than a system that mathematically optimizes some convenient but non-human-like  optimization function.  Understanding how people actually make decisions will help us match them with products that will truly make them happy and keep them engaged in using the system for the long term.

\section{Summary and Conclusions}
In this paper we demonstrated through various real world use-cases the advantages of strategic recommendation systems and their implementation using RL.  To make RL practical, we solved many fundamental challenges, such as evaluating policies offline with high confidence, safe deployment, non-stationarity, building systems from passive data that do not contain past recommendations, resource constraint optimization in multi-user systems, and scaling to large and dynamic actions spaces.  Finally we presented ideas for the what we believe will be the next generation of SRs that would truly understand people by incorporating human cognitive biases.

\bibliographystyle{spmpsci}      
\bibliography{biblio}   

\end{document}